\documentclass[numbers,webpdf,imaiai]{ima-authoring-template-arxiv}%

\theoremstyle{thmstyletwo}%
\newtheorem{theorem}{Theorem}
\newtheorem{proposition}[theorem]{Proposition}%

\newtheorem{example}{Example}%
\newtheorem{remark}{Remark}%

\newtheorem{definition}{Definition}
\newtheorem{corollary}{Corollary}
\numberwithin{equation}{section}

\newcommand{\R}{\mathbb{R}}

\newcommand{\E}{\mathbb{E}}
\newcommand{\N}{\mathcal{N}}

\newtheorem{assumption}{Assumption}

\newcommand{\eps}{\varepsilon}
\newcommand{\Phizero}{\Phi_{0}}
\newcommand{\Phieps}{\Phi_{\eps}}
\newcommand{\inner}[2]{\langle #1,\, #2\rangle}
\newcommand{\Hess}{\nabla_x^2}
\newcommand{\grad}{\nabla_x}

\DeclareMathOperator{\ran}{ran}
\DeclareMathOperator{\Sym}{Sym}
\DeclareMathOperator{\corank}{corank}
\DeclareMathOperator{\codim}{codim}

\DeclareMathOperator{\spann}{span}
\newcommand{\jet}[2]{j^{#1}_{#2}}
\newcommand{\Msf}{M}

\newtheorem{lemma}{Lemma}

\begin{document}


\title[Speciation in Generative Diffusion Models on Compact Riemannian Manifolds]{A Theory of Speciation in Generative Diffusion Models \\ on Compact Riemannian Manifolds}

\author{Alessio Marta*\ORCID{0000-0002-3388-7168}
\address{\orgdiv{Department of Mathematics}, \orgname{University of Milan}, \orgaddress{\street{via Saldini 50}, \postcode{20133 Milan}, \country{Italy}}}}
\author{Paola Causin\ORCID{0000-0002-8285-8101}
\address{\orgdiv{Department of Mathematics}, \orgname{University of Milan}, \orgaddress{\street{via Saldini 50}, \postcode{20133 Milan},  \country{Italy}}}}


\corresp[*]{Corresponding author: \href{email:email-id.com}{alessio.marta@unimi.it}}

\abstract{Speciation in generative diffusion models denotes the emergence of
distinct stable branches during denoising, through which initially
undifferentiated trajectories progressively commit to different data
classes. In this work we develop an intrinsic theory of speciation for
diffusion models supported on compact Riemannian manifolds: the aim is to go beyond existing theoretical descriptions, which usually identify speciation with a symmetric pitchfork bifurcation and assume to work in a large-dimensional space.
We characterize speciation by
bifurcations of the critical points of the evolving probability density. A
spectral heat-kernel representation makes explicit the role of the
manifold geometry, while Poincaré--Hopf and Morse theory impose global
constraints on the number and type of score equilibria and reveal
topologically--imposed geometrical modes.
For mixtures of heat kernels, we prove that generic speciation events
have a one-dimensional critical kernel and admit an $A_2$ fold normal
form; pitchforks and simultaneous multidirectional transitions arise
from nongeneric symmetric configurations. We derive
geometry-dependent estimates of speciation times for bimodal mixtures
and Riemannian regular simplices. We further establish structural
stability of nondegenerate folds under score perturbations and show
that the first-order time shift is determined solely by the component
of the score error along the critical direction.
The theory is illustrated on the sphere using mixtures of
von Mises--Fisher distributions, where pitchfork and saddle--node
bifurcations, topological modes, and hierarchical multiple
speciations are observed. Finally, a chart-based intrinsic
score-learning scheme based on neural networks contrasts the theoretically predicted
transitions on prototypal and more complex datasets.}
\keywords{Generative diffusion models; Fokker-Planck equation; Brownian processes on Riemannian manifolds; Speciation.}

\maketitle
\vspace*{30pt}
\section{Introduction}
\label{sec:intro}
Diffusion Models (DMs) are an advanced class of Generative AI models that learn to synthesize data by progressively denoising a sample drawn from pure noise into a coherent output, by reversing a stochastic process that gradually corrupts data into noise~\cite{Song2021}. In the currently accepted theoretical framework, 
both the forward noising process and the reverse generative dynamics can be described within the realm of stochastic differential equations (SDEs), whose probability densities evolve according to Fokker--Planck equations~\cite{Song2019,Ho2020}. 

\medskip

\noindent
DMs exhibit a remarkably rich behavior, in which generation unfolds through a sequence of dynamical transitions. Namely, during the reverse diffusion process, the trajectories progressively commit with  increasing specificity to semantic classes at characteristic times, commonly referred to as \textit{speciation times}. A series of recent works~\cite{raya2023spontaneous, biroli2024dynamical, ambrogioni2025statistical}, drawing inspiration from statistical mechanics, has interpreted these transitions as manifestations of spontaneous symmetry breaking, analogous to phase transitions in equilibrium systems, and has related them to pitchfork-like bifurcations of the score field governing the reverse dynamics. 
In~\cite{biroli2024dynamical} dynamical regimes of the backward generative diffusion are identified, corresponding to characteristic windows separated by milestone times: 
\begin{quote}
{[...] the first one is basically pure Brownian motion; in the second one the backward trajectory finds one of the main classes of the data (for instance if the data consists of images of horses and cars, a given trajectory will specialize towards one of these two categories). In the third regime, the diffusion ``collapses'' onto one of the examples of the dataset.}    
\end{quote} Following the terminology of the present paper, the first and second windows are separated by a \textit{``speciation time''}, while the second and the third windows by a 
\textit{``collapse time''}. 

\medskip

\noindent Concurrently, a growing body of work has emphasized the fact that the latent representations learned by modern neural networks possess a highly non-Euclidean geometric structure~\cite{arvanitidis2021pulling,park2023understanding,pegios2024counterfactual,Yu2025}. Both theoretical analyses and empirical observations indicate that data distributions concentrate on low-dimensional manifolds, whose intrinsic geometry influences learning, representation, and sampling. This geometric viewpoint has motivated the extension of generative models from Euclidean spaces to Riemannian manifolds, where diffusion processes are naturally governed by the Laplace--Beltrami operator rather than the standard Laplacian~\cite{debortoli2022,huang2022riemannian}. This viewpoint is also consistent with the \textit{manifold hypothesis},
according to which high-dimensional observations concentrate near a
low-dimensional nonlinear manifold. When the generative dynamics is
restricted to this intrinsic support, both noising and denoising must
respect its metric, topology, and volume structure.

\medskip

\noindent \textbf{{Original contributions}}

Speciation in diffusion models has been related in past literature to symmetry-breaking phenomena. This theoretical picture remains incomplete in two important respects. First, more general bifurcation mechanisms appear to govern the qualitative evolution of the score dynamics. Second, existing analyses are almost exclusively developed in Euclidean spaces, overlooking the influence of the intrinsic geometry of the data manifold on the organization of the generative process. The purpose of this work is to establish a more general framework able to address both the above points. Adopting the perspective of stochastic differential equations and their associated Fokker--Planck equations, we investigate diffusion processes on compact Riemannian manifolds and characterize the evolution of the corresponding score field. We derive explicit representations of the probability density and of the score, showing how the geometry of the manifold enters the dynamics through its metric, spectrum, and heat kernel. This formulation allows us to interpret speciation as the emergence of stable critical points of the score field and to study their appearance through the language of dynamical systems and bifurcation theory. 
Our analysis reveals the role played by the geometry of the underlying manifold in shaping the organization of the generative dynamics. 
The spectrum of the Laplace--Beltrami operator determines the temporal hierarchy of the diffusion process, while the topology constrains the admissible configurations of critical points through classical index theorems. Moreover, the nodal structure of the eigenfunctions identifies regions characterized by different convergence rates toward equilibrium, providing a geometric interpretation of the heterogeneous temporal organization of generation.
This theoretical framework is illustrated analytically and numerically on $\mathbb{S}^d$ manifolds, where explicit spectral representations make it possible to characterize the bifurcation scenarios leading to speciation. We show that both pitchfork and saddle-node bifurcations naturally arise depending on the geometry and on the initial data distribution, yielding multiple speciation times and hierarchical commitment of trajectories. Finally, we discuss numerical methods for learning score fields on manifolds and present experiments confirming the theoretical predictions.

\medskip

\noindent The main contributions of this work are the following.

\begin{itemize}

\item \textbf{An intrinsic geometric formulation of speciation.}
We formulate noising and denoising directly on compact Riemannian
manifolds through Brownian motion, the Laplace--Beltrami heat flow, and
the Riemannian score.
The heat-kernel
spectral expansion makes explicit how the metric and the
Laplace--Beltrami spectrum enter the temporal organization of the
generative process.

\item \textbf{Topological constraints and geometrical modes.}
Using Poincaré--Hopf and Morse theory, we show that the number and type
of critical points of the score landscape are constrained by the
topology of the data manifold. This leads to a distinction between
\emph{data modes}, associated with the intended clusters, and
\emph{geometrical modes}, which carry no data class but are forced to
exist by the topology of the manifold.

\item \textbf{Genericity and local classification of speciation.}
For sufficiently rich mixtures of heat kernels, we prove that a
generic speciation event has a one-dimensional critical kernel. Under
real-analyticity assumptions, this property holds for a full-measure
set of mixture parameters. We further derive the generic $A_2$ normal
form of the evolving density. Pitchfork bifurcations and simultaneous
losses of stability in several directions arise instead from
non-generic symmetric configurations.

\item \textbf{Geometry-dependent estimates of speciation times.}
For bimodal heat-kernel mixtures, we reduce the critical-point problem
to the minimizing geodesic joining the two centers and derive
short-time estimates of the bifurcation position and time. For equal
weights, the resulting expression depends explicitly on the geodesic
distance between the modes. We also obtain a corresponding estimate
for symmetric configurations supported on Riemannian regular
simplices.

\item \textbf{Structural stability under score perturbations.}
We prove that a nondegenerate $A_2$ speciation persists under small
smooth perturbations of the score, typical for example of approximations by obtained from neural networks. The first-order shift of the
speciation time depends only on the projection of the score error onto
the critical direction,
Consequently, score errors transverse to the critical direction do not
shift the speciation time at first order.

\item \textbf{Numerical approach based on coordinate--representation.}
We develop a chart-based approach for simulating Riemannian
diffusion and learning intrinsic score fields by implicit score
matching.
\end{itemize}

This work is companion to a forthcoming paper
of the same authors, where 
mechanisms of speciation are investigated in the Eulerian framework. While the topics of the two works are naturally related, the emphasis here is on investigating how the intrinsic geometry of the data
manifold constrains which score equilibria can exist, which
bifurcations occur generically, when they occur, and how they can be
computed and learned in intrinsic coordinates. Thus the two works
address complementary levels of the same phenomenon: local
singularity theory in the Euclidean setting, and global spectral,
topological, and geometric organization on manifolds.

\medskip

\noindent The paper is organized as follows. In Section~\ref{sec:Rie-noising} we recall some key notions on noising and denoising processes on Euclidean and Riemannian manifolds and we fix the notation employed in the remainder of the work. In Section~\ref{sec:general_theory_speciation} we discuss how the topology of the manifold impacts the generative dynamics with the creation of data and geometrical modes and we present a general theory of speciation. In particular, we prove that for mixtures of heat kernels with a sufficient number of components, a generic speciation event has a one-dimensional critical kernel or, in other words, that the generative dynamics speciates one direction per bifurcation. We also give the normal form of the generic bifurcation event and we derive geometry-dependent estimates of speciation times for bimodal heat-kernel mixtures and for symmetric mixtures whose centers are placed on the vertices of Riemannian regular simplices. In Section~\ref{sec:stability} we discuss how the theoretical results of the previous section behave under perturbations. We prove the stability of the generic $A_2$ speciation, giving a first order estimate for the displacement of the speciation time, which depends only on the projection of the score error onto the critical direction. In Section~\ref{sec:illustrative_s2} we illustrate the above theoretical findings on the sphere $\mathbb{S}^2$ for assigned mixtures. In Section~\ref{sec:numerical_experiments} we develop a numerical chart-based approach to simulate Riemannian diffusion and an implicit score matching algorithm in local coordinates. We use these algorithms to perform some numerical experiments in which we validate our theoretical findings on given mixtures and real-world datasets.
\section{Noising and denoising processes}
\label{sec:Rie-noising}

\subsection{Euclidean noising and denoising processes}
We let $\{a_i\}_{i=1}^K \in \R^n$ represent $K$ independent data points sampled from the true underlying data distribution $p_0(a)$ that we aim to model; 
we let $W(t), {t\ge 0}$ denote a standard Brownian motion on $\mathbb{R}^n$
defined on a filtered probability space
$(\Omega,\mathcal{F},(\mathcal{F}_t)_{t\ge 0},\mathbb{P})$ satisfying the
usual conditions.
We consider a Ornstein-Uhlenbeck
forward diffusion process $(X_t)_{t\in[0,T]}$ defined as the solution
of the following It\^o stochastic differential equation (SDE)
\begin{equation}
\label{eq:app-forward-sde-euclidean}
\mathrm{d}X(t) \;= -X(t) dt + \mathrm{d}W(t),
\qquad X_0\sim p_0.
\end{equation}
Conditionally on 
$X_0=a$, at any given time $t>0$, the state $X(t)$ is distributed according to a Gaussian with mean $a{\rm e}^{-t}$ and variance $\Delta t = 1-{\rm e}^{-2t}$.

\null

\noindent The time-marginal density $p_t=p_t(x), x\in\mathbb{R}^n,$ of $X_t$ with respect to the Lebesgue
measure on $\mathbb{R}^n$ obeys the Fokker--Planck (or forward Kolmogorov)
equation
\begin{equation}
\label{eq:app-fp-euclidean}	
\partial_t p_t \;=\; \nabla \cdot (xp_t(x))+\frac{1}{2}\,\Delta p_t, 
\qquad p_t|_{t=0}=p_0.
\end{equation}
In this Variance Preserving (VP) process, $p_t$ has stationary law $\N(0,I)$ as $t\to+\infty$. 
The (Stein) score of the forward process at time $t$ is defined as
\[
S_t(x) \;:=\; \nabla_x \log p_t(x), \qquad x\in\mathbb{R}^n,
\]
whenever $p_t$ is positive and differentiable. Anderson's time-reversal
identity yields a reverse-time process $(Y_\tau)_{\tau\in[0,T]}$ whose
marginals coincide with those of $X(t),{t\in[0,T]}$ under
$\tau = T - t$, and which solves the reverse-time SDE
\begin{equation}
\label{eq:app-reverse-sde-euclidean}
\mathrm{d}Y(\tau)
\;=\;
 S_{T-\tau}(Y_\tau)\,\mathrm{d}\tau
\,+\, \mathrm{d}\bar W(\tau),
\, Y_0 \sim p_T,
\end{equation}
where $\bar W(\tau), {\tau\ge 0}$ is a Brownian motion adapted to the reverse
filtration. Equation \eqref{eq:app-reverse-sde-euclidean} provides the mathematical 
foundation of score-based generative modeling. 
In practice, the score field is unavailable and it is approximated by a neural network trained through denoising score matching to give $S_\theta$~\cite{Ho2020}.
For each fixed diffusion time $t$, we measure the score approximation error through the Fisher divergence
weighted by the data distribution
\begin{equation}
  \mathcal{D}_t(S_\theta\Vert S_t;p_t) \;=\; \E_{x\sim p}\bigl\|S_\theta(x)-S_t(x)\bigr\|^2 = 
 \int ||S_\theta(x)-S_t(x)||^2p_t(x)dx,
  \label{eq:fisher}
\end{equation}
equivalently $\|S_\theta-S_t\|_{L^2(p_t)}^2$.

\subsection{Riemannian noising and denoising processes}
\paragraph{Preliminaries}
We denote by $(M,g)$ a smooth, connected, compact $n$-di\-men\-sio\-nal Riemannian manifold without boundary, equipped with metric~$g$.
We let $x=(x^1,\dots,x^n)$ be the local coordinates, $g_{ij}$ the metric tensor components, with inverse $g^{ij}$, and $\mathrm{d}\mathrm{vol}_g
=
\sqrt{|g|}\,
\mathrm{d}x^1\cdots \mathrm{d}x^n$, $|g|:=\det(g_{ij})$,
the Riemannian volume element. In these coordinates, given a smooth function $f$, the components of the Riemannian gradient $\nabla_g f$ are given by $g^{ij}\partial_j f$. To characterize the noising process corresponding to
 \eqref{eq:app-forward-sde-euclidean} and~\eqref{eq:app-fp-euclidean} on $(M,g)$, we will make use of the Laplace--Beltrami operator $\Delta_g$, which for a smooth function $f:M\to\mathbb{R}$ is given by
\begin{equation}
\label{eq:app-laplace-beltrami}
\Delta_g f
\;=\; \operatorname{div}_g (\nabla_g f) = 
g^{ij}\,\partial_i\partial_j f
\,-\,
g^{ij}\,\Gamma^k_{ij}\,\partial_k f ,
\end{equation}
where 
$\operatorname{div}_g(X)=\partial_i X^i+\Gamma^i_{ik}X^k$ and where 
$\partial_j=\partial/\partial x^j$,   $\Gamma^k_{ij}$ being the Christoffel symbols of the Levi--Civita connection, and Einstein's summation convention is understood. The contracted Christoffel symbols
$\Gamma^k:=g^{ij}\Gamma^k_{ij}$
appearing in \eqref{eq:app-laplace-beltrami} 
are quantities encoding how the coordinate basis vectors change from point to point on a manifold.

\medskip

\noindent From now on, for readability, we omit the subscript $g$ to the operators, however keeping in mind  
that these are to be intended in the sense specified above. 

\paragraph{Canonical noising process on compact Riemannian manifolds} A Brownian motion on $(M,g)$ admits the Stratonovich representation
\begin{equation}
\label{eq:app-bm-stratonovich}
\mathrm{d}X_t
=
\sum_{\alpha=1}^{n}
U_\alpha(X_t)\circ \mathrm{d}W_t^\alpha ,
\end{equation}
where $(W_t)_{t\ge0}$ is a standard Brownian motion in $\mathbb{R}^n$
and $\{U_1,\ldots,U_n\}$ is a local orthonormal frame field. The
resulting diffusion is independent of the choice of orthonormal frame. The time--marginal density $p_t=p_t(x)$ of $X_t$ with respect
to the Riemannian volume measure satisfies the \textit{forward Fokker--Planck equation}
\begin{equation}
\label{eq:app-fp-forward}
\partial_t p_t
=
\frac12\,\Delta p_t.
\end{equation}
Analogously to the Euclidean case, we introduce the Riemannian score, 
defined as
\begin{equation}
S_t(x)
:=
\nabla \log p_t =
\frac{\nabla p_t(x)}{p_t(x)}
\label{eq:scoredef}
\end{equation}
The score is a fundamental ingredient in the 
reverse process, which can be
written in Stratonovich form as
\begin{equation}
\label{eq:app-reverse-sde-stratonovich}
\mathrm{d}Y_\tau
=
S_{T-\tau}(Y_\tau)\,\mathrm{d}\tau
+
\sum_{\alpha=1}^{n}
U_\alpha(Y_\tau)\circ
\mathrm{d}\bar W_\tau^\alpha, 
\quad Y_0\sim p_T,
\end{equation}
for $\tau=T-t\in[0,T]$, where $(\bar W_\tau)_{\tau\ge0}$ is another standard
Brownian motion.

\null

\noindent In practice, we will not use the Stratonovich form~\eqref{eq:app-bm-stratonovich}, but we will rather make use of the It\^o SDE form 
of~\eqref{eq:app-bm-stratonovich},
which reads, in local coordinates, 
for $k=1,\ldots,n$
\begin{equation}
\label{eq:app-bm-ito}
dX^k_t = \frac12 b^k(X_t)\, dt + \sum_{j=0}^n\sigma_{j}^k(X_t) dW^j_t
\end{equation}
where $b^k(x) = g^{ij}(x)\Gamma^k_{ij}(x)$  and  
the non--constant diffusion matrix $\sigma_j^i =(\sqrt{g^{-1}(X_t)})_j^i$ is obtained from $U_\alpha=U_\alpha^k\partial_k$ and
$g^{kl}=
U_\alpha^k U_\alpha^l$ ~\cite{HsuStochastic}.
Notice that the form of the drift and diffusion depend on the specific choice of the local coordinates. 

\begin{remark}
The term $b^k(x)$ (``It\^o drift'') is not to be considered as a drift on its own, but rather as part of the diffusive term associated with the Laplace-Beltrami operator as consequence of the fact that the It\^o integral does not satisfy the standard chain rule. Notice as well that $b^k(x)$ does not transform as a vector field under change of coordinates since the
Christoffel symbols are not tensors. 
\end{remark}
\noindent  The corresponding It\^o SDE form of the reverse process reads,
for $k=1,\ldots,n$
\begin{equation}
\label{eq:app-reverse-sde-ito}
\mathrm{d}Y_\tau^k
=
\left[
\frac12\,b^k(Y_\tau)
+
\sum_{l=0}^n g^{kl}(Y_\tau)\,
\partial_l \log p_{T-\tau}(Y_\tau)
\right]
\mathrm{d}\tau
+
\sum_{j=0}^n\sigma_{j}^k(Y_\tau)\,
\mathrm{d}\bar W_\tau^j
\end{equation}

Furthermore, exactly as in the Euclidean scenario \cite{Song2021,deveney2025closing}, we can derive a probability flow ODE such that the forward time evolution of a trajectory initialized from $p_0$ have marginal distribution $p_t$ at time $t$. To this end, let us rewrite the diffusion term in equation~\eqref{eq:app-fp-forward} as the divergence of a probability current. Writing $\nabla p_t = p_t\,\nabla\log p_t$ and making use of the definition of the Laplace-Beltrami operator yields
\begin{equation*}
\partial_t p_t
= \tfrac12 \mathrm{div}\!\big(p_t\nabla\log p_t\big).
\end{equation*}
This is a continuity equation of the form $\partial_t p_t + \mathrm{div}_g(v_t\,p_t) = 0$, with velocity field 
\begin{equation}
v_t \;=\;-\; \tfrac{1}{2}\,\nabla\log p_t =  -\tfrac{1}{2} S_t
\end{equation}
where $S_t$ is the score. If $S_t$ is Lipschitz, the method of characteristics yields that a solution of this equation with initial datum $p_0$ is the pushforward $(\Phi_t)_* p_0$, with $\Phi_t$ the flow defined by the probability flow ODE
\begin{equation}
\label{eq:pflow_equation}
\frac{d}{dt}\Phi_t(x) =  -\tfrac{1}{2} S_t \big(\Phi_t(x)\big),\qquad \Phi_0=\mathrm{id}.
\end{equation}
The main practical advantage of the probability flow ODE compared to the forward SDE is that it can be integrated both forward and backward without the need of a reverse equation. In particular, to generate a sample, we can initialize $
x(T)\sim p_T$ and then integrate \eqref{eq:pflow_equation} backward in time from $T$ to $0$. From a theoretical point of view, the probability flow description allows to study the generative process as a dynamical system.

\medskip

\noindent In the forthcoming numerical experiments - if not differently specified - we will always consider the reverse process, however using for easiness the forward time variable $t$ (notice that $t \to 0$ points to initial data and increasing $t$ points to pure noise). 
 
\paragraph{Heat-kernel expansion and spectral representation}
We compute, for selected cases, the explicit solution of the forward Fokker-Planck equation~\eqref{eq:app-fp-forward},  
and derive from it the expression of
the corresponding score.

\null

\noindent 
We let $\{\phi_k\}_{k\ge0}$ be an
$L^2(M,\mathrm{vol}_g)$-orthonormal basis of eigenfunctions satisfying
\[
-\Delta_g\phi_k=\lambda_k\phi_k,
\quad
\phi_0=\frac{1}{\sqrt{\mathrm{vol}_g(M)}}.
\]
Under the present assumptions on $(M,g)$, the spectrum of $-\Delta_g$ is purely
discrete and such that:
\[
0=\lambda_0<\lambda_1\le \lambda_2\le\cdots\to\infty,
\]
with $\lambda_0$ simple. 
Then the solution of problem~\eqref{eq:app-fp-forward} with $p_t|_{t=0}=p_0$,
admits the spectral expansion
(see, e.g., \cite{zbMATH05681750})
\begin{equation}
\label{eq:app-heat-kernel-expansion}
p_t(x)
=
\sum_{k\ge0}
e^{-\lambda_k t/2}\,
c_k\,\phi_k(x),
\qquad
c_k=
\bigl\langle p_0,\phi_k\bigr\rangle_{L^2(M,\mathrm{vol}_g)} 
\end{equation}
As $t$ increases, all nonconstant modes 
in~\eqref{eq:app-heat-kernel-expansion} decay exponentially, the slowest
decay rate being governed by the first spectral gap $\lambda_1/2$ and
the higher frequencies decaying first
(a review on lower bounds on spectral gap can be found in~\cite{he2013lower}). 
Eventually, due to compactness,
the density converges to the steady value $c_0\phi_0=\frac{1}{\mathrm{vol}_g(M)}$ and
the norm of the score tends to zero.


\noindent 
The score vector associated to the probability distribution $p_t$ as per \eqref{eq:app-heat-kernel-expansion} is
\begin{equation}
\label{eq:score-general}
S_t(x)
=
\frac{\sum_{k\ge 1}
e^{-\lambda_k t/2}\,
c_k\,\nabla \phi_k(x)}{\sum_{k\ge 0}
e^{-\lambda_k t/2}\,
c_k\,\phi_k(x)}
\end{equation}
In the following, 
to carry out the computations, we will focus on initial distributions which are \textit{multimodal mixtures} of $K$ distributions:
$$p_0(x)= \sum_{i=1}^K w_i  p_0^{(i)}(x),\qquad \sum_{i=1}^K w_i=1.$$
By linearity,
the solution of the forward SDE problem evolves as 
$p_t(x)= \sum_{i=1}^K w_i p_t^{(i)}(x)$, with exact
score 
\begin{align}
S_t(x)
=
\frac{\sum_{i=1}^K w_i \nabla p_t^{(i)}(x)}
{\sum_{i=0}^K w_i p_t^{(i)}(x)}
= 
\frac{
  \sum_{k=1}^\infty e^{-\lambda_k t/2} \left(\sum_{i=1}^K w_i c_k^{(i)} \right) \nabla \phi_k(x)}{\sum_{k=0}^\infty e^{-\lambda_k t/2} \left(\sum_{i=1}^K w_i c_k^{(i)} \right)  \phi_k(x)}.
\end{align}
\section{General theory of speciation}
\label{sec:general_theory_speciation}
\subsection{An illustrative example: from pitchfork to fold--type speciation}

Before entering in the core of the discussion, we devote this section to a simple example which advances, within a 1D model, the idea that speciation is a local bifurcation phenomenon governed by general normal-form theory. In this framework, the pitchfork is only one possible realization, corresponding to the ideal case of exact symmetry, while generic perturbations naturally lead to its universal unfolding. We consider here the problem, for $a\in \mathbb{R}, t^* \in \mathbb{R}^+$
\begin{equation}\label{eq:model}
  \dot x \;=\; -x\big((x-a)^2-(t-t^*)\big)
         \;=\; \big((t-t^*)-a^2\big)\,x \;+\; 2a\,x^2 \;-\; x^3 ,
\end{equation}
whose equilibria are the steady point $x=0$ and the two mobile
nullclines $x=a\pm\sqrt{t-t^*}$, existing for $t\ge t^*$. 
The parameter $a$ measures the departure from perfect symmetry. When $a=0$ the system undergoes indeed a (supercritical) pitchfork, for $a \ne 0$
the pitchfork unfolds into an imperfect one, where the secondary branch is generated through a saddle-node bifurcation. In catastrophe theory these bifurcations are also known as $A_3$ and $A_2$ bifurcations, respectively \cite{ArnoldGuseinZadeVarchenko}. The branch $x=0$ persists even after the saddle--node appearance and exchanges
stability at the critical time $t_c=t^*+a^2$. 
Figure~\ref{fig:bifurcations} shows the pitchfork ($a=0$, left)
and the saddle-node ($a=1$, right) scenarios, both with $t^*=1$, together with
some representative trajectories obtained for different initial conditions. 
 To expose the universal unfolding parameters, we proceed as customarily by 
 placing the bifurcation at the natural local origin by the translation $x=u+\tfrac{2a}{3}$, which  
removes the quadratic term of \eqref{eq:model} and recasts the model in the
\textit{cusp normal form}
\begin{equation}\label{eq:cusp}
  \dot u \;=\; h + \mu\,u - u^3, \quad
  \mu = (t-t^*)+\frac{a^2}{3}, \quad
  h   = \frac{2a}{3}\Big((t-t^*)-\frac{a^2}{9}\Big).
\end{equation}
The offset $a$ induces, to leading order, a non-zero symmetry-breaking field $h$ which is $O(a)$ in a fixed neighborhood of $t^*$. The term $h$ is the responsible
for the imperfect pitchfork in form~\eqref{eq:cusp}.
Notice that this illustrative example depicts a specific case of
unfolding: the explicit factor $x$ keeps $x=0$ an exact equilibrium for
every $a$ and produces
the transcritical crossing at $t_c$. A generic field $h$ removes this
residual symmetry, 
leaving the canonical picture of a single smooth primary branch plus one
fold.
\begin{figure}[H]
\centering
\includegraphics[width=\textwidth]{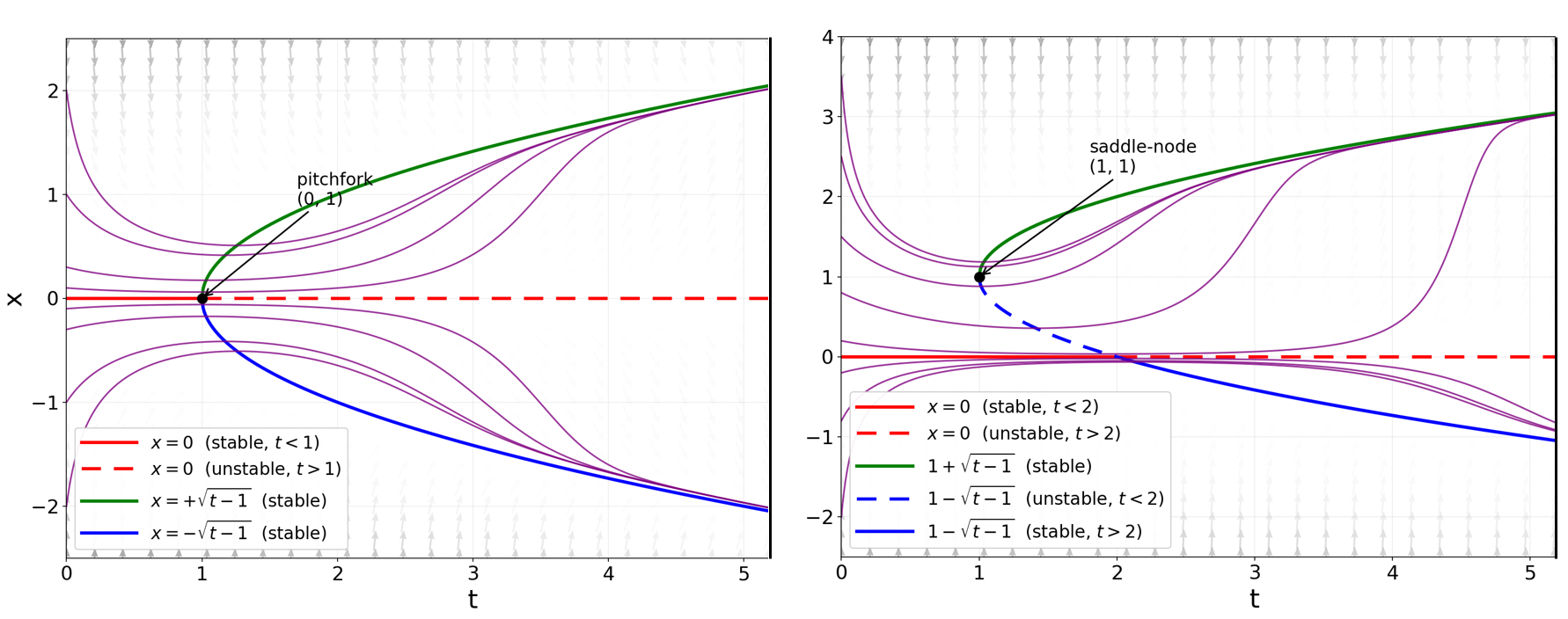}
\caption{Bifurcation scenarios for the 1D problem~\eqref{eq:model} with $t^*=1$.
Left: pitchfork bifurcation ($a=0$). Two stable nullclines branch from the original stable equilibrium point $x=0$, which becomes unstable after the bifurcation time ($t_c=t^*=1$).
Right: saddle-node ``imperfect'' bifurcation ($a=1$). A pair of stable and unstable nullclines emerge at $x=1$, which is not the original equilibrium point of the system; here $t_c=2> t^*$.  
Solid lines denote
stable equilibria, dashed lines unstable ones, and violet curves
    representative trajectories stemming from different initial conditions. Notice that
    increasing time $t$ here corresponds to noise-to-data reverse flow in the following examples.
}\label{fig:bifurcations}
\end{figure}


\subsection{Critical points degeneracy and branch formation}
In~\cite{e28020195,ambrogioni2026PhaseTransitions} and references therein,
the mathematical descriptors of speciation in the generative process are  
pitchfork bifurcations typical of symmetry-breaking scenarios. 
Here, we depart from this concept and
we consider more general, non--symmetric scenarios, typical of real datasets and approximated scores. 

\medskip

\noindent We denote the set of real zeros of the score at time $t$ as $\Xi=\Xi(t)$, which is the union 
of connected loci and whose points can be attractors, repellers or saddle points for the trajectories of the generative process. 
The set $\Xi$ is identified by the critical points of the distribution and thus by $\nabla p_t = 0$, or equivalently  $S_t(x)=0$. In the generative process, a \emph{speciation time} corresponds to time $t^*$ at which new stable branches of zeroes appear in $\Xi$, with a bifurcation. If a branch remains as such till the end of the process, with a final commitment to a certain center, then that time is also a \emph{collapse time}, in accordance with \cite{ventura2025manifolds}. Concretely, these milestones  correspond to points $x^* \in \Xi_{t^*}$ at which the kernel of the Riemannian Hessian of $\log(p_{t^*})$ is non-trivial and such that the non-vanishing eigenvalues are negative. In coordinates, this is tantamount to the vanishing of the determinant of
\begin{equation}
(\operatorname{Hess}\log p_t)_{ij}
=
\frac{\partial_i\partial_j p_t-\Gamma^k_{ij}\partial_k p_t}{p_t}
-
\frac{\partial_i p_t\,\partial_j p_t}{p_t^2},
\end{equation}
which, since this condition must hold together with $S_{t^*}(x^*) = 0$, simplifies to
\begin{equation}
\label{eq:coordinate_hessian}
(\operatorname{Hess}\log p_{t^*})_{ij}|_{x=x^*}
=
\frac{\partial_i\partial_j p_{t^*}}{p_{t^*}}|_{x=x^*}
\end{equation}
from which we conclude that the nature of the critical point $x^*$ is the same for $p_t$ and $\log p_t$.
Notice that in the case of a multimodal distribution, the 
points in the locus $\Xi(t)$ satisfy the vector relation
\begin{equation}
\nabla p_t = \sum_{k=1}^\infty e^{-\lambda_k t/2} \left(\sum_{i=1}^K \eta_i c_k^{(i)} \right) \nabla \phi_k(x) = 0
\label{eq:speciation}
\end{equation}
and the Hessian degeneracy condition~\eqref{eq:coordinate_hessian} in local coordinates reads
\begin{equation}
\label{eq:hessian_degeneracy}
\det\left( \left[ \sum_{k=1}^\infty e^{-\lambda_k t/2} \left(\sum_{i=1}^K \eta_i c_k^{(i)} \right) \partial_{x_\alpha}\partial_{x_\beta}\phi_k(x) \right]_{\alpha\beta} \right) = 0.
\end{equation}

\medskip

\paragraph{Independence from the coordinate chart} In principle, different charts could be used or learned on the manifold: the following proposition
guarantees that condition~\eqref{eq:coordinate_hessian} is coordinate-independent. 
\begin{proposition}
The points for which $\ \det(\operatorname{Hess} \log p_{t^*}(x^*))=0$, or equivalently  $\det(\operatorname{Hess} p_{t^*}(x^*))=0$, do not depend on the choice of the chart. 
\end{proposition}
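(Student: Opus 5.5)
The approach is to show that, at a critical point $x^*$ of $p_{t^*}$, the coordinate Hessian transforms as a bilinear form under a change of chart, so its determinant changes only by a nonzero factor. Let $x=(x^1,\dots,x^n)$ and $y=(y^1,\dots,y^n)$ be two charts around $x^*$ with Jacobian $J^i_a=\partial x^i/\partial y^a$, invertible at $x^*$. By the chain rule,
\[
\frac{\partial^2 p}{\partial y^a\partial y^b}
= J^i_a J^j_b\,\frac{\partial^2 p}{\partial x^i\partial x^j}
+ \frac{\partial^2 x^k}{\partial y^a\partial y^b}\,\frac{\partial p}{\partial x^k}.
\]
At $x^*\in\Xi(t^*)$ the gradient vanishes, and since the chart is a diffeomorphism this condition is chart-independent. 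The second term therefore drops, leaving $H_y=J^{\top}H_xJ$. Hence $\det H_y=(\det J)^2\det H_x$, and vanishing of the determinant does not depend on the chart. As a by-product, rank and signature are also preserved (Sylvester's law of inertia), so the type of the critical point is intrinsic as well.

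The covariant Hessian $\partial_i\partial_j p-\Gamma^k_{ij}\partial_k p$ is a genuine $(0,2)$-tensor, so it obeys the same law $J^{\top}(\cdot)J$ at every point, not only at critical ones. At $x^*$ it coincides with the coordinate Hessian, since $\partial_k p=0$ there; this is exactly the simplification used to obtain \eqref{eq:coordinate_hessian}.

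For $\log p_{t^*}$, the equivalence with $\det(\operatorname{Hess}p_{t^*})=0$ follows from \eqref{eq:coordinate_hessian}: at $x^*$ we have $\operatorname{Hess}\log p=\operatorname{Hess}p/p$ with $p>0$. Multiplying by $p^{-n}>0$ does not affect whether the determinant vanishes, so the two zero sets coincide.

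One more point needs care: the determinant of a $(0,2)$-tensor is not itself a scalar. The intrinsic invariant is $\det(g^{-1}\operatorname{Hess})$, i.e. the determinant of the associated endomorphism. The argument must therefore claim only that the \emph{vanishing} of the determinant is chart-independent, because the factor $(\det J)^2$ is nonzero. No step is a real obstacle; the only subtlety is noting that the inhomogeneous term $\partial^2x^k\,\partial_kp$ disappears at critical points (or is absorbed by the Christoffel correction in general).
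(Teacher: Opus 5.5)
Your proof is correct and follows the same route as the paper: at a critical point the coordinate Hessian transforms by congruence, $H_y = J^{\top}H_xJ$, so its determinant changes only by the nonzero factor $(\det J)^2$. You are more explicit than the paper about why the inhomogeneous chain-rule term $\partial^2 x^k\,\partial_k p$ drops out: it is killed by $\nabla p_{t^*}(x^*)=0$, a step the paper uses only implicitly.
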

\begin{proof} \,
Let $y = f(x)$ be a smooth change of coordinates near $x^*$. Then, in the $y$ coordinates, the Hessian reads:
\begin{equation*}
\dfrac{\partial y^\alpha}{\partial x^i} \dfrac{\partial y^\beta}{\partial x^j} \frac{\partial_{y^\alpha}\partial_{y^\beta} (p_{t^*} \circ f^{-1} ) }{p_{t^*} \circ f^{-1}} \Big|_{y=f(x^*)} 
\end{equation*}
and therefore
\begin{equation*}
\begin{split}
0 = \det\left( \frac{\partial_{x^\alpha}\partial_{x^\beta} p_{t^*} }{p_{t^*}}\Big|_{x=x*}  \right) = \det \left( \dfrac{\partial y^\alpha}{\partial x^i} \dfrac{\partial y^\beta}{\partial x^j} \frac{\partial_{y^\alpha}\partial_{y^\beta} (p_{t^*} \circ f^{-1} ) }{p_{t^*} \circ f^{-1}} \Big|_{y=f(x^*)} \right) \\
= \det (J_f)^2 \det \left( \frac{\partial_{y^\alpha}\partial_{y^\beta} (p_{t^*} \circ f^{-1} ) }{p_{t^*} \circ f^{-1}} \right)\Big|_{y=f(x^*)}.
\end{split}
\end{equation*}
where $J_f$ is the Jacobian of the function $f$. Since $f$ is a diffeomorphism, $\det (J_f)^2 \neq 0$ and the claim follows.
\end{proof}
In addition, the fact that at a critical point $\operatorname{Hess}_{y}(p_{t^*} \circ f^{-1})|_{y=y^*} = (J_f)^T \operatorname{Hess}_{x}(p_{t^*}) J_f |_{x=(x^*)}$ implies the following result.
\begin{corollary}
The signature of the eigenvalues of $\operatorname{Hess} p_{t^*}(x^*)$ is preserved under change of charts.
\end{corollary}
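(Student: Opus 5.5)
The plan is to reduce the corollary to Sylvester's law of inertia, using the transformation rule for the Hessian at a critical point that was recorded just before the statement. At $x^*$ we have $\nabla p_{t^*}(x^*)=0$, since $x^*\in\Xi(t^*)$. Under this condition the Hessian is a well-defined symmetric bilinear form on $T_{x^*}M$, independent of coordinates: the terms $\Gamma^k_{ij}\partial_k p_{t^*}$ in the Riemannian Hessian vanish, and so do the second-derivative terms $\partial^2 y/\partial x\partial x$ that appear when the coordinate Hessian is transformed under a change of chart. Concretely, for a chart change $y=f(x)$ the chain rule gives
\begin{equation*}
\partial_{x^i}\partial_{x^j} p_{t^*}\big|_{x^*} = \frac{\partial y^\alpha}{\partial x^i}\frac{\partial y^\beta}{\partial x^j}\,\partial_{y^\alpha}\partial_{y^\beta}(p_{t^*}\circ f^{-1})\big|_{f(x^*)} + \frac{\partial^2 y^\alpha}{\partial x^i\partial x^j}\,\partial_{y^\alpha}(p_{t^*}\circ f^{-1})\big|_{f(x^*)}.
\end{equation*}
The last term is zero at the critical point. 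This yields a congruence $H_x = J_f^{T} H_y J_f$ with $J_f$ invertible, which is the same relation as the one stated before the corollary.

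The second step is to invoke Sylvester's law of inertia. Congruence by an invertible matrix preserves the numbers $(n_+,n_-,n_0)$ of positive, negative and zero eigenvalues of a real symmetric matrix. For completeness, I would argue this as follows. $n_0=\dim\ker$ is preserved because $\ker H_x = J_f^{-1}\ker H_y$. $n_+$ is the maximal dimension of a subspace on which the quadratic form is positive definite, and $J_f$ maps such subspaces bijectively to such subspaces. Then $n_-=n-n_+-n_0$. Hence the signature of the coordinate Hessian of $p_{t^*}$ at $x^*$ is the same in every chart.

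Finally, I would transfer the statement to $\log p_{t^*}$ and to the Riemannian Hessian. By \eqref{eq:coordinate_hessian}, at a critical point these equal the coordinate Hessian divided by $p_{t^*}(x^*)>0$, and positive scaling does not change the signature. One subtlety is that the eigenvalues of the Riemannian Hessian, viewed as an endomorphism $g^{ik}H_{kj}$, differ from those of the matrix $H_{ij}$. The two nevertheless have the same signature, because $g$ is positive definite: $g^{-1/2}Hg^{-1/2}$ is congruent to $H$ and similar to $g^{-1}H$.

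I expect the only delicate point to be the first step. One has to make explicit that the extra first-order term drops out precisely because $x^*$ is a critical point; away from $\Xi(t^*)$ the coordinate Hessian is not tensorial and the statement would fail. Everything after that is standard linear algebra.
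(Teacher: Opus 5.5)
Your proof is correct and takes the paper's route: the paper justifies the corollary only by the congruence of the coordinate Hessians at a critical point and leaves Sylvester's law of inertia implicit, and you supply that argument, together with the extra checks that the $\partial^2 y/\partial x\partial x$ term drops out and that the conclusion passes to $\log p_{t^*}$ and to the endomorphism $g^{-1}H$. Your relation $H_x = J_f^{T} H_y J_f$ also has the Jacobian on the correct side, whereas the relation displayed before the corollary swaps the roles of $x$ and $y$; this does not matter here, since either form is a congruence by an invertible matrix.
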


\subsection{Topological constraints}\label{sec:ghosts}

The number and type of non--degenerate critical points of $p_t$ are not uniquely determined by the above criticality conditions but are also  constrained by the
topology of the space itself. To adress this point, we recall the following classical results, adapted to the present case~\cite{zbMATH03555096,zbMATH06256815}:
\begin{proposition}[Poincar\'e--Hopf and weak Morse inequality]\label{prop:morse}
For a Morse density $p_t$ with isolated spatial critical points on a closed $d$--manifold $M$,
\begin{equation}\label{eq:morse}
  \sum_\zeta (-1)^{\zeta} C_\zeta = \chi(M),
  \qquad
  \#\{\text{critical points}\} \; = \sum_\zeta C_\zeta\ge\; \sum_{k=0}^{d} b_k(M),
\end{equation}
where $\{\zeta\}$ are the Morse indices of $\log p_t$ (the number of negative eigenvalues of its Hessian), $C_\zeta$ is the number of critical points of index $\zeta$,
$\chi$ is the Euler characteristic and $b_k$ the Betti numbers. 
\end{proposition}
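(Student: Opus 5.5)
The plan is to derive both relations in Proposition~\ref{prop:morse} from classical Morse theory applied to the single smooth function $f:=\log p_t$. The preliminary step is to check that $f$ is well defined and smooth on all of $M$. Since $M$ is compact and $t>0$, the heat flow \eqref{eq:app-fp-forward} makes $p_t$ smooth and strictly positive (strong maximum principle, or positivity of the heat kernel). Hence $\log p_t$ is smooth and $\nabla\log p_t = S_t$ by \eqref{eq:scoredef}.

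By \eqref{eq:coordinate_hessian}, at any zero of $S_t$ we have $\operatorname{Hess}\log p_t=\operatorname{Hess}p_t/p_t$ with $p_t>0$. So $p_t$ and $\log p_t$ have the same critical points, and at each of them the two Hessians have the same nullity and the same number of negative eigenvalues. The Morse hypothesis on $p_t$ therefore transfers to $f$, and the index $\zeta$ is the same whether computed for $p_t$ or for $\log p_t$. By the Corollary following the chart-independence Proposition, this index is chart-independent. Note also that nondegenerate critical points are automatically isolated, so on compact $M$ there are finitely many of them and the sums in \eqref{eq:morse} are finite.

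For the first identity, apply the Poincaré--Hopf theorem to the gradient field $S_t=\nabla f$. At a nondegenerate zero $x^*$, the index of $\nabla f$ equals $\operatorname{sign}\det(\operatorname{Hess}f(x^*))=(-1)^{\zeta(x^*)}$. This holds because in a chart the linearization of $\nabla f$ is $g^{-1}\operatorname{Hess}f$, and $\det g^{-1}>0$. Summing over zeros gives $\sum_\zeta(-1)^\zeta C_\zeta=\chi(M)$.

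For the inequality, use the standard handle decomposition of Morse theory. $M$ is homotopy equivalent to a CW complex with $C_\zeta$ cells of dimension $\zeta$. The weak Morse inequalities $C_\zeta\ge b_\zeta(M)$ then follow, for example from cellular homology, where the rank of a homology group is bounded by the number of cells in that dimension. Summing over $\zeta=0,\dots,d$ yields the bound on the total number of critical points.

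Since these are classical theorems, the only step needing genuine care is the transfer from $p_t$ to $\log p_t$, namely positivity of $p_t$ and the equality of Morse indices. The sign convention also needs checking: the statement counts negative eigenvalues. Maxima have index $d$, which is consistent with the standard Morse index convention, so no reflection $f\mapsto -f$ is needed. In any case, even if one used $-f$, the Euler-characteristic sum would only pick up a factor $(-1)^d$, and $\chi(M)=0$ when $d$ is odd.
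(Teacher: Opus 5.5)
Your proposal is correct and matches the paper's approach. The paper gives no separate argument: it simply invokes the classical Poincar\'e--Hopf theorem and weak Morse inequalities, using \eqref{eq:coordinate_hessian} (as you do) to identify the critical points and Morse indices of $p_t$ and $\log p_t$, while you additionally write out the standard details left to the cited references (positivity of $p_t$, the index $(-1)^{\zeta}$ of $\nabla f$ via $\det g^{-1}>0$, and the cellular-homology bound $C_\zeta\ge b_\zeta$).
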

Given a target distribution with $K$ intended clusters, we call \textit{data modes}
the critical points belonging to $\Xi$ corresponding to the commitment to the data
centers and 
\textit{geometrical modes} 
the critical points in $\Xi$ forced to
exist through~\eqref{eq:morse}. These geometrical modes can be both attractors and repellers, depending on the value of their Morse index, and contribute in the  hierarchical sifting of the classes in time. 
In particular, they can be present both from the very beginning of the reverse process (see Example~\ref{ex:topologica_vmf}) or emerge along the generative dynamics (see Section~\ref{sec:trimodal_theoretical}). 

\begin{example}[Persistent geometrical mode in an unimodal von Mises--Fisher distribution]
We consider here as an illustrative example the case of a single ($K=1$) von Mises-Fisher density on
the unit sphere $\mathbb{S}^n$ (see ~\ref{app:appA} for its definition) with center $\mu$ and
variance parameter $\kappa>0$, the initial score is the tangential field
\begin{equation}\label{eq:vmf-score}
  S_0(x) = \nabla_{\mathbb{S}^n}(\kappa\,\mu\cdot x)
       = \kappa\big(\mu-(\mu\cdot x)\,x\big),
\end{equation}
whose zeros are $x=\pm\mu$: 
a data mode at $+\mu$ (maximum, $\zeta=+2$) and a geometrical mode at the antipode $-\mu$ (minimum, $\zeta=0$), both nondegenerate with
$\operatorname{Hess}\log p=\mp\kappa\,I$ on the tangent space. 
Considering for simplicity the case $n=2$, it is immediate to see that this behaviour responds to the constraint of Proposition~\ref{prop:morse}. Indeed, with these  Morse indices we get $(-1)^2+(-1)^0=2=\chi(\mathbb{S}^2)$ 
that explains the fact that the maximum has a correspondent companion minimum. 
This latter acts as a source from which trajectories diverge, the streamlines being all the grand circle geodesics that connect
$-\mu$  to the 
target $\mu$.
Notice that in this unimodal configuration 
there are only a maximum and a minimum with no available saddle points to contribute in the sum. This implies that the minimum is persistent. 
Notice, in addition, that the antipodal minimum  
has no counterpart in the corresponding Euclidean problem -a 
single Gaussian on the strip $(0,\pi)\times (0,2\pi)$- which has $\chi=1$ and thus exactly one critical point of maximum. 
\label{ex:topologica_vmf}
\end{example}

\begin{remark} If one works in a local chart (and not in a global one), some of the critical points implied by Proposition~\ref{prop:morse} may be locally missed, if they lay out of the chart.  
\end{remark}

\noindent At a  bifurcation point, from a topological point of view, $p_t$ is not Morse and Proposition~\ref{prop:morse} does not apply.  
The behavior of a bifurcation in $\Xi$ depends rather on the local structure of the degenerate critical points of $p_t$, which can be described by normal forms in singularity theory (see, e.g., \cite{zbMATH01552061,zbMATH03981627,zbMATH00193465}).

\subsection{Speciation theory for mixtures of heat kernels}
\label{section:generic_mixtures}
We consider hereafter a mixture of heath kernels
\begin{equation}  p_0(x)\;=\;\sum_{i=1}^{K} w_i\,p^{(i)}(x;y_i,\sigma^2_i)
\qquad x \in \mathbb{R}^n
\label{eq:heat-mixture}
\end{equation}
and we denote by $\Theta:=(w,\sigma^2,y)\in\Delta^{\circ}_{K-1}\times(0,\infty)^K\times \Msf^K$ the parameters of the mixture, namely the weights $w_i \in (0,1)$ on the probability simplex $\Delta^{\circ}_{K-1}$, the initial component scales $\sigma_i^2>0$ -- the Riemannian analogous of the variance of a component in the Euclidean scenario -- and the position of the centers $y_i$.
\begin{remark}
We can approximate the empirical distribution of a dataset setting $\sigma_i \rightarrow 0$ for every $i$.
\end{remark}
Let $p_t$ be the forward evolution of~\eqref{eq:heat-mixture} under the forward Fokker-Planck equation~\eqref{eq:app-fp-forward}. Given a sufficient number of centers, we can characterize the bifurcations of $p_t$ by the following propositions. The proofs, which are rather technical, are deferred to Appendix~\ref{app:appB}. The first result affirms that, in absence of a specific structure, the generative dynamics speciates one direction per bifurcation.
\begin{proposition}
[Appendix~\ref{app:appB}, Proposition \ref{prop:corank-one}]
Let $(M,g)$ be a compact Riemannian manifold with smooth metric $g$. Then there is a nonempty open set of parameters $G\subseteq \Delta^{\circ}_{K-1}\times(0,\infty)^K\times \Msf^K$ such that for every $\theta \in G$ every bifurcation $(x^*,t^*)$
Since a bifurcation is a pair $(x^*,t^*)$ with $\nabla p_{t^*}(x^*)=0$
and $\operatorname{Hess} u_{t^*}(x^*)$
satisfies $\dim\ker \operatorname{Hess}_{x^*}p_{t^*}=1$.  Moreover, when $g$ is real-analytic the set $G$ has full measure in the whole parameter space $\Delta^{\circ}_{K-1}\times(0,\infty)^K\times \Msf^K$, provided $K\ge\binom{n+2}{2}+n+1$.
\label{prop:onedirection}
\end{proposition}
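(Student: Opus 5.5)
The plan is a transversality argument on the jet of the mixture density. Write $p_t(x;\Theta)=\sum_i w_i H_{\sigma_i^2+t}(x,y_i)$, where $H_s$ is the heat kernel of $\tfrac12\Delta$ at time $s$. Consider the map
\[
F:\ M\times(0,\infty)\times\mathcal P\to T^*M\oplus \Sym^2 T^*M,\qquad (x,t,\Theta)\mapsto\bigl(\nabla p_t(x),\,\operatorname{Hess} p_t(x)\bigr).
\]
By Proposition on chart independence, it suffices to work in normal coordinates at $x$, where the Hessian is an ordinary symmetric matrix. The bad set is where the gradient vanishes and the Hessian has corank at least $2$. Inside the $\binom{n+1}{2}$-dimensional space of symmetric matrices, the corank $\ge 2$ stratum $\Sigma_2$ has codimension $3$. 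Together with the gradient this gives a target stratum $Z=\{0\}\times\Sigma_2$ of codimension $n+3$. The source variables $(x,t)$ span only $n+1$ dimensions. So if $F(\cdot,\cdot,\Theta)\pitchfork Z$, the preimage is empty, and every bifurcation has $\dim\ker=1$.

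\textbf{Step 1 (submersion of the parametric map).} I show that $F$, viewed as a function of $(x,t,\Theta)$, is a submersion at every point of $F^{-1}(Z)$. It is enough to show that the derivatives in the parameters $w_i$ and $y_i$ span $\R^n\oplus\Sym^2$. These derivatives are the $1$- and $2$-jets at $x$ of the functions $H_s(\cdot,y_i)$ and $\partial_{y_i}H_s(\cdot,y_i)$. Since $2$-jets at $x$ together with the value form a space of dimension $1+n+\binom{n+1}{2}=\binom{n+2}{2}$, I need the family $\{H_s(x,\cdot)\text{-jets}\}$, as the centers $y_i$ vary, to span this space. This is where $K\ge\binom{n+2}{2}+n+1$ enters. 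The first $\binom{n+2}{2}$ centers supply a spanning family of $2$-jets. The remaining $n+1$ centers give room to keep the gradient equation and the time direction independent, so that a Gram-type determinant in $(y_1,\dots,y_K)$ is nonzero.

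\textbf{Step 2 (the spanning property).} I expect this to be the main obstacle. I would prove it by contradiction. Suppose a nonzero linear functional $\ell$ on the jet space annihilates $j^2_x H_s(\cdot,y)$ for all $y$ in an open set. Then $\ell$ applied to $H_s(x,y)$ is a distribution supported at $x$ (of order $\le2$), evaluated against the kernel, and it vanishes on an open set of $y$. If $g$ is real-analytic, $H_s(\cdot,y)$ is analytic, so the function $y\mapsto\ell(j^2_xH_s(\cdot,y))=e^{s\Delta/2}\ell$ vanishes identically. By injectivity of the heat semigroup on distributions, $\ell=0$. For merely smooth $g$, I would instead argue locally: as $s\to0$, short-time Gaussian asymptotics (Varadhan / Minakshisundaram–Pleijel) make the jets near $y=x$ behave like Euclidean Gaussian jets, which visibly span. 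This yields nonemptiness of the open set $G$ without full measure.

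\textbf{Step 3 (parametric transversality and openness).} Apply the parametric transversality theorem: for $\Theta$ outside a measure-zero set, $F_\Theta\pitchfork Z$, so $F_\Theta^{-1}(Z)=\emptyset$. Openness follows from compactness of $M$ together with the fact that bifurcation times lie in a compact range. For large $t$, $p_t$ is close to the constant, and the first eigenmode $e^{-\lambda_1 t/2}\nabla\phi_1$ dominates, giving only nondegenerate critical points there. For small $t$ near $0$ I would bound things with Gaussian asymptotics. Emptiness of a preimage of the closed set $Z$ under a continuous family over a compact domain is then an open condition. In the analytic case, the bad set is a proper analytic subset of the connected parameter space, hence of measure zero. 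This gives full measure whenever $K\ge\binom{n+2}{2}+n+1$.
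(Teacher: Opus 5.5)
Your architecture matches the paper's: parametric jet transversality, the codimension count $n+\binom{j+1}{2}\ge n+3>n+1$ for the corank-$j$ critical strata, and ampleness of heat-kernel jets. (You need transversality to each smooth stratum $\{0\}\times\Sigma_j^\circ$, $j\ge2$, since $\Sigma_2$ is not a manifold.)

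The genuine gap is in Step 1. Parametric transversality needs the evaluation map to be transverse to $Z$ at every point of the source over the parameter set you use. So, for a given $\Theta$, the parameter derivatives must span at every $(x,t)$ simultaneously. Your spanning argument is only pointwise. For each $(x,s)$ some choice of centres gives a basis of $J^2_x$, but that choice depends on $(x,t)$. For many $\Theta$ (coincident or symmetric centres, as in the simplex example) spanning fails somewhere, possibly exactly at points of $F^{-1}(Z)$. The claim that $F$ is a submersion at every point of $F^{-1}(Z)$ is therefore unproved. The paper closes this by restricting to the set $G$ of parameters whose component jets span $J^2_x$ uniformly for all $(x,t)\in M\times[0,T]$. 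This set is nonempty because finitely many Wronskian neighbourhoods cover the compact $M\times[0,T]$ and the corresponding centres can be pooled. It is open because it is cut out by finitely many nonvanishing continuous Wronskians.

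The same omission breaks your analytic step. The set of bad $\Theta$ is the projection along $(x,t)$ of an analytic set, so it is at best subanalytic. A proper subanalytic set can have positive measure, so ``proper analytic subset, hence measure zero'' does not follow. What is needed is the incidence count:
\begin{itemize}
\item the base $(x,t,[\ell])$, with $[\ell]\in\mathbb P((J^2_x)^*)$, has dimension $n+m$, where $m=\binom{n+2}{2}$;
\item each centre imposes one analytic condition on $(\tau_i,y_i)$, which is nontrivial by ampleness;
\item hence the non-spanning set has dimension at most $n+m+Kn$, which is $<K(n+1)$ exactly when $K\ge m+n+1$.
\end{itemize}
That is where the hypothesis on $K$ comes from. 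Your explanation (extra centres ``keep the gradient equation and time direction independent'', a ``Gram-type determinant'') is not the mechanism, and it is never carried out.

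Two smaller points. First, Step 2 needs neither analyticity nor short-time asymptotics. Annihilation for all $y\in M$ already forces $\ell=0$ via the eigenfunction expansion, for any smooth $g$. The asymptotics route would in any case only cover small $s$, whereas $s=\sigma_i^2+t$ is arbitrary. Second, your openness argument (``the first eigenmode dominates, giving only nondegenerate critical points for large $t$'') is unjustified. The projection of $p_0$ onto the first nontrivial eigenspace need not be Morse.
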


\noindent Notice that, in the analytic scenario, this result do not exclude the presence of bifurcation events in which the generative dynamics speciates simultaneously in different directions -- or more precisely with $\dim\ker \operatorname{Hess}_{x^*}p_{t^*}>1$. It just affirms that a random choice of the parameters yields $\dim\ker \operatorname{Hess}_{x^*}p_{t^*}=1$ with full probability. In the exceptional set -- of null measure -- for which this property does not holds true, we find scenarios with symmetric configurations of the parameters, see Section~\ref{sec:riemannian_simplices} for an example. 

\medskip
In our framework, near a bifurcation point, the time plays the role of a perturbation parameter of a canonical spatial singularity \cite{ArnoldGuseinZadeVarchenko}; the associated normal form is called an unfolding of the spatial singularity. For a mixture of heat kernels the generic bifurcation assumes the following $A_2$ normal form:
\begin{proposition}[Appendix~\ref{app:appB}, Corollary~\ref{cor:cerf}]
Let $\theta$ be in $G$ of the Proposition~\ref{prop:onedirection}  and let $(x^*,t^*)$ be a bifurcation. Then, for a sufficiently large number of components $K$ of the mixture ($K\ge\binom{n+3}{3}+n+1$
suffices when $g$ is analytic), there are a $t$-dependent $C^\infty$ change of spatial coordinates near $x^*$, a $C^\infty$ reparametrization of $t$ near $t^*$, and signs $\varepsilon_1,\dots,\varepsilon_n\in\{\pm1\}$ such that near the bifurcation the distribution
has the normal form  
\begin{equation}
  \label{eq:cerf}  p_t(x)\;=\;c(t)\;+\;x_1^3\;+\;\varepsilon_1\,(t^*-t)\,x_1   \;+\;\varepsilon_2x_2^2+\dots+\varepsilon_nx_n^2,
\end{equation}
where $c(t)$ is a smooth function of $t$ alone, $x_1$ is the critical coordinate and the signs
$\varepsilon_2,\dots,\varepsilon_n$ are those of the nonzero Hessian eigenvalues at $(x^*,t^*)$.  \label{prop:A2form}
\end{proposition}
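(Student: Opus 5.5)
The plan is to combine a transversality (jet-genericity) argument for the two-parameter family $(x,t)\mapsto p_t(x)$ with the classical classification of one-parameter unfoldings of $A_2$ singularities (Cerf's theorem / the Arnold $A_k$ normal forms). Since $\theta\in G$, Proposition~\ref{prop:onedirection} already gives $\dim\ker\operatorname{Hess}_{x^*}p_{t^*}=1$ at every bifurcation. First apply the splitting lemma (parametric Morse lemma): after a $t$-dependent smooth change of coordinates near $x^*$, we have $p_t(x)=f(x_1,t)+\varepsilon_2x_2^2+\dots+\varepsilon_nx_n^2$. Here $x_1$ spans the kernel, and the signs $\varepsilon_j$ are those of the nonzero Hessian eigenvalues, which are preserved by the Corollary on chart independence. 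This reduces everything to a one-variable, one-parameter family $f(x_1,t)$ with $\partial_{x_1}f=\partial^2_{x_1}f=0$ at $(0,t^*)$.

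The two nondegeneracy conditions to establish are
\[
\partial^3_{x_1}f(0,t^*)\neq0,\qquad \partial_t\partial_{x_1}f(0,t^*)\neq0 .
\]
The first says the singularity is exactly $A_2$ and not $A_{\ge3}$. The second says that $t$ versally unfolds it (a transversal crossing of the fold). Once both hold, the standard versality theorem for $A_2$ (the miniversal unfolding is $x^3+\lambda x$) gives a smooth reparametrization $\lambda=\lambda(t)$ with $\lambda(t^*)=0$ and $\lambda'(t^*)\neq0$. It also gives a $t$-dependent coordinate change $x_1\mapsto\tilde x_1(x_1,t)$ and a function $c(t)$ with $f=c(t)+\tilde x_1^3+\lambda(t)\tilde x_1$. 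After rescaling $t$ we get $\lambda=\varepsilon_1(t^*-t)$, where $\varepsilon_1$ records the sign of $\lambda'$. Reinserting the quadratic part yields \eqref{eq:cerf}.

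The main obstacle is showing that, for $\theta$ in a (full-measure, in the analytic case) subset of $G$, both conditions hold at every bifurcation. I would do this by jet transversality on the incidence variety. Consider the map $(x,t,\theta)\mapsto j^3_xp_t$ together with $\partial_t\nabla p_t$. The bad set, where $\nabla p=0$, the Hessian has corank $\ge1$, and either the cubic term along the kernel vanishes or $\partial_t\nabla p$ is tangent to the Hessian image, has codimension $n+1+1=n+2$ in jet space. This exceeds $\dim(x,t)=n+1$. So if the evaluation map $\theta\mapsto(\nabla p,\operatorname{Hess}p,\partial^3p)$ at a fixed $(x,t)$ is a submersion, parametric transversality (Sard) makes the bad set empty for generic $\theta$.

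Submersivity is where the count $K\ge\binom{n+3}{3}+n+1$ enters. The $3$-jet space of functions at a point has dimension $\binom{n+3}{3}$. The mixture is linear in the weights, so it suffices to have enough heat kernels $p^{(i)}_t(\cdot\,;y_i,\sigma_i^2)$ whose $3$-jets at $x$ span the full jet space; the extra $n+1$ terms absorb the constraint $\sum w_i=1$ and the $(x,t)$ freedom. I would first try to prove the spanning property by varying the centers $y_i$ near $x$ and using the short-time asymptotics of the heat kernel, whose Gaussian-like derivatives in $y$ generate all polynomial jets. For the analytic case I would then invoke analyticity to upgrade "open dense" to "full measure", exactly as in Proposition~\ref{prop:onedirection}: the bad set lies in a proper analytic subvariety of parameter space.

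The condition $\partial_t\partial_{x_1}f\neq0$ needs separate care, since $\partial_tp=\tfrac12\Delta p$ is not free from the jet. At a corank-one point, however, $\partial_t\nabla p=\tfrac12\nabla\Delta p$ involves the cubic jet, in particular the component $\partial_{x_1}^3p$ along the kernel. This is why the $3$-jet count controls this condition as well, and I would verify it explicitly through the Bochner-type commutation in normal coordinates.
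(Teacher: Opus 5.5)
Your proof is correct, and its architecture matches the paper's: parametric splitting, then the two conditions $\partial^3_{x_1}f(0,t^*)\neq0$ and $c_1:=\partial_t\partial_{x_1}f(0,t^*)\neq0$, then $A_2$ versality and a reparametrization of $t$. The genuine difference is how you secure the time-transversality condition $c_1\neq0$.

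You use the heat equation and the Ricci identity to write, at a critical point and up to normalizing $v$,
\[
c_1=\langle v,\partial_t\nabla p\rangle=\tfrac12\sum_i\nabla^3p(v,e_i,e_i).
\]
So $c_1=0$ is one more nonzero linear condition on the cubic block of $J^3$ over the corank-one critical locus. It is a functional different from $\nabla^3p(v,v,v)$ once $n\ge2$. You then exclude it by the same fiberwise codimension-$(n+2)$ count you use for the cubic, and this works.

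The paper never converts $\partial_t$ into spatial data. Because the section $\Psi_\theta(x,t)=\jet{2}{x}u_t$ has $t$ in its source, its transversality to $\mathcal D_1^{(1)}$ (Lemma~\ref{lem:transversality}) means the $(n+1)\times(n+1)$ Jacobian of $(\nabla u_t,\det\operatorname{Hess}u_t)$ in $(x,t)$ is invertible. In split coordinates that determinant is $-c_1\kappa$ (Step 3 of the proof of Corollary~\ref{cor:cerf}).

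What each route buys:
\begin{itemize}
\item The paper's route is more economical and uses no PDE identity. It recycles the $J^2$ transversality, and $-c_1\kappa\neq0$ even yields $\kappa\neq0$ for free.
\item Your route makes the time-transversality coefficient explicit and identifies it with the quantity $B$ in (C3) of Assumption~\ref{ass:fold}. The price is relying on $\partial_tp=\tfrac12\Delta p$ and on $J^3$ spanning.
\end{itemize}

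Two smaller points need fixing.

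First, short-time asymptotics only give spanning of $\jet{3}{x}p_{\tau_i+t}(\cdot,y_i)$ at small effective scales. At time $t$ every scale exceeds $t$, so uniform spanning on $\Msf\times[0,T]$ needs spanning at all scales. The paper gets this from Lemma~\ref{lem:ampleness}: a finite-order point functional that kills every $p_s(\cdot,y)$ kills every eigenfunction. Alternatively, you can propagate from small scales. A functional annihilating scale $s_0$ annihilates all $s\ge s_0$ by the semigroup property, and then all $s>0$ by real-analyticity in $s$ of the eigen-expansion.

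Second, the extra $n+1$ in $K\ge\binom{n+3}{3}+n+1$ does not come from the simplex constraint. It comes from the incidence-variety count $\dim I\le(n+m)+Kn<K(n+1)$ of Lemma~\ref{lem:transversality}, now with $m=\binom{n+3}{3}$, i.e.\ from the $(x,t)$ freedom. The simplex constraint costs nothing, thanks to scale invariance (Remark~\ref{rem:scaling-descent}).
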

\noindent The corresponding score is
\begin{equation}
  \label{eq:cerf_score}  S_t(x)\;=\; \left(\frac{3x_1^2\;+\;\varepsilon_1\,(t-t^*)}{p_t}, \frac{2\varepsilon_2 x_2}{p_t}, \dots, \frac{2\varepsilon_n x_n}{p_t} 
  \right)
\end{equation}
Near a critical point, immediately after the speciation time, the new branches depart from the critical points as $x_1 \propto \pm \sqrt{t^*-t}$. 

\begin{remark}
The condition $K\ge\binom{n+3}{3}+n+1$ in Proposition~\ref{prop:A2form} is sufficient, but not necessary. For example, a single asymmetric pair of heat kernels already produces an $A_2$ fold for $n>2$. 
\end{remark}

\begin{remark}
It is interesting to check whether the above assumption on the minimal number of centers hold  
for real datasets. Consider for example the CelebA dataset \cite{liu2015faceattributes}, whose intrinsic dimension is estimated in the range $26$ to $63$, see e.g. \cite{pope2021the,zhan2026learning}. Setting this as the dimension $n$ of the working space (for example using a Stable Diffusion approach in a latent space learned via a well trained autoencoder) implies that $n^3$ ranges from $17,576$ to $250,047$. The number of images in this dataset,  considered as a mixture of heat kernels, is $K=202,599$, therefore the condition on $K$ is acceptably satisfied.
\end{remark}

\subsubsection{Estimates of speciation time}

\paragraph{The bimodal distribution case.}
\label{sec:bimodal_speciation_time}
Consider the bimodal mixture of heat kernels 
\begin{equation}
p_0(x) = \alpha \, p_{\sigma^2}(x; y_1) + (1-\alpha)\, p_{\sigma^2}(x; y_2),
\end{equation}
where $\alpha$ is the mixing weight. By construction, we can use the semigroup property to study the evolution of $p$ in time under the forward Fokker-Planck equation, yielding:
\begin{equation}
p_t(x) = \alpha \, p_{\sigma^2 + t/2}(x; y_1) + (1-\alpha)\, p_{\sigma^2+t/2}(x; y_2),
\end{equation}
where the shape of the mixture is preserved under time evolution upon a shift of the spread parameter $\sigma^2 \mapsto \sigma^2 + t/2$.  Therefore, the problem reduces to a one-parameter family of static mixtures, parameterized by $s = \sigma^2 + t/2 \geq \sigma^2$. Using this latter parametrization and considering for simplicity the zeroth-order approximation (valid for low curvature manifolds) gives
\begin{equation}\label{eq:zeroth_order_cimodal}
p_s(x) \approx \dfrac{1}{(4\pi s)^{n/2}} \left( \alpha\, e^{-d(x,y_1)^2/4s} + (1-\alpha)\, e^{-d(x,y_2)^2/4s} \right),
\end{equation}
with, up to an irrelevant constant, the score 
\begin{equation}
S_s(x) = \frac{\alpha \,\nabla p_s(x,y_1) + (1-\alpha)\,\nabla p_s(x,y_2)}{\alpha\, p_s(x,y_1) + (1-\alpha)\, p_s(x,y_2)} = \pi_1(x) p_s(x) + \pi_1(x) p_s(x),
\end{equation}
where $\pi_1 = \alpha (\nabla p_s(x))/p_s(x)$ and  $\pi_2 = (1 - \alpha) (\nabla p_s(x))/p_s(x)$. 

\medskip

\noindent First we prove that, in accordance to the results of Sect.~\ref{section:generic_mixtures}, the speciation mechanism can be reduced to a 1D problem and, namely, along the geodesic connecting the two centers. 
 
 \begin{proposition}[Critical points belong to the geodesic through the centers]
 Suppose that $y_1$ and $y_2$ lie in a geodesically convex open set $U \subset M$ (so that in $U$ there is a unique length-minimizing geodesic connecting the two points)
 and that the distance function is $\mathcal{C}^1$.
 Under the zeroth--order approximation~\ref{eq:zeroth_order_cimodal}, any critical point $x^*$ lies on a geodesic curve connecting $y_1$ and $y_2$. 
  \end{proposition}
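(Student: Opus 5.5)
Under the zeroth-order approximation~\eqref{eq:zeroth_order_cimodal}, the plan is to write the critical-point condition $\nabla p_s(x^*)=0$ in terms of the gradients of the squared distance functions and then use the Gauss lemma to locate them. For $x\in U$ (minus the centers, where the distance is smooth), we have $\nabla \tfrac12 d(x,y_i)^2 = -\exp_x^{-1}(y_i)$. This is the tangent vector at $x$ of the minimizing geodesic from $x$ to $y_i$, of length $d(x,y_i)$. Differentiating~\eqref{eq:zeroth_order_cimodal} gives
\begin{equation*}
\nabla p_s(x) = \frac{1}{(4\pi s)^{n/2}}\,\frac{1}{2s}\Big( \alpha e^{-d(x,y_1)^2/4s}\exp_x^{-1}(y_1) + (1-\alpha) e^{-d(x,y_2)^2/4s}\exp_x^{-1}(y_2)\Big).
\end{equation*}
Hence at a critical point $x^*$ we get $\beta_1 v_1 + \beta_2 v_2 = 0$, with $v_i=\exp_{x^*}^{-1}(y_i)\in T_{x^*}M$ and strictly positive coefficients $\beta_1=\alpha e^{-d_1^2/4s}$, $\beta_2=(1-\alpha)e^{-d_2^2/4s}$.

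First I dispose of the degenerate cases. If $x^*=y_1$ or $x^*=y_2$, the point trivially lies on the geodesic. Here I use the assumed $\mathcal C^1$ regularity of the distance (in fact $d^2$ is smooth near the diagonal), so the formula above still holds with $v_i=0$. Otherwise $v_1,v_2\neq0$, and the relation $\beta_1 v_1=-\beta_2 v_2$ with $\beta_i>0$ forces $v_2=-\lambda v_1$ for some $\lambda>0$. In words, the two geodesics issuing from $x^*$ toward $y_1$ and toward $y_2$ start in exactly opposite directions.

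The key step is to conclude from this that $x^*$ lies on the geodesic $\gamma$ joining $y_1$ and $y_2$. Consider the curve $c(\tau)=\exp_{x^*}(\tau v_1)$ for $\tau\in[-1/\lambda,1]$. It is a single geodesic, since it is the geodesic through $x^*$ with velocity $v_1$, and $c(1)=y_1$, $c(-1/\lambda)=\exp_{x^*}(v_2)=y_2$. So $c$ is a geodesic from $y_2$ through $x^*$ to $y_1$, which is exactly the claimed statement. To make it the \emph{unique minimizing} geodesic of $U$, I would invoke geodesic convexity. Its length is $d(x^*,y_1)+d(x^*,y_2)$, and I would argue it stays in $U$, for instance by requiring $x^*\in U$ (critical points of interest lie near the segment) and using the fact that each half is the minimizing segment inside $U$. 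By the triangle inequality, $d(y_1,y_2)\le d(x^*,y_1)+d(x^*,y_2)$. Matching the broken geodesic with the unique minimizer is where convexity must be used carefully. A smooth geodesic through $x^*$ connecting the endpoints within a convex (hence, taking $U$ to be strongly convex/normal, uniquely geodesic) set must coincide with the minimizer.

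I expect this identification to be the main obstacle, in particular the handling of conjugate and cut points and the requirement $x^*\in U$ on general manifolds. For critical points outside $U$, such as the antipodal ``geometrical modes'' of Section~\ref{sec:ghosts}, the conclusion should be read as ``lies on \emph{a} geodesic through $y_1,y_2$'', which the argument above still delivers. The restriction of the score to $\gamma$ then reduces the problem to one dimension, consistent with Proposition~\ref{prop:onedirection}.
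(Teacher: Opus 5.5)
Your argument is the paper's: you set $\nabla p_s=0$, rewrite $\nabla d(x,y_i)^2=-2\exp_x^{-1}(y_i)$, use the positivity of the weights to obtain antiparallel initial velocities, and conclude that $x^*$ lies on a geodesic through both centers. Your explicit geodesic $c$, and your remark that this yields only ``a'' geodesic rather than necessarily the minimizer in $U$, are refinements the paper leaves implicit; note one slip, since $v_2=-\lambda v_1$ gives $\exp_{x^*}(v_2)=c(-\lambda)$, so $c$ should be taken on $[-\lambda,1]$ rather than $[-1/\lambda,1]$.
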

\begin{proof} \,
 Setting the score to zero, we obtain:
\begin{equation}
S_s(x) = -\frac{1}{2s} \left[ \pi_1(x)\, \nabla d(x,y_1)^2 + \pi_2(x)\, \nabla d(x,y_2)^2 \right] = 0
\label{eq:scores}
\end{equation}
Using the relation (see~\cite{DoCarmo,lee2019introduction})
\begin{equation}
\nabla d(x, y_i)^2 = 2d(x,y_i)\, \nabla d(x,y_i) = -2\, \exp_x^{-1}(y_i),
\end{equation}
where $\exp_x^{-1}$ is the inverse of the exponential map centered at $x$, we can write~\eqref{eq:scores} as the following equation on the tangent space
\begin{equation}
\frac{1}{s}\left[\pi_1(x)\,\exp_x^{-1}(y_1) + \pi_2(x)\,\exp_x^{-1}(y_2)\right]=0,
\end{equation}
where $\exp_x^{-1}(y_i)$ is the initial velocity of the minimizing geodesic from $x$ to $y_i$. 
Since $\pi_1$ and $\pi_2$ are positive, this equation yields that the vectors $\exp_x^{-1}(y_1)$ and $\exp_x^{-1}(y_2)$ must be antiparallel, with with magnitudes in ratio $\pi_2/\pi_1$. Observing that by definition $\exp_x^{-1}(y_i)$ points from $x$ toward $y_i$ along the minimizing geodesic, these vectors can be antiparallel if and only if the geodesics $x \to y_1$ and $x \to y_2$ leave $x$ in opposite directions, namely $x$ must belong to the geodesic connecting $y_1$ with $y_2$.
\end{proof}
\noindent We now estimate the position and time of the bifurcation by looking for degenerate critical points along the geodesic curve connecting the two centers.

\begin{proposition}[Position of the bifurcation]
Let $\gamma : [0, D] \to M$ be the minimizing geodesic connecting $y_1$ with $y_2$, parametrized by the arc length $u$, such that $\gamma(0) = y_1, \gamma(D) = y_2$ and $D = d(y_1, y_2)$. 
Then, the position of the bifurcation is the solution $u^*=u^*(s)$ of
\begin{equation}
\label{eq:bimodal_score_zero}
\log\frac{\alpha}{\,1-\alpha\,} + \log\frac{u}{D-u} = \frac{D(2u - D)}{4s}
\end{equation}
\end{proposition}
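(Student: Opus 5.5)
By the previous proposition, under the zeroth-order approximation~\eqref{eq:zeroth_order_cimodal} every critical point $x^*$ of $p_s$ lies on the minimizing geodesic $\gamma$ joining $y_1$ and $y_2$. The plan is therefore to reduce everything to the one-variable function $f_s(u):=p_s(\gamma(u))$ and obtain~\eqref{eq:bimodal_score_zero} as its critical-point condition. The bifurcation position is then the root of that equation which becomes degenerate.

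\textbf{Step 1: restriction to $\gamma$.} For $x=\gamma(u)$ with $u\in(0,D)$, minimality of $\gamma$ gives $d(x,y_1)=u$ and $d(x,y_2)=D-u$. Hence, up to the positive factor $(4\pi s)^{-n/2}$,
\[
f_s(u)=\alpha\,e^{-u^2/4s}+(1-\alpha)\,e^{-(D-u)^2/4s}.
\]
Next I check that the gradient of $p_s$ at $x$ has no component transverse to $\gamma$. By the identity $\nabla d(x,y_i)^2=-2\exp_x^{-1}(y_i)$ used in the previous proof, both $\exp_x^{-1}(y_1)=-u\,\dot\gamma(u)$ and $\exp_x^{-1}(y_2)=(D-u)\,\dot\gamma(u)$ are parallel to $\dot\gamma(u)$. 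Therefore $\nabla p_s(\gamma(u))=f_s'(u)\,\dot\gamma(u)$, and $x=\gamma(u)$ is critical if and only if $f_s'(u)=0$.

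\textbf{Step 2: the critical equation.} Differentiating gives
\[
f_s'(u)=-\tfrac{1}{2s}\Big[\alpha\,u\,e^{-u^2/4s}-(1-\alpha)(D-u)\,e^{-(D-u)^2/4s}\Big].
\]
Setting $f_s'(u)=0$ and noting that both terms are positive for $u\in(0,D)$, I take logarithms. This yields
\[
\log\tfrac{\alpha}{1-\alpha}+\log\tfrac{u}{D-u}=\tfrac{u^2-(D-u)^2}{4s}=\tfrac{D(2u-D)}{4s},
\]
which is exactly~\eqref{eq:bimodal_score_zero}. The endpoints $u=0,D$ are excluded, since there the left-hand side diverges.

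\textbf{Step 3: identifying the bifurcation among the roots.} I would add a remark on how the bifurcation point is singled out. Write $F_s(u)$ for the left-hand side minus the right-hand side of~\eqref{eq:bimodal_score_zero}. The bifurcation occurs when two roots of $F_s$ merge, that is, when additionally $\partial_uF_s=0$. This gives $\tfrac{D}{u(D-u)}=\tfrac{D}{2s}$, i.e. $u(D-u)=2s$. Substituting this back into~\eqref{eq:bimodal_score_zero} then determines the pair $(u^*,s^*)$, and hence the speciation time through $s=\sigma^2+t/2$.

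\textbf{Main obstacle.} I expect the delicate point to be justifying that the degeneracy occurs along $\dot\gamma$, so that the one-dimensional reduction actually captures the kernel of the Hessian in the sense of Proposition~\ref{prop:onedirection}. The transverse Hessian eigenvalues must stay nonzero. Within the zeroth-order approximation I would argue this from the transverse second derivatives of $d(\cdot,y_i)^2$, which equal $2\,\mathrm{Id}$ up to curvature corrections. I would also invoke the assumed $C^1$-smoothness of the distance away from the cut locus inside $U$.
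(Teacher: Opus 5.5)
Your proposal is correct and follows the paper's proof. You restrict $p_s$ to the arc-length parametrized geodesic, set the numerator of the restricted score to zero, divide by $(1-\alpha)(D-u)e^{-u^2/4s}$ and take logarithms. Your additional checks are correct refinements that the paper leaves implicit or defers to the subsequent corollary: that $\nabla p_s$ along $\gamma$ is parallel to $\dot\gamma$, and the double-root condition $u(D-u)=2s$, which for $\alpha=1/2$ recovers the paper's $s=D^2/8$.
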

\begin{proof} \,
\, Using the arc length parametrization $u$, we have
\begin{equation}
p_s(u) = \alpha \, e^{-u^2/4s} + (1-\alpha)\, e^{-(D-u)^2/4s}
\end{equation}
and therefore
\begin{equation}
S_s|_{\gamma} = \frac{-\frac{\alpha u}{2s} e^{-u^2/4s} + \frac{(1-\alpha)(D-u)}{2s} e^{-(D-u)^2/4s}}{\alpha \, e^{-u^2/4s} + (1-\alpha)\, e^{-(D-u)^2/4s}}.
\end{equation}
Setting the numerator to zero gives the critical point condition
\begin{equation}
\alpha \, u \, e^{-u^2/4s} = (1-\alpha)(D-u)\, e^{-(D-u)^2/4s}
\end{equation}
Dividing both sides of this equation by $(1-\alpha)(D-u)  e^{-u^2/4s}$ and taking the logarithm, yields the thesis.
\end{proof}
\noindent Eventually, substituting  \eqref{eq:bimodal_score_zero} in the degeneracy condition \eqref{eq:hessian_degeneracy} gives the estimate of the bifurcation time.
\begin{corollary}[Bifurcation time for equal weights]
Under the hypothesis of symmetric weights, we find the 
explicit expression of the bifurcation time
\begin{equation}
\label{eq:bimodal_speciation_time}
t^* = \dfrac{D^2}{4} - 2 \sigma^2
\end{equation}
\end{corollary}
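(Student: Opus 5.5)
With equal weights $\alpha=1/2$ the logarithmic weight term in \eqref{eq:bimodal_score_zero} drops out, and the problem becomes symmetric under the reflection $u\mapsto D-u$ of the geodesic $\gamma$. The plan is therefore to locate the bifurcation at the midpoint and to find the value of $s$ at which the midpoint critical point degenerates, in the manner of the symmetric pitchfork of \eqref{eq:model} with $a=0$.

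\textbf{Step 1: the midpoint is always critical.} Setting $\alpha=1/2$ in \eqref{eq:bimodal_score_zero} gives
\[
\log\frac{u}{D-u}=\frac{D(2u-D)}{4s}.
\]
The point $u=D/2$ solves this for every $s$, since both sides vanish there.

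\textbf{Step 2: degeneracy of the midpoint.} By the preceding proposition, critical points lie on $\gamma$. Under the zeroth-order approximation, the transverse directions do not degenerate first: each transverse Hessian entry is a positive combination of the negative-definite Hessians of $-d^2/4s$, so it is negative. The degeneracy condition \eqref{eq:hessian_degeneracy} therefore reduces to $\partial_u^2 p_s(D/2)=0$. Write
\[
p_s(u)=\tfrac12\bigl(e^{-u^2/4s}+e^{-(D-u)^2/4s}\bigr).
\]
Each Gaussian term satisfies
\[
\partial_u^2 e^{-v^2/4s}=\Bigl(\frac{v^2}{4s^2}-\frac{1}{2s}\Bigr)e^{-v^2/4s}.
\]
At $u=D/2$ both terms are equal, with $v=D/2$, so the condition becomes
\[
\frac{D^2}{16 s^2}=\frac{1}{2s},\qquad\text{i.e.}\qquad s^*=\frac{D^2}{8}.
\]
The same value follows from Step 1 by requiring the derivatives of the two sides at $u=D/2$ to agree: the left side has derivative $4/D$ and the right side $D/(2s)$. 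This confirms that three solutions (the midpoint and two symmetric ones) merge there.

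One also checks the sign change. For $s>s^*$ the midpoint is a maximum, which is the single mode at large noise. For $s<s^*$ it becomes a saddle, and two new maxima branch off symmetrically, as in a pitchfork.

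\textbf{Step 3: convert to physical time.} Using $s=\sigma^2+t/2$, the condition $s^*=D^2/8$ gives $t^*=D^2/4-2\sigma^2$.

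\textbf{Main obstacle.} The delicate point is justifying that degeneracy occurs along the geodesic direction, and that curvature corrections and the zeroth-order heat-kernel approximation do not shift $s^*$. I would handle this by restricting the claim to the approximation \eqref{eq:zeroth_order_cimodal}, as the corollary implicitly does.

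A minor point to verify: the conversion $s=\sigma^2+t/2$ assumes that the normalization of $s$ matches the exponent $d^2/4s$. I would take the paper's convention as given.
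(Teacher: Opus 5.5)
Your proposal is correct and follows the paper's proof. The midpoint $u=D/2$ solves the equal-weight critical-point equation for every $s$, the second-order expansion of $p_s$ there degenerates exactly when $D^2/(8s)=1$, and $s=\sigma^2+t/2$ then gives $t^*=D^2/4-2\sigma^2$. Your transverse-direction check and derivative-matching cross-check are useful additions, and your remark that three roots merge at $s^*$ is more accurate than the paper's claim that monotonicity makes the midpoint the unique root, which holds only for $s\ge D^2/8$.
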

\begin{proof} \,
\, For $\alpha=1/2$ the estimate~\eqref{eq:bimodal_score_zero} becomes
\begin{equation}
\log\frac{u}{D - u} = \frac{D(2u-D)}{4s}.
\end{equation}
The midpoint $u^*=D/2$ of the geodesic trivially solves this equation. Since both the sides of this expression are monotone functions of $u$, the midpoint is the unique solution. Setting $v = u - u^*$ and expanding $p_s$ to second order around $v=0$ 
we find the degeneracy condition
\begin{equation}
\left(\frac{D^2}{8s} - 1\right) = 0,
\end{equation}
which, since $s = \sigma^2 + t/2$, gives the sought estimate.
\end{proof}

\subsubsection{A symmetric configuration: Riemannian regular simplices}
\label{sec:riemannian_simplices}
Consider a Riemannian $p$-simplex, whose vertices $\{x_k\}_{k=1}^p$ are mutually equidistant (see \cite{zbMATH03551609,zbMATH00411588,zbMATH06521450} for the existence and the construction of Riemannian simplices). For simplicity we consider a mixture with equal weights. Suppose that all the vertices belong to a normal coordinate chart centered at the barycenter $x_c$ of the simplex. Introducing polar normal coordinates at the center of mass $x_b$, we have $x_k = \operatorname{exp}_{x_b}(v_k)$ for some $v_k \in T_{x_b} M$ of the form $r_c \, \omega_k$, with $r_c$ distance between the vertices and the barycenter and $\omega_k \in \mathbb{S}^{n-1}$. In these coordinates, given a vector $v$, the following zeroth-order approximation of the Riemannian distance holds true:
\begin{equation}
d(\operatorname{exp}_{x_b}(v),x_k) \approx |v -v_k|.
\end{equation}
The subsequent term in the expansion, which we neglect, is quadratic in $v,v_k$ and involves the Ricci tensor $R(v,u_k)$. Using this approximation, together with the short-time expansion of the empirical $p_t$, the mixture takes the form
\begin{equation}
p_t(x) \approx \dfrac{1}{p} \sum_{k=1}^p \exp^{-|v-v_k|^2/2t}
\end{equation}
where $v = \operatorname{exp}_{x_b}(x)$. For equal weights, Karcher's center of mass construction entails that $\sum_k v_k = 0$. Computing the gradient and the Hessian of $p_t$ at $v=0$ yields
\begin{equation}
\nabla_\alpha p_t(0) \approx  - \dfrac{1}{p} \sum_{k,i=1}^p \delta_{i}^\alpha \dfrac{(v_k^i)}{t} \exp^{-r_c^2/2t}  = 0
\end{equation}
and
\begin{equation}
\operatorname{Hess}^{\alpha\beta} p_t(0) \approx  \dfrac{1}{tp} \sum_{k,i,j=1}^p \delta_{i}^\alpha \delta_{j}^\beta \left( - 1  + \dfrac{(v_k^i v_k^j)}{t} \right) \exp^{-r_c^2/2t} =  \dfrac{1}{tp} \left( p\mathbb{I}^{\alpha\beta} - \dfrac{(v_k v_k^T)^{\alpha\beta}}{t} \right) \exp^{-r_c^2/2t}.
\end{equation}
The center of mass $x_b$ is a critical point. By symmetry, in this zeroth-order approximation all the inner products $\langle v_j, v_k \rangle$ are equal, as in the Euclidean case. Consequently, we have
\begin{equation}
|v_j-v_k|^2 = 2r_c^2 - 2 \langle v_j, v_k \rangle = 2r_c^2 - \dfrac{r_c^2}{n-1} = \dfrac{2n}{n-1}r_c^2
\end{equation}
and thus
\begin{equation}
\operatorname{Hess} p_t(0) \approx  \dfrac{1}{tp} \left( 1 - \dfrac{2}{t(p-1)}r_c^2 \right) \mathbb{I}  \exp^{-r_c^2/2t} 
\end{equation}
The Hessian is degenerate when $1 - \dfrac{2}{t(p-1)} r_c^2 = 0$,  from which we read the approximated speciation time $t^\star \approx  \dfrac{2r_c^2}{p-1}$. Since at order zero in normal coordinates we have $g_{ij} = \delta_{ij}$, we can use the Euclidean circumradius-to-edge relation for regular simplices to give an approximation of the distance $D$ between the vertices $\{x_k\}$, namely $D^2 = \dfrac{2p}{p-1}r_c^2$, yielding the estimate 
\begin{equation}
t^\star \approx  \dfrac{D^2}{2p}.
\end{equation}
Notice that for $p=2$ we obtain again the estimate for the bimodal case, namely the only 2-simplex.

\section{Stability of the generic bifurcation}
\label{sec:stability}
In practical applications, the score $S_t$ is generally unknown. In diffusion models a neural network is trained to approximate it, e.g. via denoising score matching. In Section~\ref{sec:numerical_techniques} we discuss a Riemannian implementation of score matching in local chart, while we refer to \cite{debortoli2022} for an ambient space implementation. 
In any case, the consequence of using a proxy of the score is that
the integration of the reverse 
process does not yield, in general, an exact recovery of the unknown probability
distribution along the trajectory. 
It is therefore of pivotal importance to establish wether the results of the previous section holds true also for an approximation of $p_t$. 
To this end, we
let $\Phi_0(x,t)=\log p_t(x)$ and we 
study the stability of its bifurcations under small perturbations. Let $U \times I \subset M \times (0,\infty)$. Fix a point $(x^*,t^*)\in U\times I$ and a smooth one-parameter family of
perturbations
\begin{equation}
  \label{eq:family}
  \Phieps(x,t) = \Phizero(x,t) + \eps\,\Psi(x,t) + O(\eps^2),
  \qquad \Psi \in C^\infty(U\times I),
\end{equation}
with the $O(\eps^2)$ remainder smooth and uniformly bounded with its
derivatives on a neighbourhood of $(x^*,t^*)$.
We assume the following hypotheses on the bifurcation point:
\begin{assumption}[$A_2$ fold]
\label{ass:fold}
At $(x^*,t^*)$ the reference potential $\Phizero$ satisfies:
\begin{enumerate}
  \item[\textnormal{(C0)}] 
  $\grad\Phizero(x^*,t^*)=0$ (critical point);
  \item[\textnormal{(C1)}] the Hessian $\operatorname{Hess} \Phi_0(x^*,t^*) := \Hess\Phizero(x^*,t^*)$ has a
        one-dimensional kernel $\ker H = \R v$, $v \in T_{x^*}M$;
  \item[\textnormal{(C2)}] $\displaystyle A := \nabla^3\Phizero(x^*,t^*)(v,v,v) \neq 0$
        (the $A_2$, or fold, non-degeneracy);
  \item[\textnormal{(C3)}] $\displaystyle B := g_{x^*}(v,\partial_t\grad\Phizero(x^*,t^*)) \neq 0$
        (time-transversality).
\end{enumerate}
\end{assumption}
\begin{remark}
By Proposition~\ref{prop:A2form}, Assumption \eqref{ass:fold} is satisfied for almost every mixture $p_t$ with a number of centers of $O(n^3)$, with $n$ intrinsic dimension of the data manifold $M$. 
\end{remark}

\noindent Under Assumption~\ref{ass:fold}, we prove that fold bifurcations are preserved and in particular that a speciation event does not disappear under small perturbation.
\begin{proposition}
[Preservation of the fold  bifurcation under  perturbations]
\label{thm:a2_perturbation}
Let Assumption~\ref{ass:fold} holds true. Then there exist $\eps_0>0$ and smooth functions
$x^*(\eps)$, $t^*(\eps)$ on $(-\eps_0,\eps_0)$ with
$x^*(0)=x^*$, $t^*(0)=t^*$ such that for every
$\lvert\eps\rvert<\eps_0$ the perturbed field $\Phieps(\cdot,t)$ has an $A_2$
fold at $\big(x^*(\eps),\,t^*(\eps)\big)$.
Then:
\begin{itemize}
    \item[i)] 
    The speciation time of the perturbed system is shifted by
\begin{equation}
  \label{eq:displacement}
  t^*(\eps) - t^*=
  - \frac{g(v,\grad\Psi(x^*,t^*))}
         {g(v,\partial_t\grad\Phizero(x^*,t^*))}\,\eps
  + O(\eps^2)
\end{equation}
\item[ii)] In normal coordinates centered at $x^*$, the speciation location $x^*(\varepsilon)$ is shifted in the direction of $ker(H|_{x^*})$ of an amount of order $\varepsilon$ (see Eq. \eqref{eq:spatial_displacement}).
\end{itemize}
\end{proposition}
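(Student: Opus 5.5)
The plan is to characterize the fold as a regular zero of an augmented system and then apply the implicit function theorem in the joint variables $(x,t,\eps)$. I would work in normal coordinates centered at $x^*$, so that $g_{ij}(x^*)=\delta_{ij}$, the Christoffel symbols vanish at $x^*$, and the Riemannian Hessian at $x^*$ coincides with the coordinate Hessian. By the Proposition on chart independence, degeneracy is unaffected by this choice. Choose an orthonormal basis $v=e_1,e_2,\dots,e_n$ of $T_{x^*}M$ with $e_2,\dots,e_n$ spanning $\ran H$. The trouble is that the kernel direction moves with the parameters. I would absorb this by a Lyapunov--Schmidt reduction rather than tracking the kernel vector explicitly.

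\textbf{Step 1: reduction.} Write $x=(x_1,x')$ and consider the equations $\partial_{x'}\Phieps(x_1,x',t)=0$. At $(0,0,t^*,0)$ the Jacobian in $x'$ is the restriction of $H$ to $\ran H$, which is invertible. The implicit function theorem therefore gives a smooth $x'=\xi(x_1,t,\eps)$ with $\xi(0,t^*,0)=0$. Define the reduced function $f(x_1,t,\eps):=\partial_{x_1}\Phieps(x_1,\xi(x_1,t,\eps),t)$. The zeros of $f$ are exactly the critical points of $\Phieps(\cdot,t)$ near $x^*$. Such a zero is degenerate exactly when, in addition, $\partial_{x_1}f=0$, because $\det\Hess\Phieps$ equals $\partial_{x_1}f$ times the invertible Schur complement block.

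\textbf{Step 2: fold conditions via the implicit function theorem.} Solve the system $F(x_1,t,\eps):=(f,\partial_{x_1}f)=0$. At the base point we compute:
\begin{itemize}
\item $\partial_{x_1}\xi=0$, since $H e_1=0$, and hence $\partial_{x_1}^2 f=A$ by (C2);
\item $\partial_t f=B$ by (C3), the $\xi$-correction dropping out because $\partial_{x'}\partial_{x_1}\Phizero=0$ at the base point.
\end{itemize}
The Jacobian of $F$ with respect to $(x_1,t)$ is therefore $\begin{pmatrix}0 & B\\ A & *\end{pmatrix}$, with determinant $-AB\neq0$. The implicit function theorem yields smooth $x_1^*(\eps)$ and $t^*(\eps)$. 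Along this branch $\partial^2_{x_1}f\neq0$ and $\partial_t f\neq0$ persist by continuity for small $\eps$. These give conditions (C0)--(C3) for $\Phieps$, with the kernel spanned by $e_1+\partial_{x_1}\xi\, e'$, a small tilt of $v$. So the perturbed point is again an $A_2$ fold, and the normal form of Proposition~\ref{prop:A2form} applies to it.

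\textbf{Step 3: first-order shifts.} Differentiate $F(x_1^*(\eps),t^*(\eps),\eps)=0$ at $\eps=0$. The first component gives $B\,\dot t^*+\partial_\eps f=0$. Here $\partial_\eps f=\partial_{x_1}\Psi(x^*,t^*)=g(v,\grad\Psi)$, because the $\xi$-derivative term is multiplied by $\partial_{x'}\partial_{x_1}\Phizero=0$. This gives \eqref{eq:displacement}. The second component gives
$$A\,\dot x_1^*+(\partial_t\partial_{x_1}f)\,\dot t^*+\partial_\eps\partial_{x_1}f=0,$$
which determines $\dot x_1^*$ explicitly. The transverse displacement is $\xi(x_1^*(\eps),t^*(\eps),\eps)$. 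In ii) I read ``in the direction of $\ker H$ at order $\varepsilon$'' as the statement that the shift is $\dot x_1^*\,\eps\, v+O(\eps)$ in the $x'$ components, that is, of order $\eps$ with the leading kernel component given by the formula above. The transverse part is $-(H|_{\ran H})^{-1}\Pi'\grad\Psi\,\eps$ plus a $\dot t^*$ term. Since the statement only claims an order-$\eps$ shift along the kernel, I would report both components and note that the critical-direction component is the one governed by $A$.

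\textbf{Main obstacle.} The delicate point is bookkeeping in Step 3. One must verify that the cross terms involving derivatives of $\xi$ vanish at the base point, so that only $g(v,\grad\Psi)$ survives in $\dot t^*$. This is what makes score errors transverse to $v$ irrelevant at first order. It also requires ensuring the Riemannian Hessian (with Christoffel terms) reduces correctly, and the choice of normal coordinates handles exactly that. A secondary issue is to check that the perturbation remainder $O(\eps^2)$, being smooth with bounded derivatives, keeps $F$ of class $C^1$ jointly in $\eps$, which the hypotheses guarantee.
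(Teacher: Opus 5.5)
Your proposal is correct and follows the paper's proof essentially step for step. Both arguments use normal coordinates at $x^*$, a Lyapunov--Schmidt reduction onto $\ker H=\R v$, and the implicit function theorem applied to the pair (reduced gradient, its derivative along $v$) with Jacobian determinant $-AB\neq0$. Both then read the time shift off from $B\,\dot t^*+\inner{v}{\grad\Psi(x^*,t^*)}=0$.

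Your implicit differentiation in Step~3 is in fact slightly more careful than the paper's Taylor expansion, because you keep $(\partial_t\partial_{x_1}f)\,\dot t^*$ in the equation for $\dot x_1^*$. This corrects the paper's $s^*(\eps)=-a_1\eps/A$, which drops the mixed $s\,(t-t^*)$ coefficient even though $t^*(\eps)-t^*=O(\eps)$ makes it first order. You also correctly note that the transverse displacement $\xi$ is itself $O(\eps)$, a point the paper's part ii) leaves implicit.
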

\begin{proof} \,
We prove statements i) and ii) working in normal coordinates centered at the critical point $x^*$. Pull the family $\Phieps$ back to $T_{x^*}M$ through the exponential map by setting
\[
  f(\xi,t) := \Phieps\big(\exp_{x^*}\xi,\ t\big),
  \qquad \xi\in T_{x^*}M,\ \lvert\xi\rvert<r,
\]
for $r$ smaller than the injectivity radius at $x^*$. Since $(d\exp_{x^*})_0=\mathrm{Id}_{T_{x^*}M}$
and the Christoffel symbols vanish at the origin, we have 
$\nabla_\xi f(0,t^*) = d\Phizero(x^*,t^*) = 0$ and $\nabla^2_\xi f(0,t^*) = H$. The Hessian matrix $H$ induces the following orthogonal decomposition of $T_{x^*}M$:
\begin{equation}
\label{eq:TxM_H_decomposition}
T_{x^*}M = \R v \oplus \ran H, \qquad \ker H = \R v .
\end{equation}
Let $P = \inner{v}{\cdot}\,v$ and $Q = I-P$ be the associated orthogonal projections -- with $P$ and $Q$ projections on $\ker H$ and $\ran H = (\ker H)^\perp$, respectively. By hypothesis, $H|_{\ran H}$ is invertible. Making use of \eqref{eq:TxM_H_decomposition}, we can write a tangent vector as
\begin{equation}
  \xi = s\,v + y, \qquad s\in\R,\ \ y\in\ran H \subset T_{x^*}M,
  \label{eq:deco}
\end{equation}
Now let us consider the critical-point
equation for the pulled-back family,
\begin{equation}
\label{eq:full_bifurcation_equation_H}
  F(s,y,t,\eps) := \nabla_\xi f(s v + y,\, t) = 0 .
\end{equation}
As first step, we obtain the bifurcation equation in the kernel subspace by means of a Lyapunov--Schmidt reduction. To this end, consider the equation $Q F = 0$. By (C0), we have $Q F(0,0,t^*,0) = 0$ and the partial Jacobian in $y$ is $Q\,\nabla^2_\xi f(0,t^*)\,Q = Q H Q = H|_{\ran H}$, which is invertible by (C1). Therefore, by the implicit function theorem there is a smooth solution $y = y(s,t,\eps)$ near $(0,t^*,0)$ with $y(0,t^*,0)=0$, solving
$Q F(s,y(s,t,\eps),t,\eps)=0$. Substituting $y$ in \eqref{eq:full_bifurcation_equation_H}, we obtain a reduced bifurcation equation on $\operatorname{Ker}(H)$. Now define the scalar function
\begin{equation*}
  \phi(s,t,\eps) := \inner{v}{F(s,\,y(s,t,\eps),\,t,\,\eps)} .
\end{equation*}
A fold of $\Phieps$ (or equivalently of $f$) corresponds to the condition $\phi = \partial_s\phi = 0$. Let us consider a third order Taylor-expand of $\phi$ about the point $(0,t^*,0)$. Since $(d\exp_{x^*})_0 = \mathrm{Id}$ and $\exp_{x^*}(0)=x^*$ for every
$t$, we find $\nabla_\xi f(0,t) = d\Phieps(x^*,t)$ and
$\nabla_\xi\big(\partial_\eps f\big)(0,t^*) = \grad\Psi(x^*,t^*)$; Furthermore, the third derivative along $v$ satisfies
$\partial_s^2 f\text{-term} = \nabla^3\Phizero(v,v,v)$. Using these data together with
$\nabla_\xi f(0,t^*)=0$ and $\inner{v}{H\,\cdot}=\inner{Hv}{\cdot}=0$, we find:
\begin{align*}
  \phi(0,t^*,0) &= 0, \\
  \partial_s\phi(0,t^*,0)
    &= \inner{v}{H v} = 0, \\
  \partial_s^2\phi(0,t^*,0)
    &= \nabla^3\Phizero(x^*,t^*)(v,v,v) = A, \\
  \partial_t\phi(0,t^*,0)
    &= \inner{v}{\partial_t\grad\Phizero(x^*,t^*)} = B, \\
  \partial_\eps\phi(0,t^*,0)
    &= \inner{v}{\grad\Psi(x^*,t^*)} =: a_0 .
\end{align*}
The last line uses that $\inner{v}{H\,\partial_\eps y} = \inner{Hv}{\partial_\eps y}=0$. Writing $a_1 := \partial_s\partial_\eps\phi(0,t^*,0)$ for the mixed
coefficient, the expansion reads
\begin{equation}
  \label{eq:reduced}
  \phi(s,t,\eps)
  = \frac{A}{2}\,s^2 + B\,(t-t^*) + a_0\,\eps + a_1\, s\,\eps
    + O\!\big(s^3, (t-t^*)^2, \eps^2, s(t-t^*)\big)
\end{equation}
where we wrote explicitly only the terms we shall need later. Imposing $\partial_s\phi = 0$ yields the equation
\begin{equation*}
  A\,s + a_1\,\eps + O(s^2, s\eps, \eps^2) = 0
\end{equation*}
which, using $A\neq 0$ from (C2), gives the following first order expression for the spatial displacement $ s^*(\eps)$ of the critical point:
\begin{equation}
\label{eq:spatial_displacement}
 s^*(\eps) = -\frac{a_1}{A}\,\eps + O(\eps^2).
\end{equation}
Substituting $s=s^* (\eps)$ into $\phi = 0$ and
using \eqref{eq:reduced}, the term $\tfrac{A}{2}s^*t^2$ is $O(\eps^2)$ and
the cross term $a_1 s^*\eps$ is $O(\eps^2)$ so, at first order, the speciation time is the solution of the equation
\begin{equation*}
B\,(t-t^*) + a_0\,\eps + O(\eps^2) = 0 .
\end{equation*}
Solving for $t$ with $B\neq 0$ (C3) gives
\begin{equation*}
t^*(\eps) = t^* - \frac{a_0}{B}\,\eps + O(\eps^2),
\end{equation*}
which is \eqref{eq:displacement}. Smoothness of $x^*(\eps),t^*(\eps)$
follows from the implicit function theorem applied to the system
$\{\phi=0,\ \partial_s\phi=0\}$ in $(s,t)$, whose Jacobian at the base point is
\[
  \begin{pmatrix} \partial_s\phi & \partial_t\phi \\
                  \partial_s^2\phi & \partial_s\partial_t\phi \end{pmatrix}
  \Bigg|_{0}
  = \begin{pmatrix} 0 & B \\ A & \ast \end{pmatrix},
  \qquad \det = -AB \neq 0 .
\]
The non-vanishing determinant guarantees that the displaced critical point remains an $A_2$ fold for $\varepsilon$ in some interval $(-\varepsilon_0,\varepsilon_0)$. By definition of speciation, the other eigenvalues of the unperturbed Hessian are negative, therefore by the $\varepsilon$-continuity of the Hessian $H \Phi_\varepsilon$ there is an interval $(-\widetilde \varepsilon_0,\widetilde \varepsilon_0)$ in which they do not change sign. Restricting $|\varepsilon|<\varepsilon_0$ so that $s^*(\varepsilon)$ and $y(s,t^*,\varepsilon)$ remains inside this neighborhood yields the thesis.
\end{proof}

\begin{remark}
\label{rem:A3_to_A2}
Since perturbing a pitchfork bifurcation produces a saddle-node, we expect that $A_3$ folds are generically not preserved and instead unfold into $A_2$ folds.
\end{remark}

As mentioned above, in DMs the perturbation of interest is a score-approximation error. Suppose thus that the
learned score is $\widehat{S}_t = S_t + E$, with $E$ a smooth score error due to the network approximation. This scenario -- under the hypothesis that the error $E$ is the gradient of some scalar function -- corresponds to $S_t = \nabla_x \Phi_0 $ and $\varepsilon \nabla_x \Psi = E$,
and Proposition~\ref{thm:a2_perturbation} specializes as follows.

\begin{corollary}
\label{cor:score}
Under Assumption~\ref{ass:fold}, with score error $E$ as above small enough, the speciation time shifts at first order by
\begin{equation}
  \label{eq:score-shift}
  \delta t^*
  = t^*(\eps) - t^*
  \approx -\,\frac{g(v,E(x^*,t^*))}{B},
  \qquad B = g(v,\partial_t S_{t}(x^*)|_{t=t^*}).
\end{equation}
with $v$ as per Assumption~\ref{ass:fold}.
\end{corollary}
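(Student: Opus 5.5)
The corollary is a direct specialization of Proposition~\ref{thm:a2_perturbation}, so the plan is to identify the data of that proposition with the score-perturbation setting and read off formula~\eqref{eq:displacement}.

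First, set $\Phizero(x,t)=\log p_t(x)$, so that $\grad\Phizero = S_t$. By hypothesis the error is a gradient: $E=\eps\,\grad\Psi$ for some smooth scalar $\Psi$ on $U\times I$, with $\eps$ a bookkeeping smallness parameter. One could take $\eps=\|E\|_{C^k(U\times I)}$ and $\Psi$ the normalized potential. The learned score $\widehat S_t = S_t+E$ is then exactly $\grad\Phieps$ with $\Phieps=\Phizero+\eps\Psi$. This is a family of the form~\eqref{eq:family} with vanishing $O(\eps^2)$ remainder, so the smoothness and uniform bounds required there hold trivially. Assumption~\ref{ass:fold} is imposed on $\Phizero$, so Proposition~\ref{thm:a2_perturbation} applies. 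For $|\eps|<\eps_0$, i.e. for $E$ small enough in $C^k$, the perturbed field has an $A_2$ fold at $(x^*(\eps),t^*(\eps))$ depending smoothly on $\eps$.

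Second, substitute into~\eqref{eq:displacement}. The numerator is $\eps\, g(v,\grad\Psi(x^*,t^*)) = g(v,E(x^*,t^*))$. The denominator is $g(v,\partial_t\grad\Phizero(x^*,t^*))$. Since $\grad$ and $\partial_t$ commute on the fixed manifold with time-independent metric, this equals $g(v,\partial_t S_t(x^*)|_{t=t^*})=B$, which is nonzero by (C3). Dropping the $O(\eps^2)=O(\|E\|^2)$ remainder gives~\eqref{eq:score-shift}.

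The main subtle point is justifying the passage from a small error $E$ to the one-parameter family. One must check that the $\eps_0$ and the $O(\eps^2)$ constant of Proposition~\ref{thm:a2_perturbation} depend only on $C^3$ bounds of $\Psi$ near $(x^*,t^*)$. Those are the quantities entering the implicit function theorem steps (the Lyapunov--Schmidt reduction and the $2\times2$ system with determinant $-AB$). Normalizing $\Psi$ to have unit $C^3$ norm makes the approximation uniform in the direction of $E$, so "$E$ small" means small in $C^3(U\times I)$.

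It also helps to remark why only the component of $E$ along $v$ appears at first order. The components of $E$ in $\ran H$ are absorbed by the $y$-equation of the reduction. They enter $\phi$ only through $\inner{v}{H\,\partial_\eps y}=0$, so they contribute nothing at first order.
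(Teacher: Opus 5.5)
Your proposal is correct and follows the paper's route: the corollary is Proposition~\ref{thm:a2_perturbation} read with $\Phizero=\log p_t$ (so $\grad\Phizero=S_t$) and $\eps\,\grad\Psi=E$. The numerator of~\eqref{eq:displacement} then becomes $g(v,E(x^*,t^*))$ and, since $\partial_t$ and $\grad$ commute, the denominator becomes $B$. Your extra point, that $\eps_0$ and the $O(\eps^2)$ constant should be controlled uniformly by $C^3$ bounds of a normalized potential $\Psi$, is a sensible refinement that the paper leaves implicit when it says ``$E$ small enough''.
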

\noindent While this result does not provide a quantitative estimate since in practice we do not know the exact score, nonetheless it guarantees that small errors in its approximation do preserve the existence of the speciation events with bounded shifts in time and location. In Sect.~\ref{subsec:learned_trimodal} we verify the estimate \eqref{eq:score-shift} with a known distribution.
\begin{remark}
If the error is orthogonal to the critical direction at the speciation point, we have $\inner{v}{E(x^*,t^*)} = 0$ and therefore $\delta t^* = o(\eps)$. In other words, to first order the speciation time is insensitive to score error transverse to $v=\ker(H)$.
\end{remark}

\section{An illustrative case: data from vMF distributions on the $\mathbb{S}^2$ manifold}
\label{sec:illustrative_s2}

We consider here different mixtures obtained from vMF distributions.  
Details about the vMF distribution and on the
procedure for obtaining an analytic solution of the Fokker--Planck equation with $p_0=p_{\text{vMF}}$ are provided in Appendix~\ref{app:appA}.

\subsection{Bimodal distribution}
\label{sec:bimodal_s2}
We consider a mixture of vMF distributions $p_0^{(1)}(x)$ and $p_0^{(2)}(x)$ with the same concentration parameter $\kappa$ and two different mean directions $\mu _1$ and $\mu _2$, respectively.
We set $p_0(\textbf{x}) = w_1 p_0^{(1)}(\textbf{x}) + w_2 p_0^{(2)}(\textbf{x})$, with positive weights $w_1+w_2=1$ and, for simplicity, we set $w_1=w_2=1/2$ and $\beta=\pi$ such that the centers $\mu _1 = (-\sin(\alpha) , 0, \cos(\alpha))$ and $\mu _2=( - \sin(\alpha) , 0, - \cos(\alpha)) $ are  symmetric with respect to the equator of the sphere. 
 By linearity of the Fokker-Planck equation, $p_t = p_t^{(1)} + p_t^{(2)}$, where $p_t^{(1,2)}$ are given by the corresponding form of \eqref{eq:p_sphere_expansion}.  The complete solution then reads
\begin{equation}\label{eq:bimodal_expansion_sphere}
p_t(\theta,\phi)=\sum_{\ell =1}^ \infty
\frac{1}{4\pi} (2\ell +1) c_\ell(\kappa){\rm e}^ {-\ell(\ell+1)t/2} \dfrac{P_\ell(u_+(\theta,\phi)) + P_\ell(u_-(\theta,\phi))}{2}
\end{equation}
where $u_\pm = \pm \cos\theta\cos\alpha - \sin\theta\sin\alpha\cos\phi.$ Figure~\eqref{fig:bimodal_pt} shows the plots of $p_t(\theta,\phi=\pi)$ at different times, for a bimodal distribution with well separated centers (left) and more closely spaced centers (right).
Notice that the section $\phi=\pi$ corresponds to the grand circle passing through the centers of the initial distributions, on which the geodesic connecting them is located.
\begin{figure}[h]
\centering
\includegraphics[width=0.95\textwidth]
{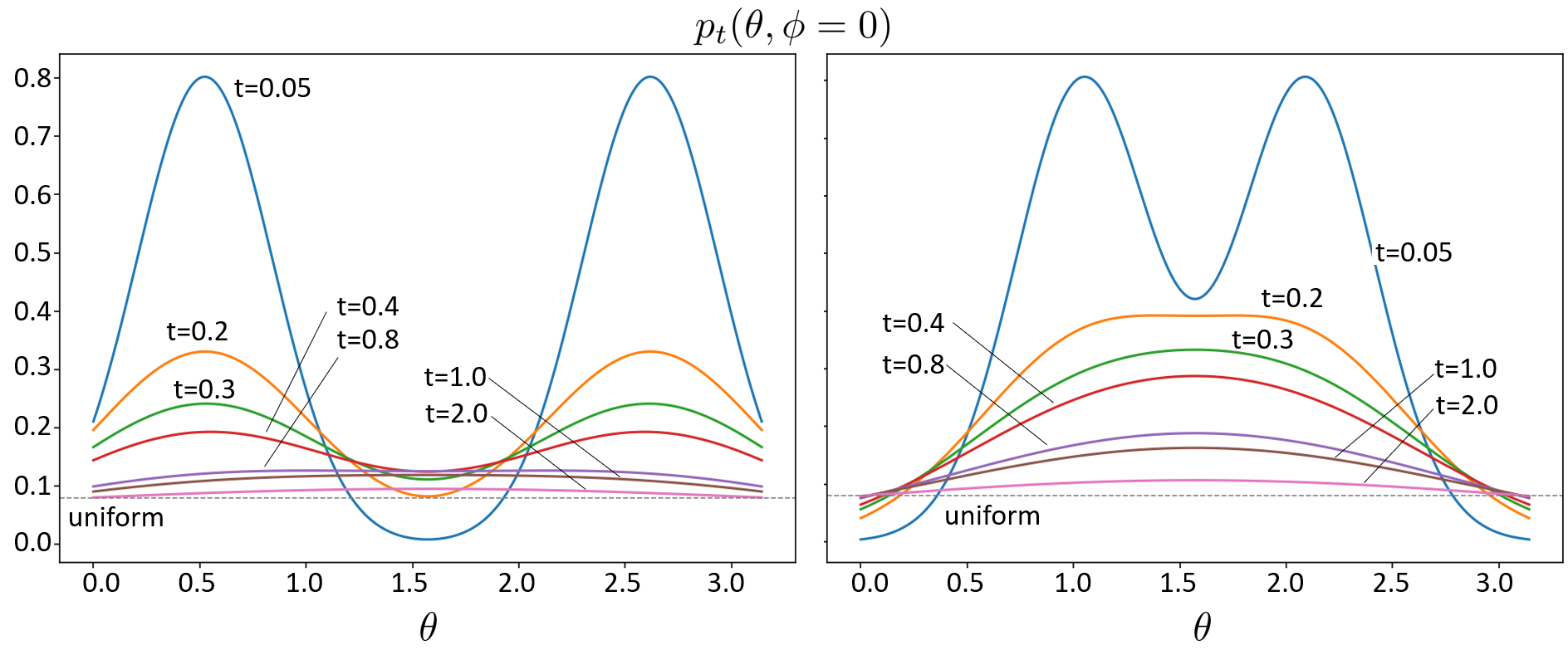}
\caption{Distribution $p_t$ as a function of the angle $\theta$ drawn at the section $\phi=\pi$, for an initial bimodal vMF density. Read along the reverse process,
the curves show the evolution from large $t$ — where $p_t$ is essentially the Riemannian uniform distribution $1/4\pi \approx 0.08$ — down to $t=0$. As $t$ decreases, the two data modes emerge as maxima, while the critical point at $\theta=\pi/2$ changes nature, from a maximum to a minimum separating the two data modes. Left: modes at $\alpha=\pi/6$; the change of nature at $\theta=\pi/2$ occurs around $t \approx 0.8-0.9$. Right: modes at $\alpha=\pi/3$; since the two centers are closer, the transition at the equator is more proximal to initial time, around $t \approx 0.2$. In both panels $\theta=0$ and $\theta=\pi$ remain local minima (geometrical modes).}
\label{fig:bimodal_pt}
\end{figure}

\noindent Following again the computations of~\ref{app:appA}, we compute the score of the bimodal vMF distribution and  then we seek numerically 
the locus $\Xi$ of its zeros, reported in Figure~\ref{fig:sphere_bifurcation}
(left: $\alpha=\pi/6$, right:  $\alpha=\pi/3$) 
again in the section $\phi=\pi$. Pitchfork bifurcations emerge as expected in symmetric scenario and,  
in accordance with~\eqref{eq:bimodal_speciation_time}, the speciation time decreases as the angle $\alpha$ increases, since $D=\pi-2\alpha$. 
Notice that at the speciation points, proceeding backward in time from $t^*$, we have the nullcline equations $\theta \propto \sqrt{t^\star-t}$, in agreement with the normal form~\eqref{eq:cerf_score}.
\begin{figure}[h]
\centering
\includegraphics[width=0.55\textwidth]{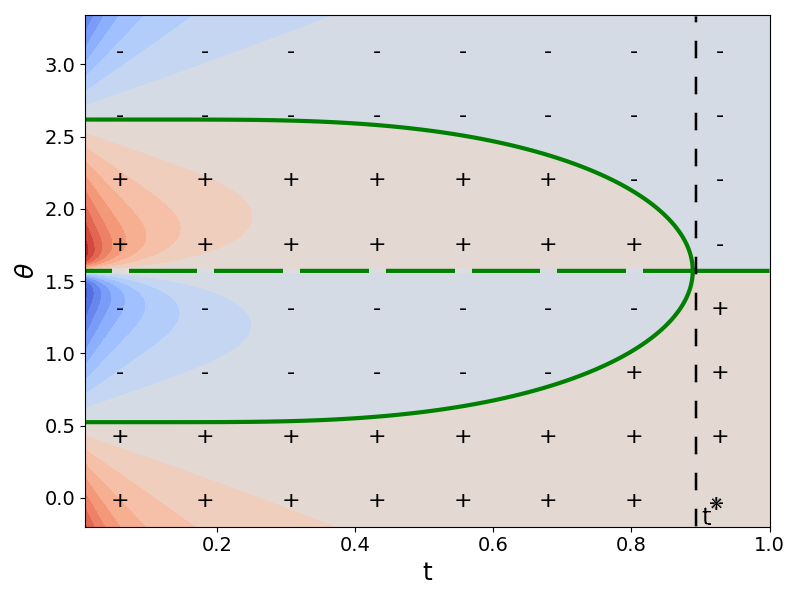}
\includegraphics[width=0.55\textwidth]{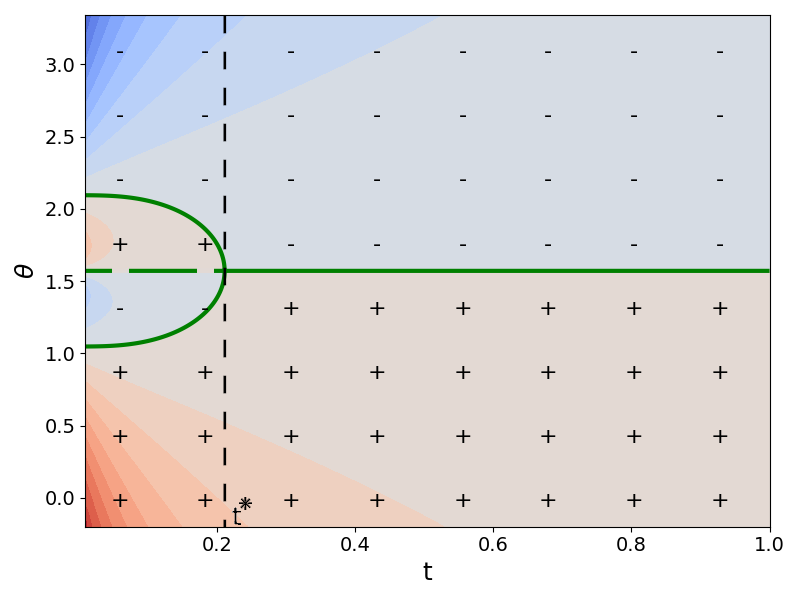}
\caption{Bimodal vMF distribution,  analytic score. Sign (symbols), values (colors) and zeroes (in green) of the $\theta$ component of the score, restricted on the great circle ($\phi=\pi$) connecting the centers of the bimodal distribution  for $\alpha = \pi/6$ (top) and $\alpha = \pi/3$ (bottom). The speciation times $t^* \approx 0.89$ (top) and $t^* \approx 0.21$ (bottom) are in correspondence with the critical points of Figure~\ref{fig:bimodal_pt} and are in good agreement with their theoretical estimates $t^* \approx 0.99$ and $t^* \approx 0.18$, obtained via \eqref{eq:bimodal_speciation_time} -- see Appendix~\ref{sec:vmf_asymptotics} for an estimate of the parameter $\sigma^2$ in the case of a vMF distribution. For small times $t$, the attractors correspond to the centers of the initial bimodal distribution. In both cases, after speciation the previous attractor $\theta=\pi/2$ -- the equator -- becomes a repeller (dashed line). Notice that, for readability, the two panels have different time ranges and colors are mapped on different scales.}
\label{fig:sphere_bifurcation}
\end{figure}

\null 
\noindent 
In Figure~\ref{fig:sphere_score_norm} we report the results of the numerical study of the stability of the points in $\Xi$ for different times
for the case $\alpha=\pi/6$. Notice the presence of the geometrical modes at $\theta=\pi/2$, 
$\phi=0$ and $\phi=2\pi$, which are always  minima of the distribution. 
The eigenvalues of the Hessian matrix of $p_t$ are shown in Figure~\ref{fig:bimodal_hessian}.

\begin{figure}[h!]
    \centering
    \includegraphics[width=.8\linewidth]{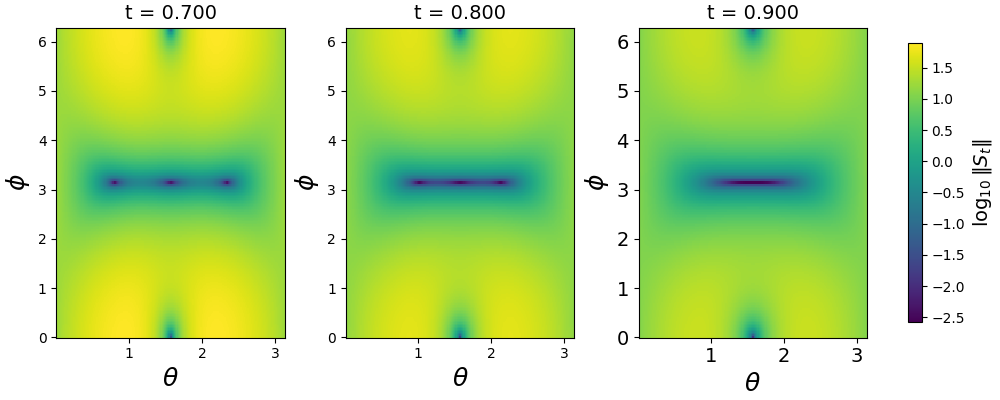}
    \includegraphics[width=.8\linewidth]{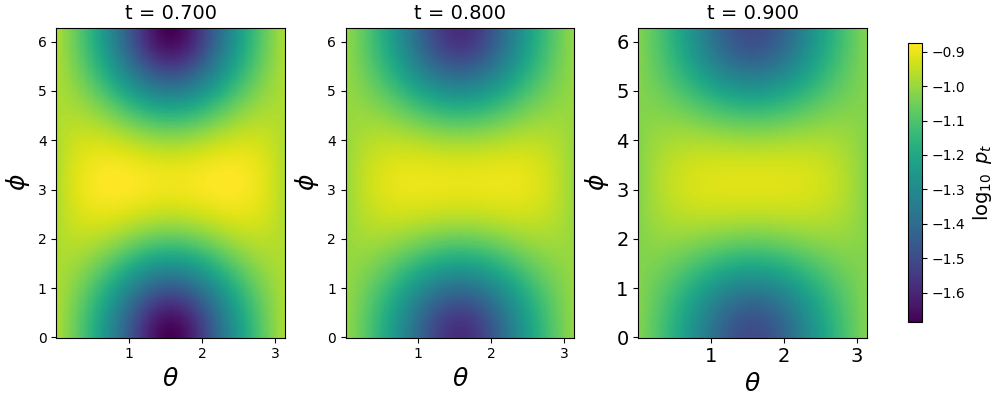}
    \caption{Bimodal vMF distribution, analytic score. Study of the stability of the equilibrium points of the score for the case $\alpha = \pi/6$, in relation to the critical points of $p_t$. Top: Logarithm of the Riemannian norm of the score $S_t$ for different times. The centroids of the dark blue regions correspond to the real zeros of $S_t$, and therefore to the critical points of $p_t$. At the speciation time $t^* \approx 0.89$ the new zeroes of the score emerge via a pitchfork bifurcation from the original critical point, see also Figure~\ref{fig:sphere_bifurcation}. Bottom: Logarithm of $p_t$ at different times. Before  speciation  ($t>t^* \approx 0.89$) the critical point at $\theta = \pi/2$ is a maximum and therefore an attractor. After speciation ($t<t^* \approx 0.89$) this point becomes a saddle, thus an hyperbolic point, while the new critical points at $\theta = \pi/2 \pm \pi/3$ are maxima -- stable equilibrium points of $S_t$. The minima at $\theta = \pi/2$, $\phi=0$ and $\phi=2\pi$ are geometric modes.}
    \label{fig:sphere_score_norm}
\end{figure}


\begin{figure}[h!]
    \centering
        \includegraphics[width=0.75\textwidth]{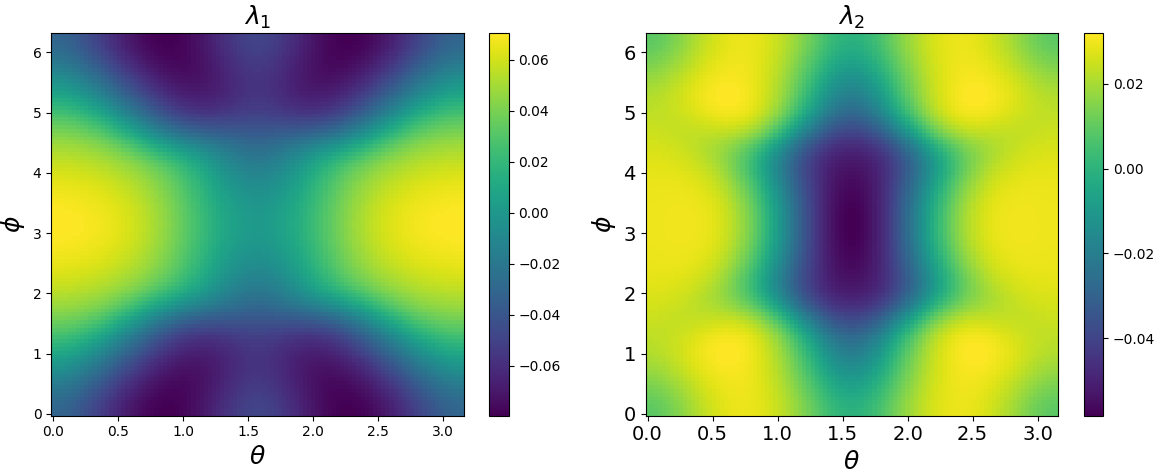}
    \caption{Bimodal vMF distribution, analytic score. Plot of the two eigenvalues $\lambda_{1,2}$ of $\operatorname{Hess} p_t$ at the critical time $t^*$. Notice that at the corresponding critical point $(\theta^*,\pi^*)=(\pi/2,\pi)$  the eigenvalue $\lambda_1$ vanishes,
    while $\lambda_2$ is negative, coherently with the fact that at speciation $\ker(\operatorname{Hess}p_t)$ is one dimensional.}
    \label{fig:bimodal_hessian}
\end{figure}

\newpage

\subsection{Multimodal  vMF distribution}
\label{sec:trimodal_theoretical}
We consider a mixture of three vMF distributions $p_0^{(1)},p_0^{(2)},p_0^{(3)}$ so that
$p_0 = \tfrac13\big(p_0^{(1)}+p_0^{(2)}+p_0^{(3)})$.
The centers are located at the vertices of a spherical isosceles triangle $C_1=(\tfrac{\pi}{6},\tfrac{3\pi}{4})$, $C_2=(\tfrac{\pi}{6},\tfrac{5\pi}{4})$ and $C_3=(\tfrac{5\pi}{6},\pi)$ respectively. In this case, the study of $\Xi$ (reported in Figure~\ref{fig:trimodal_bif}) is more complex: there is a first speciation time at $t^*_1 \approx 0.6$, at which the center $C_3$ becomes an attractor, and a second one at $t^*_2 \approx 0.12$, at which also $C_1$ and $C_2$ become attractors. The first bifurcation is a saddle-node, while the second one is a pitchfork. Figures \ref{fig:trimodal_s2_first_speciation} and \ref{fig:trimodal_s2_second_speciation} report the
numerical study of the stability.

\begin{figure}[h!]
\centering
        \includegraphics[scale=0.39]{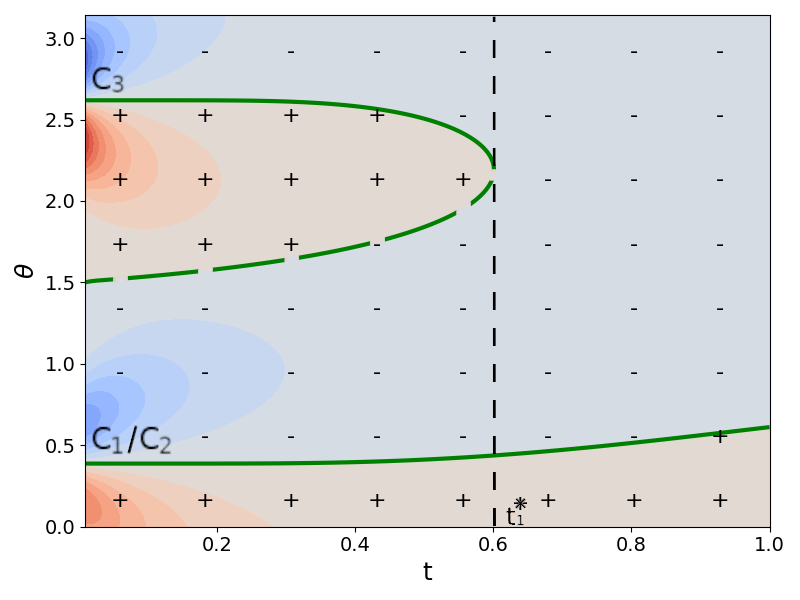}
        \includegraphics[scale=0.4]{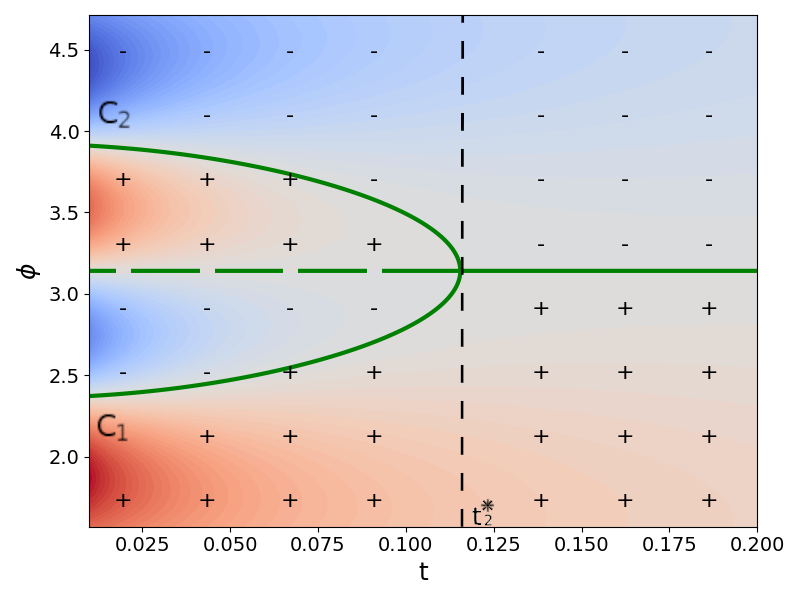}
\caption{Trimodal vMF distribution, analytic score. Plot of the sign (symbols), value (colors) and zeroes (in green) of the components of the score. Top: $\theta$-component at the section $\phi=\pi$. Near the first speciation time
$t^*_1$, the late time branch is shifted toward  $\theta = \pi/6$, the common polar angle of 
$C_1$ and $C_2$ and a saddle-node bifurcation generates the attractor of $C_3$ at $\theta = 5\pi/6$. Bottom: $\phi$-component at the section $\theta=\pi/6$. At the second speciation time $t_2^* \approx 0.12$ a pitchfork bifurcation separates $C_1$ and $C_2$. Notice that, for readability, the two panels have different time ranges and colors are mapped on different scales.
}\label{fig:trimodal_bif}
\end{figure}

\begin{figure}[h!]
\centering

        \centering
        \includegraphics[width=0.9\textwidth]{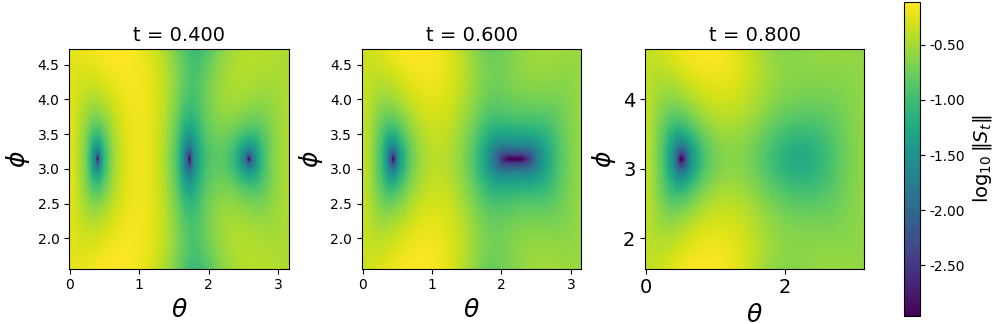}

        \centering
        \includegraphics[width=0.9\textwidth]{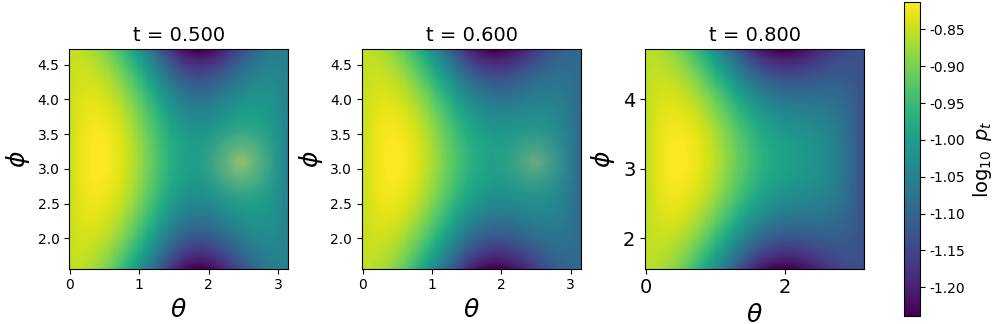}

    \captionsetup{skip=3pt}
\caption{Trimodal vMF distribution, analytic score. 
Behavior near the first speciation (for clarity only angles $\phi \in (\pi/2, 3\pi/2)$ are shown). Top: logarithm of the Riemannian norm of $S_t$. Bottom: logarithm of $p_t$. At $t_1^* \approx 0.6$, the score vanishes at the center $C_3$, which is an attractor. At this time, a second attractor appears at the midpoint of the geodesics through $C_1$ and $C_2$, which are not yet distinguished. The two attractors are separated by a saddle near $(\tfrac{7\pi}{12},\pi)$. This corresponds to a saddle-node bifurcation: the original attractor at $\theta=\pi/6$ persists, while a saddle-node pair emerges near $(\tfrac{5\pi}{6},\pi)$, see also Figure~\eqref{fig:trimodal_bif}. The attractor of the $C_1$-$C_2$ cluster is a geometric mode forced to exist by Poincar\`e-Hopf equality~\eqref{eq:morse} after the $C_3$ speciation.
}\label{fig:trimodal_s2_first_speciation}
\end{figure}

\begin{figure}[h!]
\centering

        \centering
        \includegraphics[width=0.95\textwidth]{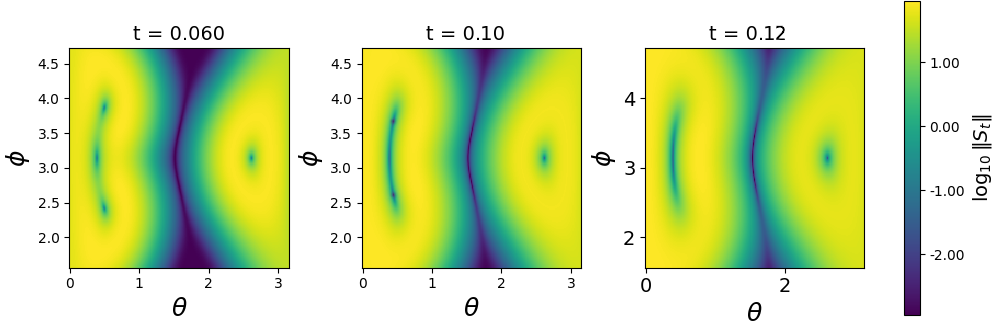}

        \centering
        \includegraphics[width=0.95\textwidth]{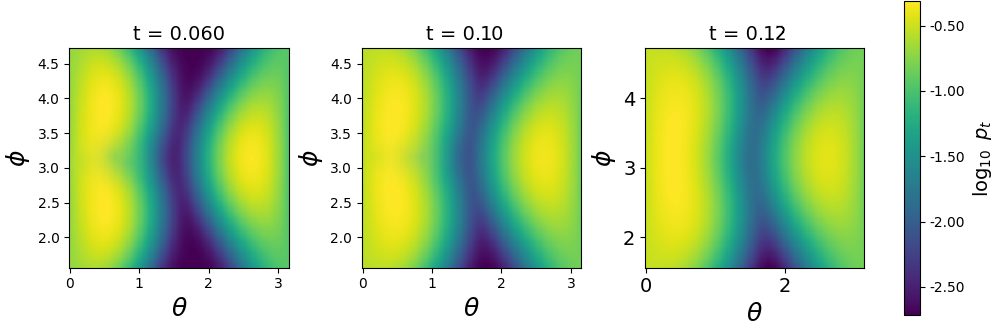}

\caption{Trimodal vMF distribution, analytic score. Behavior after (left), near (middle) and in correspondence (right) of the second speciation at $t^*_2 \approx 0.12$. For clarity only latitudes $\phi \in (\pi/2, 3\pi/2)$ are shown. Top row: logarithm of the Riemannian norm of $S_t$. Bottom row: logarithm of $p_t$. The coarse-grained geometric mode appeared after the first speciation time at $(\pi/6,\pi)$, representing the $C_1$-$C_2$ cluster, is now sifted into two separate attractive basins.
}\label{fig:trimodal_s2_second_speciation}
\end{figure}

\newpage

\section{Experiments with score approximated via neural networks}
\label{sec:numerical_experiments}

In this section we 
present numerical experiments where we
use a neural network to approximate the score. 
Before presenting them, we deem interesting to provide some details
regarding the numerical techniques that we used to obtain the computational results. 

\subsection{Numerical techniques}
\label{sec:numerical_techniques}

The simulation of diffusion on manifolds has been studied in literature by methods based on the exponential map, such as the Geodesic Random Walks (GRWs) of \cite{debortoli2022}. These methods evolve points by sampling tangent vectors and mapping them back to the manifold through the exponential map (or approximating it trough a retraction). Here we exploit the knowledge of the pullback and of the intrinsic dimension of the manifold, to operate directly in a chart: this avoids the computationally expensive projection on the manifold via the exponential map. 

\subsubsection{Numerical integration of the forward SDE}
\label{subsec:forward_sde_integration}
 We integrate the forward SDE using the Euler--Maruyama scheme \cite{NumericalSolutionSDE}. The full integration procedure is depicted in Algorithm~\ref{algo:forwardsde}. If one disposes of an 
 analytic expression of the metric $g$ in coordinates, this latter can be directly used. More in general, if the manifold is embedded in an ambient space,
  the knowledge of the embedding map~$F$ (for example obtained via a network or an embedding algorithm) allows to leverage automatic differentiation to numerically compute~$g=J_F^T J_F$, with $J_F$ Jacobian of the embedding map. 

\begin{algorithm}[htb]
\caption{Forward Riemannian Diffusion (FRD) in Local Coordinates}
\label{algo:forwardsde}
\begin{algorithmic}[1]

\Require Metric $g$, initial point $x_0 \in \mathbb{R}^n$ (local coordinates), time horizon $T$, number of steps $N$

\State $\Delta t = T / N$
\State $x \leftarrow x_0$

\For{$k = 0, \dots, N-1$}

    \State $t_k = k \Delta t$
    
    \State Compute $ g^{ij}(x), \Gamma^i_{jk}(x)$
    
    \State Compute geometric drift: 
    $b^i(x) = \Gamma^i_{jk}(x)\, g^{jk}(x)$
    
    \State Sample Gaussian noise:
    $Z_{k+1} \sim \mathcal{N}(0, I_d)$
    
    \State Euler--Maruyama update:
    $x^i \leftarrow x^i
    + b^i(x)\, \Delta t/2
    + \sqrt{\Delta t}\, Z^i_{k+1}$
    
    \State \textbf{(Optional)} Apply periodic conditions/Change chart (see Remark~\ref{remark:charts_boundary_conditions})

\EndFor

\hspace*{-1.2cm}\Return $x$

\end{algorithmic}
\end{algorithm}

\begin{remark}
\label{remark:charts_boundary_conditions}
Working in a local chart, the Euler--Maruyama update of Algorithm~\ref{algo:forwardsde} may produce a point outside the chosen chart. In the case of an almost global chart, as the spherical polar coordinates for the unit sphere, this problem can be solved applying suitable periodic boundary conditions. In general, a change of chart is needed. If the intrinsic manifold of the data is unknown, an atlas of local charts can be learned e.g. using a mixtures of VAEs whose encoders serve as local charts of the manifold \cite{alberti2024manifold}.  
In our previous work~\cite{causin2025estimatingdatasetdimensionsingular}
we provided an estimation algorithm for the intrinsic dimension of the manifold to guide the construction of the atlas.
\end{remark}

\subsection{Training of the Score Network}

Score-based generative models rely on learning the Stein score $S(x,t) = \nabla_x \log p_t(x)$. Since obtaining $p_t$ directly is often unfeasible, a neural network $S_\Theta(x,t)$ is trained by learning the set of parameters $\Theta$ to approximate $\nabla_x \log p_t(x)$ via score matching \cite{debortoli2022}. 
The key identity underlying the training of the network is the denoising score matching (DSM) identity:
\begin{equation*}
\begin{split}
\nabla_x \log p_t(x)
= \int_{M} 
\nabla_{x_t} \log p_{t|s}(x_t \mid x_s)
\, p_{s|t}(x_s \mid x_t)
\, d\mathrm{Vol}_{M}(x_s)
\end{split}.
\end{equation*}
From this identity, the score can be characterized as the minimizer of:
\begin{equation*}
\mathcal{L}_t(S_\Theta) 
= \mathbb{E}_{X_0, X_t} \left[ 
\left\| S_\Theta(X_t,t) - \nabla_{x_t} \log p_{t|0}(X_t \mid X_0) \right\|^2
\right].
\end{equation*}
To learn the score on a interval $[0,T]$, the full training objective integrates over time:
\[
\mathcal{L}(\theta)
= \int_0^T \zeta(t)\, \mathcal{L}_t(S_\Theta)\, dt,
\]
where $\zeta(t)$ is a weighting function.
Under suitable regularity assumptions on $p_{t|s}(x_t|x_s)s(x_t)$, an equivalent intrinsic formulation which avoids the explicit computations of the gradients of the transition densities can be given \cite{debortoli2022}, with implicit score matching  loss:
\begin{equation*}
\mathcal{L}_{\mathrm{ISM}}(S_\Theta)
=
\mathbb{E}_{t \sim [0,T], X_t}
\left[
\frac{1}{2} \|S_\Theta(X_t,t)\|^2 + \mathrm{div}S_\Theta(X_t,t)
\right].
\end{equation*}

Given these components, the training procedure follows the standard score-matching pipeline: first, we generate noisy samples by simulating the forward diffusion, as discussed in~\ref{subsec:forward_sde_integration}, then we optimize the score network $S_\Theta$ by minimizing $\mathcal{L}_{ISM}$ (see Algorithm~\ref{algo:training}).

\begin{algorithm}[ht]
\caption{Riemannian Implicit Score Matching Training}
\label{algo:training}
\begin{algorithmic}[1]

\Require Metric $g$, initial data distribution $p_0$, score network $S_\Theta$, simulation time $T$, learning rate~$\eta$

\For{iteration $k = 1, \dots, N_{\text{iter}}$}

    \State Sample $X_0 \sim p_0$
    
    \State Sample $t \sim \mathcal{U}([0, T])$
    
    \State Simulate diffusion on $M$: $X_t = \mathrm{FRD}(X_0, t)$ (see Algorithm~\eqref{algo:forwardsde})
    
    \State Evaluate score network : $v = S_\Theta(t, X_t) \in T_{X_t}M$
    
    \State Compute Riemannian norm: $\|v\|^2_{g} = v^\top g(X_t) v$
    
    \State Compute Riemannian divergence: 
    $
    \mathrm{div}_g(S_\Theta)(X_t)
    =
    \frac{1}{\sqrt{|g|}}
    \partial_i\left(\sqrt{|g|}\, v^i \right)$

    \State Compute loss:
    $\mathcal{L}_{ISM}
    =
    \frac{1}{2} \|v\|^2_{g}
    +
    \mathrm{div}_g(S_\Theta)(X_t)
    $
    
    \State Update: $\theta \leftarrow \theta - \eta \nabla_\theta \mathcal{L}$

\EndFor

\end{algorithmic}
\end{algorithm}

\subsection{Enhancing numerical stability}
Numerical computations in finite precision arithmetic are known to introduce rounding errors, loss of symmetry, and ill-conditioning \cite{HighamAccuracyStability,GolubMatrix}. We discuss here the mitigation strategies we adopted to face instabilities 
arising in the numerical integration of the forward SDE while learning the score. 

\paragraph{Metric Regularization}

Given the embedding map $F$, the pullback metric is $g = J_F^\top J_F$, where $J_F$
denotes the Jacobian of $F$. This matrix is symmetric positive semidefinite by construction, and positive definite whenever $J_F$
has full column rank. In practice, however, finite-precision arithmetic introduces two main numerical pathologies. First, floating-point rounding breaks the exact symmetry of $g$, since the $(i,j)$
and $(j,i)$ entries of $J_F^\top J_F$
are computed via different sequences of operations. Second, when $J_F$ has small singular values $\sigma_i$, the corresponding eigenvalues $\lambda_i = \sigma_i^2$
of $g$
are extremely small, since the pullback construction squares the singular values and therefore squares the condition number
$\mathrm{cond}(g) = \mathrm{cond}(J_F)^2$, 
which can make computations involving $g$
or its derivatives - such as the Christoffel symbols - highly unstable.
We address these issues with two complementary strategies:
\begin{itemize}
    \item Symmetrization: replacing $g$
with $\frac{1}{2}(g + g^\top)$
eliminates asymmetry due to floating-point errors exactly, at negligible computational cost
    \item Tikhonov regularization: adding a small multiple of the identity, $g \leftarrow g + \varepsilon \operatorname{Id}$
 for $\varepsilon > 0$ (e.g.
 $\varepsilon = 10^{-6}$), shifts all eigenvalues away from zero and ensures strict positive definiteness.
\end{itemize}
Combining both, we replace $g$ with
$\tilde{g} = \tfrac{1}{2}(g + g^\top) + \varepsilon \operatorname{Id}$.
The matrix $\tilde{g}$
is symmetric by construction and strictly positive definite for any $\varepsilon > 0$, since  $g$ is positive semidefinite, $x^\top g\, x \geq 0$ for all $x$, so
$$x^\top \tilde{g}\, x = \frac{1}{2} \left( x^\top g x + x^\top g^T x  \right) + \varepsilon \|x\|^2 \geq \varepsilon \|x\|^2 > 0 \qquad \forall\, x \neq 0.$$

\paragraph{Metric Inversion}
For small $\varepsilon$ in the Tikhonov regularization, the condition number of $g$ is $\operatorname{cond}(g) = \lambda_{\max} / \lambda_{\min} \approx \left(\sigma_{\max} / \sigma_{\min}\right)^2$, namely the the conditioning of the regularized metric tensor is approximately the square of the conditioning of the Jacobian. For large condition numbers, inverting $\tilde g$ directly is not numerically stable. To improve the numerical stability, we resort to a Cholesky factorization $\tilde g = LL^T$, which for semi-positive definite matrices -- such as $\tilde g$ -- is backward stable \cite{HighamAccuracyStability}.
Therefore, instead of computing $\tilde g^{-1}$ directly, we compute the inverse matrix as $\tilde{g}^{-1} = (L^{-1})^T L^{-1}$.

\paragraph{Volume element via Log-Det stabilization}
While the stability of the determinant is partially ensured by regularizing $g$, in case of small eigenvalues finite precision arithmetic may lead in any case to instabilities. Instead of computing the volume element directly as $\sqrt{\det(\tilde g)}$, we obtain this quantity using a Log-Det stabilization
\begin{equation*}
\log \det(\tilde g) = 2 \sum_i \log L_{ii}
\end{equation*}
where $L$ is the Cholesky factor of $\tilde{g}$. Then we find
$\sqrt{\det(\tilde g)} = \exp(\sum_i \log L_{ii})$. 
Log-determinant formulations are widely used in machine learning due to improved stability and scalability \cite{han2015large}. 

\paragraph{Eigenvalue Decomposition for Matrix Square Roots} The inverse metric square root $\tilde g^{-1}$ is computed via diagonalization, first computing $
\tilde g^{-1} = Q \Lambda Q^\top$ and then $
\sqrt{\tilde g^{-1}} = Q \sqrt{\Lambda} Q^\top$, with $\tilde g^{-1}$ obtained from the Choleski factorization of $\tilde g$. Stability relies on prior Tikhonov regularization of $g$ ensuring $\lambda_i \ge \varepsilon$.

\subsection{Numerical experiments}
We train a neural network to learn an approximation of the score from 
data points from the different  distributions addressed above. 
If not differently specified, we used a fully connected network with 7 layers and smooth activation functions. The code is available at \url{https://github.com/alessiomarta/speciation_theory_compact_riemannian_diffusion_models}.

\medskip 

\noindent We compare the numerical findings with the theoretical results, which ensure that limited perturbations in the score yield
bounded perturbations in the time and location of speciations.
We also address realistic datasets
with unknown analytic distribution.
The computed loci $\Xi$ 
are obtained by monitoring the change of the sign of the score on a fine regular grid.
In all the plots, the wiggles in the curves are dependent on the score actually learned by the net and on the discrete lattice used to monitor sign changes. 
For clarity, we do not show the geometric modes at the antipodes. 

\subsubsection{ Bimodal vMF distribution on $\mathbb{S}^2$}
In this experiment we consider points from the bimodal vMF distribution already discussed with analytical score in   Section~\ref{sec:bimodal_s2}. We begin our study for the $\alpha=\pi/6$ scenario. A generated trajectory is depicted in Figure~\ref{fig:globi} showing the formation of an elbow point in correspondence of the speciation time and the final commitment to target.
\begin{figure}[H]
    \centering
\includegraphics[width=.7\textwidth]
    {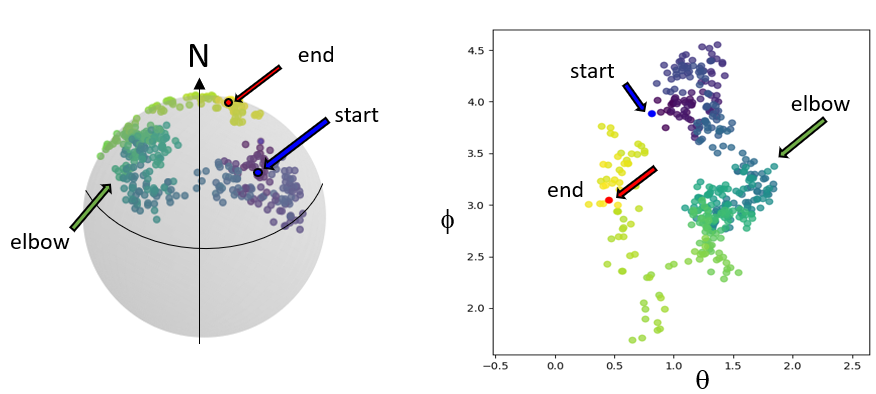}
    \caption{Bimodal vMF distribution, learned score. Trajectory generated from a point sampled at about $\pi/4$, $\phi=6/5 \pi$. First the trajectory heads to the equator, after speciation the equator becomes unstable and repels the process so that the trajectory commits to a point near the north pole. Time is parametrized by the colors of the standard viridis map.}
    \label{fig:globi}    
\end{figure}

Figure~\ref{fig:driftvectors} shows the learned score field before ($t = 1.6$) and after ($t=0.8$) the speciation time ($t^*  \approx 0.89$) and at the end of the reverse process ($t=0$). At the beginning of the generative process the trajectories are pushed towards the intersection of the geodesic connecting the centers of $p_0$ and the equator; after  speciation, the trajectories are attracted by the two original centers. The equilibrium point on the equator becomes unstable, consistent with a pitchfork bifurcation.

\begin{figure}[H]
    \centering
    \includegraphics[width=.475\textwidth]{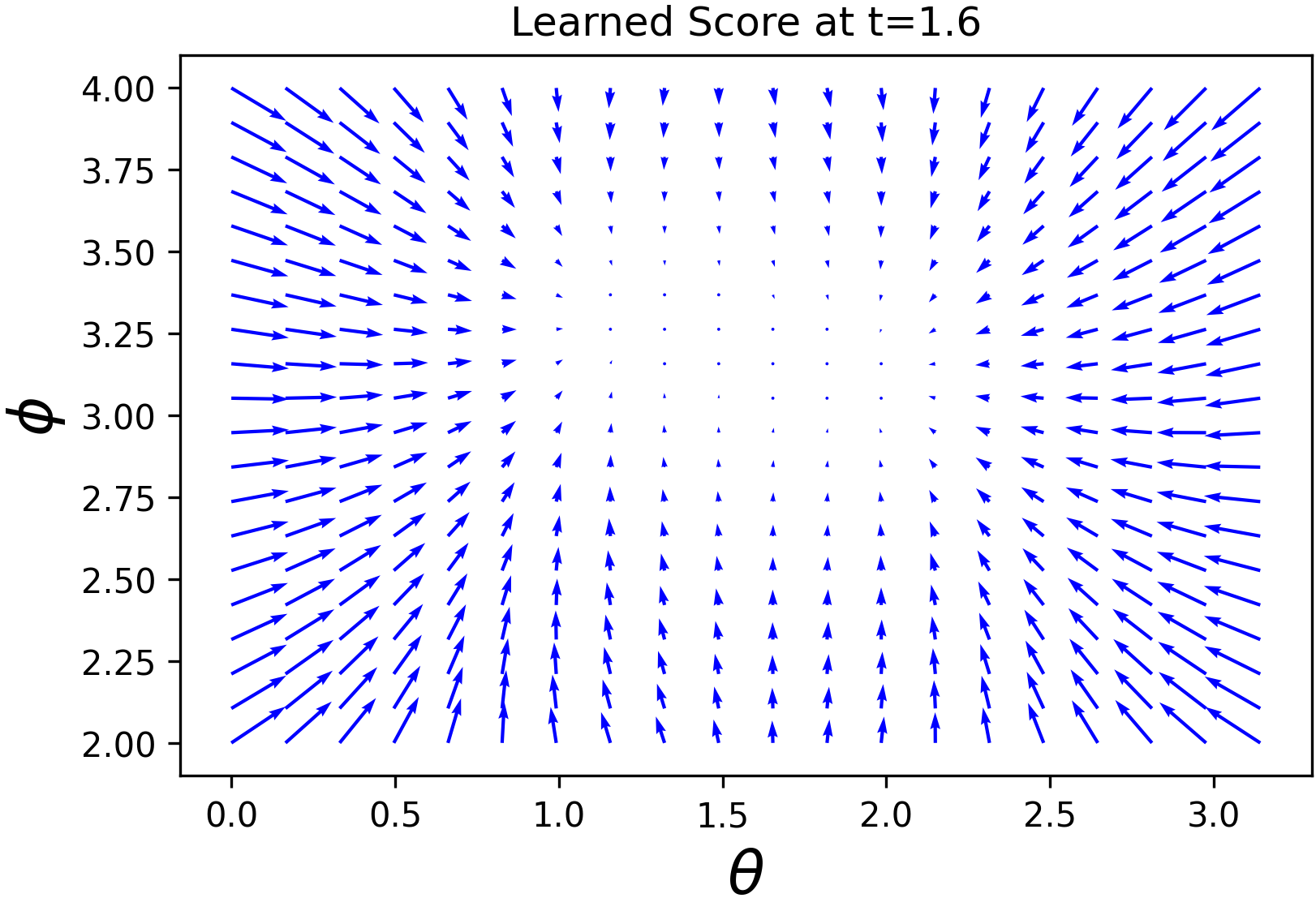} \quad 
    \includegraphics[width=.475\textwidth]{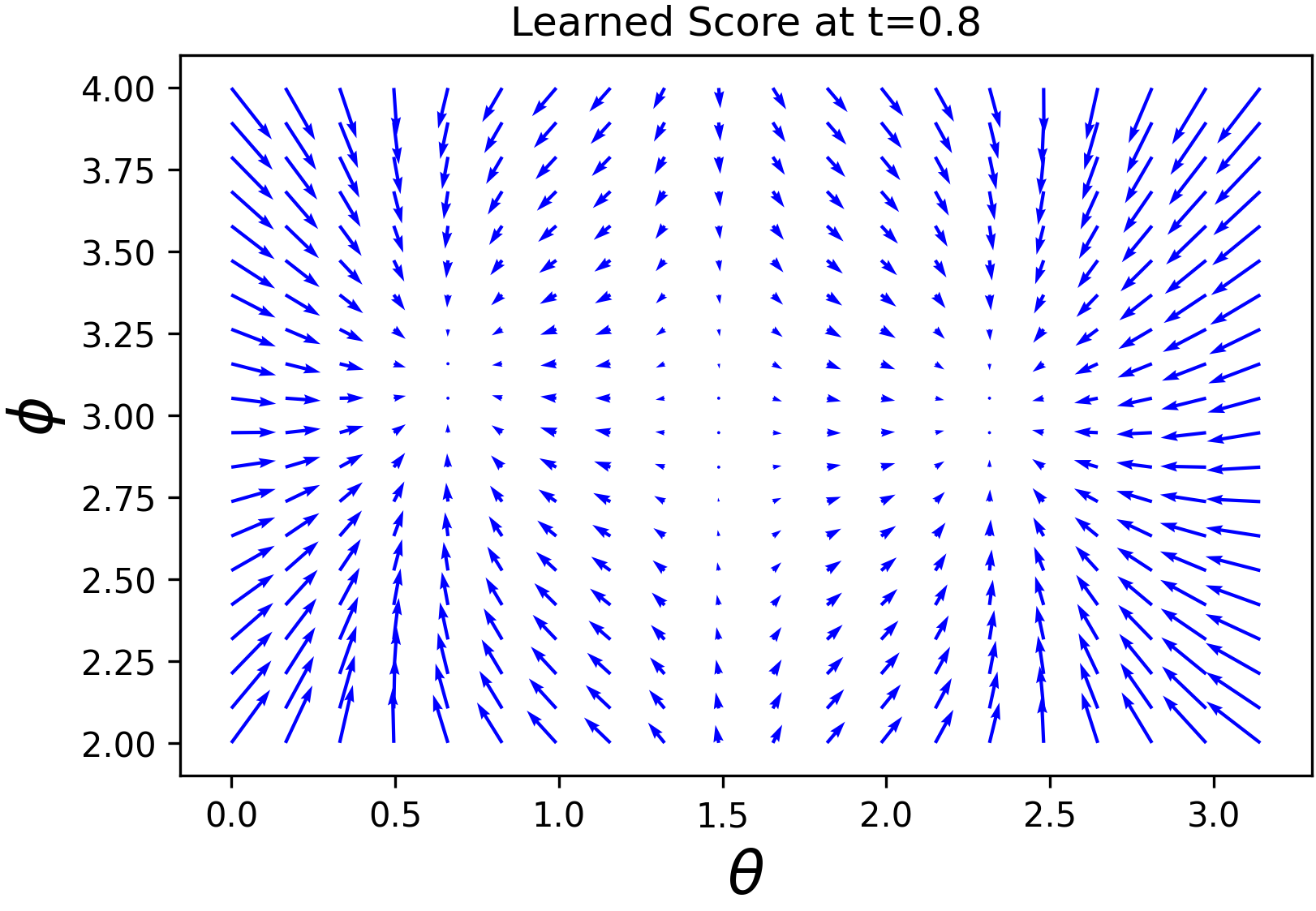}
    \includegraphics[width=.475\textwidth]{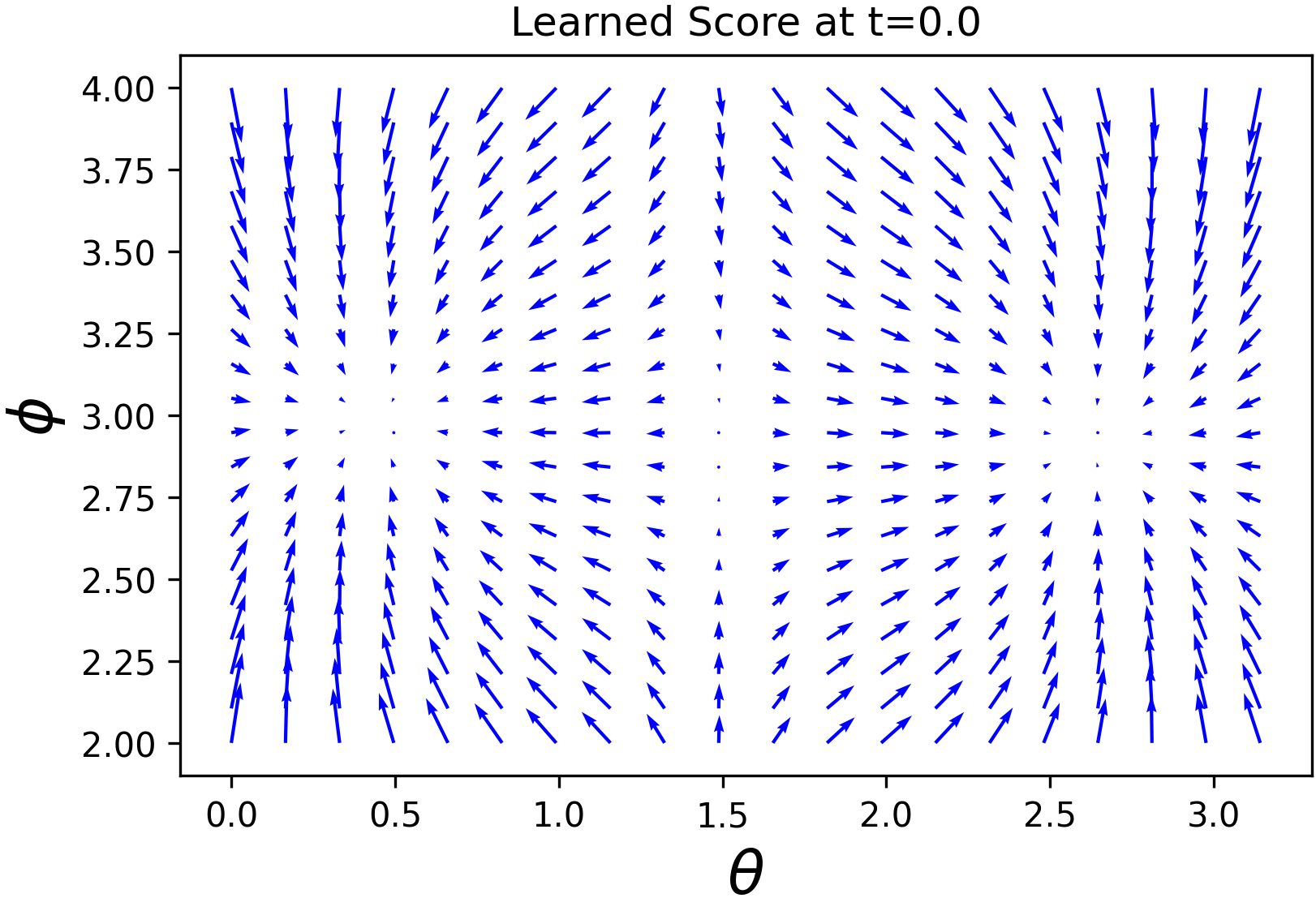}    
    \caption{Bimodal vMF distribution, learned score field before (top left), and after speciation time (top right) and at the end of the reverse process (bottom). For clarity only azimuthal angles $\phi \in (2,4)$ are shown.}\label{fig:driftvectors}
\end{figure}

Repeating the numerical experiment for the other configuration of the centers considered in Section \ref{sec:bimodal_s2} ($ \alpha=\pi/3$) we obtain again results in good agreement with the theoretical findings.
In Figure~\ref{fig:learned_bimodals_bifurcations} we show
the bifurcation diagram obtained
numerically via the learned score. 

\begin{figure}[h]
\centering
        \includegraphics[width=.49\textwidth]{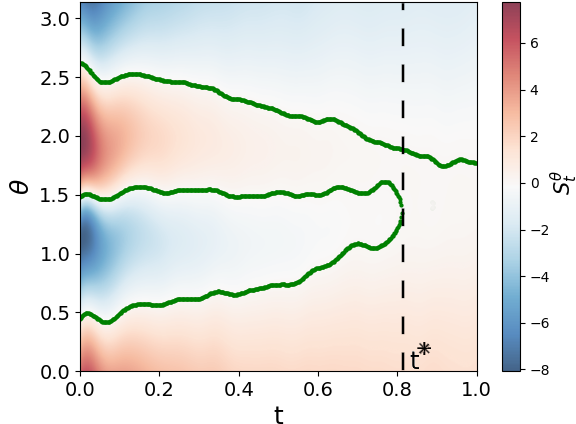}
        \includegraphics[width=.49\textwidth]{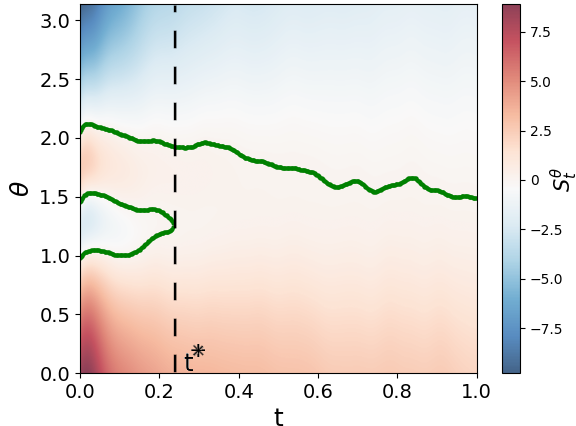}
\caption{Bimodal distribution, learned score. Zeros (green) and values (color) of the $\theta$ component of the  learned score for. Left ($\alpha=\pi/6$): the speciation time $t_{learned}^* \approx 0.86$ is in good agreement with the theoretical one ($t_{theoretical}^* \approx 0.89$) -- see Figure~\ref{fig:sphere_bifurcation}. Right ($\alpha=\pi/3$):  
the pitchfork unfolds into a $A_2$ bifurcation and the numerical speciation time is in good agreement with the theoretical speciation time ($t^\star_{learned} \approx 0.23$ vs $t^\star_{theoretical} \approx 0.21$)
In both cases, the approximation of the score given by the network breaks the perfect pitchfork, as discussed in Remark~\ref{rem:A3_to_A2}. The original stable branch persists, while a secondary stable (bottom branch of the pair)–unstable (top branch of the pair) pair is created through a saddle-node. 
}\label{fig:learned_bimodals_bifurcations}
\end{figure}

\subsection{Trimodal vMF distribution}
\label{subsec:learned_trimodal}
\paragraph{Equal weights}
In this experiment we consider points from the trimodal vMF distribution of Section~\ref{sec:trimodal_theoretical}. All the centers are given the same weight. In Figure~\ref{fig:approx_score_trimodal_triangle} we show the bifurcation diagram of the learned score, for its $\theta$ (left) and $\phi$ (right) components, respectively. The speciation happening at $t_1^*$ satisfies Assumption \ref{ass:fold}. 
At the analytic critical point $t^*=t_1^* \approx 0.6$ and $(\theta^*,\phi^*)=(7\pi/12,\pi)$ found in Section~\ref{sec:trimodal_theoretical}, we have $\partial_t S_t^{analytic}(x^*,t^*) \approx (10.9131,4.0304)$ and $E(x^*,t^*)  = S_t^{learned}(x^*,t^*)  - S_t^{analityc}(x^*,t^*) \approx (-0.2049,0.0200)$. Putting this together with $\operatorname{ker}\operatorname{Hess} (x^*,t^*) = \operatorname{span}((1,0))$, we can quantify  the discrepancy $\delta t^*$ between the theoretical and the learned speciation time via Corollary~\ref{cor:score}.  This yields $\delta t^* \approx 0.019$ and consequently the estimate $t^*_{learned} \approx 0.58$, which is in good agreement with the bifurcation time learned by the network (see Figure \ref{fig:approx_score_trimodal_triangle}, left panel).

\begin{figure}[h]
\centering
        \includegraphics[width=.49\textwidth]{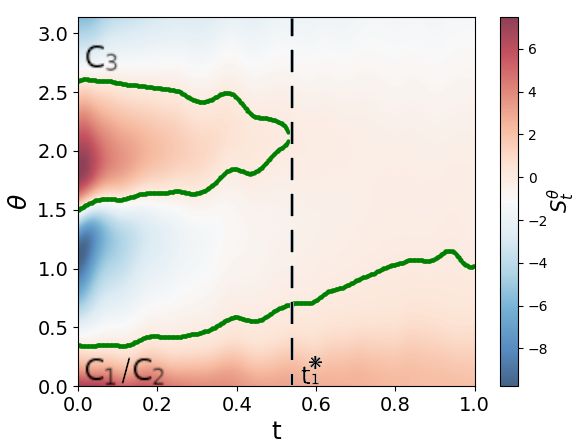}
        \includegraphics[width=.49\textwidth]{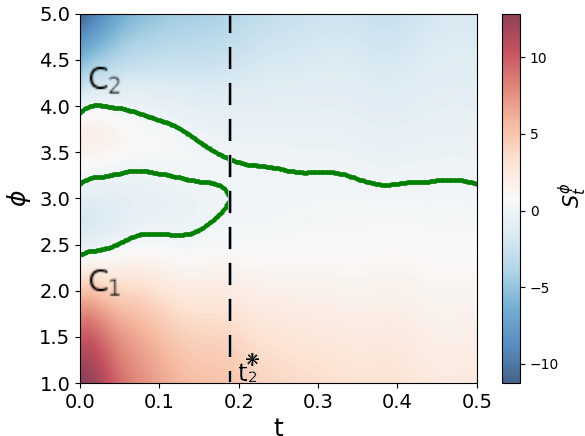}
\caption{Trimodal vMF distribution with equal weights, learned score. Zeros (green) and values (colors) of the components of the learned score.
Left: $\theta$ component at first speciation. As in the analytic model (cf.  Figure~\ref{fig:trimodal_bif}), the late-time branch is steered towards the cluster at $\theta = \pi/6$, while the center at $\theta = 5/6 \pi$ speciates via a saddle-node. Right: $\phi$ component at second speciation. The perfect pitchfork of the analytic solution unfolds into a saddle node due to the approximation of the score. The first speciation time is well approximated, the second (very close to the end of the reverse process) is affected by a larger error. }\label{fig:approx_score_trimodal_triangle}
\end{figure}

\paragraph{Asymmetric weights}
The previous experiment studied a mixture with non-symmetric centers. Here we consider a trimodal distribution built as a mixture of equidistant vMF distributions centered on the equator, with one center $C_1$ at $(\theta_1,\phi_1) = (\pi/2, \pi/3)$ and the other two centers $C_2,C_3$ at a $2\pi/3$ distance form $C_1$, namely with $(\theta_2,\phi_2) = (\pi/2, \pi)$ and $(\theta_3,\phi_3) = (\pi/2, 4\pi/3)$. We run two numerical experiments: first we consider the equal weights scenario, with $w_1=w_2=w_3=1/3$; then the asymmetric case $w_1 = 1/2$, $w_2=w_3=1/4$. 
Figure~\ref{fig:approx_score_trimodal_weights2} shows the result of this experiment
comparing the bifurcation diagrams 
for equal (left) and unequal (right)
weights. 
\begin{figure}[h]
\centering
        \includegraphics[width=.49\textwidth]{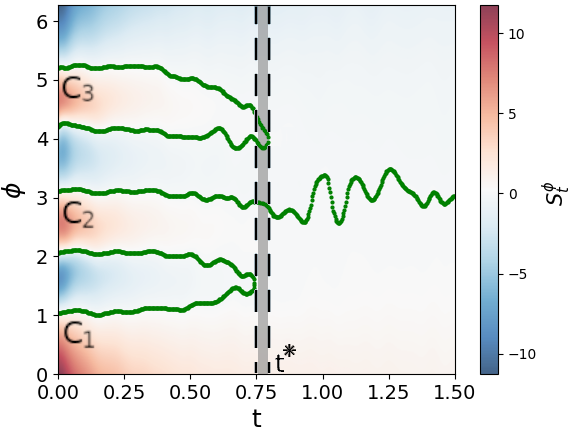}
        \includegraphics[width=.49\textwidth]{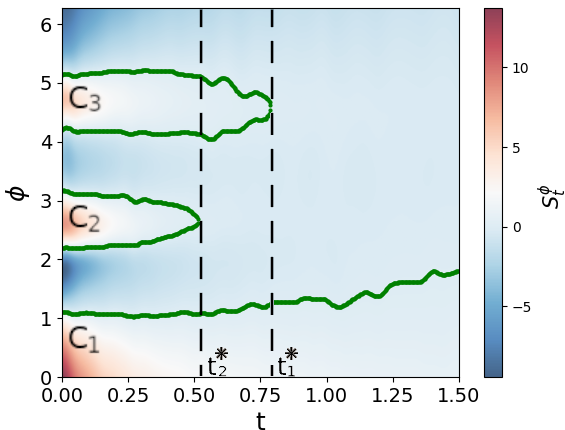}
\caption{Trimodal vMF distribution, learned score, equidistant centers on the equator. 
Left: $w_1=w_2=w_3=1/3$ case. Due to the symmetry, two saddle-node pairs are created at about the same speciation time. The original late-time branch at $\phi=\pi$ is preserved, and it continues to the nearest center located at $\phi=\pi$. Notice that the  speciation time emerging from the learned score for the two branches is similar but not exactly the same (grey area). 
Right: $w_1 = 1/2,  w_2=w_3=1/4$. The original late-time branch is steered towards the center with heaviest weight. The asymmetry of the weights leads to different speciation times for the pairs $C_1-C_2$ and $C_2-C_3$.   }\label{fig:approx_score_trimodal_weights2}
\end{figure}

\subsection{Realistic multimodal distribution: Fires on Earth dataset}
We perform the last numerical experiment with a multimodal mixture using a real-world dataset: NASA's Fire Information for Resource Management System (FIRM), a global near real-time dataset on active wild fire data from Earth-observing satellites \cite{fire_data}. Figure~\ref{fig:earth_fires} shows the original dataset and its reconstruction.  Figure~\ref{fig:earth_fires_speciation} depicts the values of the zeroes of two sections of the learned score, showing the speciation of the central Africa/central America/Indian-Southeast Asia clusters (left) and of the central Africa/East Europe cluster (right). In absence of analytical estimates of the speciation times of the different fires clusters (the underlying probability distribution is unknown), we make use of \eqref{eq:bimodal_speciation_time} to give a rough a priori approximation of the speciation times.

\begin{figure}[h]
\centering
        \includegraphics[width=.475\textwidth]{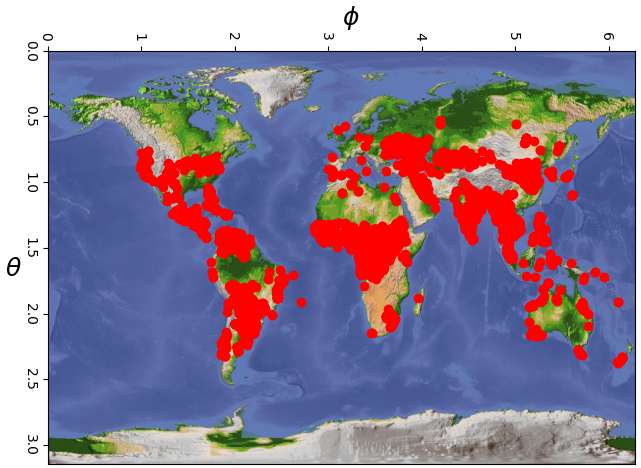}
        \includegraphics[width=.475\textwidth]{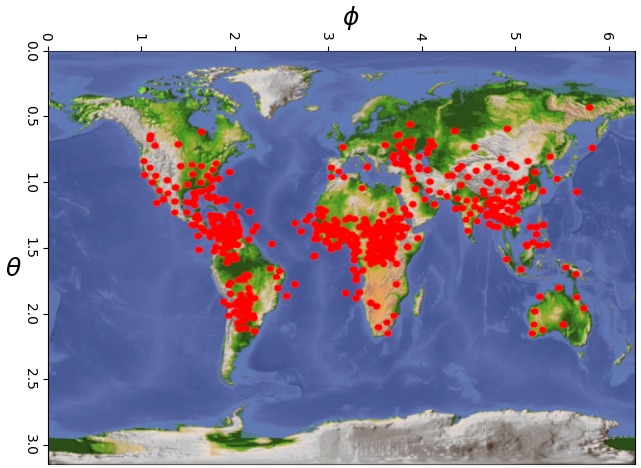}
\caption{Fires on Earth dataset. Left: original dataset; Right: $500$ points generated by the model. Earth Map from \cite{earth}}\label{fig:earth_fires}
\end{figure}

\begin{figure}[h]
\centering
        \includegraphics[width=.49\textwidth]{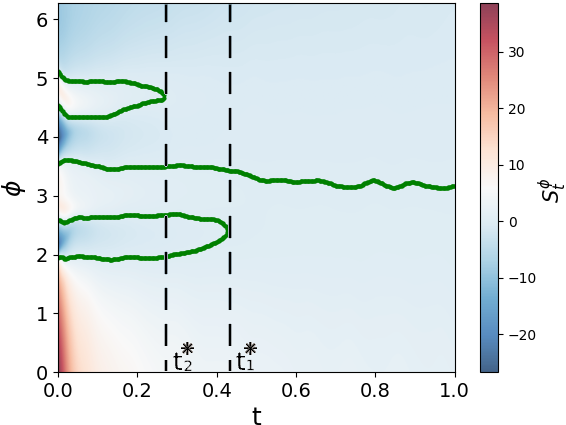}
        \includegraphics[width=.49\textwidth]{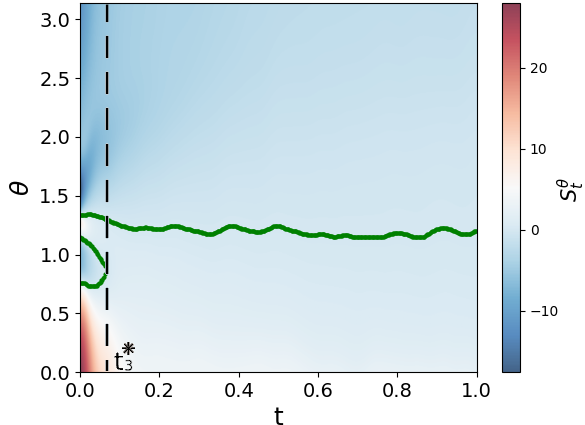}
\caption{Fires on Earth dataset. 
Zeros (green) and values  (colors) of the score components. 
Left ($\theta = \pi/2$): $\phi$ component. The late time branch contributes to the formation of the central Africa fire cluster ($\phi \approx 3.5$); The central America cluster ($\phi \approx 1.9$) is generated via a bifurcation at $t_1^* \approx 0.45$ (applying \eqref{eq:bimodal_speciation_time} to this pairs of cluster with $\sigma = 0$ yields $t_1^* \approx 0.56$); The saddle-node at $t^*_2 \approx 0.27$ corresponds to the formation of the Indian-Southeast Asia cluster ( \eqref{eq:bimodal_speciation_time} yields $t_2^* \approx 0.25$). The repellers separating the the clusters are located in the Atlantic and Indian oceans respectively. Right ($\phi \approx 3.9$): $\theta$ component. The central Africa cluster speciates from the East Europe cluster at $t^*_3 \approx 0.1$ ( \eqref{eq:bimodal_speciation_time} yields $t_2^* \approx 0.14$). The repeller separating the two clusters is located in the Mediterranean sea.} 
\label{fig:earth_fires_speciation}
\end{figure}

\section{Conclusion} \label{sec:conclusion} 
In this work, we have  developed an intrinsic framework for studying speciation in generative diffusion models supported on compact Riemannian manifolds.  Rather than identifying speciation events with purely symmetry-breaking scenarios, we have characterized them through bifurcations of the critical points of the time-dependent density, or equivalently, of the zeroes of the Riemannian score field. This perspective places the emergence of generative branches within the broader setting of dynamical systems and singularity theory, retaining all the information about the geometry and topology of the underlying data manifold. 
\medskip 

Through Poincaré--Hopf theorem and Morse theory, we showed that critical points of the score landscape must satisfy topological constraints, giving rise to both data modes, which represent the intended components or classes of the data distribution, and geometrical modes, which need not correspond to a data class but arise as part of a topologically admissible critical point configuration. For mixtures of heat kernels, we showed that generic speciation events have a one-dimensional critical kernel and are locally described by an $A_2$ fold normal form. Other kind of bifurcation, e.g. pitchforks or multidirectional bifurcations, instead arise from nongeneric symmetric configurations. We also derived geometry-dependent estimates of speciation times for bimodal distributions and for symmetric mixtures whose centers are placed on the vertices of Riemannian regular simplices. At last, we proved that nondegenerate folds persist under small score perturbations, with a first-order time shift determined only by the score error along the critical direction. This structural stability ensures that the theoretical bifurcation picture continues to hold true when the exact score is replaced by a sufficiently accurate learned approximation. We support our theoretical picture with illustrative examples on the sphere $\mathbb{S}^2$, based on known probability distributions, and numerical experiments with learned intrinsic scores. These examples exhibit all of the theoretical phenomena discussed above, including the formation of folds and pitchforks in speciation events, their unfolding under symmetry breaking and the presence of geometrical modes. Taken together, these results show that speciation events are not, in general, generated by a symmetry-breaking, but from a geometrically-organized evolution of the score landscape. 

\medskip

\noindent Future work will be devoted to investigate points that have not been addressed here,  
and, namely to: i) explore the role of the curvature of the manifold on the generative dynamics and on speciation events; ii) exploit the stability of one-dimensional bifurcations to improve the effectiveness of the guidance in generative models; iii) investigate analytic mixtures and realistic datasets for which the kernel of the bifurcation is not merely unidimensional but might involve more directions; iv) investigate diffusion models on manifolds with generic boundaries whose underlying process is a reflected Brownian motion;
v) investigate 
latent diffusion models where VAEs are combined with diffusion models to tame the computational cost.

\section*{Acknowledgments}
\noindent Paola Causin is member of the Italian group GNCS (Gruppo Nazionale Calcolo Scientifico) of INdAM.
This research has been partially performed in the framework of the GNCS Project 
\textit{``Oltre il gradiente deterministico: a\-na\-li\-si di dinamiche stocastiche per funzioni non convesse"}, CUP: E53C25002010001. 
Part of the computational resources were provided by the INDACO Platform, a project of High Performance Computing of the University of Milan ({\tt https://www.indaco.unimi.it/}).


\appendix

\section{The von Mises-Fisher distribution} 
\phantomsection
\label{app:appA}
\noindent In this Appendix we collect
all the fundamental notions needed to deal with data obtained from
the von Mises-Fisher distribution, the analogous of a Gaussian on $\mathbb{S}^d$ manifolds. In particular, we consider in detail the case $d=2$.

\subsection{Definition and properties
}
 For a unit vector point $x \in \mathbb{S}^{d} \hookrightarrow \mathbb{R}^{d+1}$, the von Mises-Fisher (vMF) distribution is given by
\begin{equation}
p_{\text{vMF}}(x;\mu,\kappa) = \dfrac{\kappa e^{\kappa \mu \cdot x}}{4\pi \sinh(\kappa)},
\label{eq:vMFdistri}
\end{equation}
where $\cdot$ is the standard scalar product of $\mathbb{R}^{d+1}$, $\mu$ is another unit vector called mean direction of the distribution and $\kappa$ is parameter which controls the concentration of the distribution around the point on the sphere individuated 
by~$\mu$. The denominator serves as a normalization factor. When $\kappa = 0$, the distribution is the uniform on the sphere, as $\kappa$ grows the mass concentrates tightly around~$\mu$ approaching a delta
distribution. In the numerical experiments reported in the main text we always set $\kappa=20.$


\subsection{Analytic solution of the Fokker-Planck equation on $\mathbb{S}^2$ for data from the vMF distribution}

We use the standard superficial spherical coordinates with $R=1$, $\theta$ as colatitude and $\phi$ as longitude:
$$
(\theta, \phi) \;\mapsto\; (\sin\theta\cos\phi,\; \sin\theta\sin\phi,\; \cos\theta),
$$
with $\theta \in (0, \pi)$ and $\phi \in [0, 2\pi)$. The metric in these coordinates is 
$g \;=\; d\theta^2 \;+\; \sin^2\theta\, d\phi^2$ and
the induced volume form is
$d\mathrm{vol}_g = \sqrt{\det g}\; d\theta\, d\phi = \sin\theta\, d\theta\, d\phi$, 
which gives $|\mathbb{S}^2| = 4 \pi^2$. 

\medskip

\noindent We work hereafter in the coordinate chart $(0,\pi)\times(0,2\pi]$, imposing periodic boundary conditions on $\phi$. The Fokker-Planck equation~\eqref{eq:app-fp-forward} on $\mathbb{S}^2$ reads in superficial spherical coordinates
\begin{equation}
\label{eq:fokker_s2}
\frac{\partial p_t}{\partial t} =
\frac{1}{2}\left[\partial_\theta^2 f  + \cot\theta\,\partial_\theta f \right] + \dfrac{1}{2 \ \sin^2 \theta} \partial_\phi^2 f,
\end{equation}
where the term $[\cot\theta\,\partial_\theta (\cdot)]$
is a chart-dependent \textit{geometric drift}  pushing in both hemispheres the process towards the equator of the sphere. We remember that this term should not be considered as a drift of its own, but as part of the diffusive term associated with the Laplace-Beltrami operator. Equation~\eqref{eq:fokker_s2} can be solved by separation of variables, admitting the expansion
\begin{equation}\label{eq:p_sphere_spherical_harmonics}
p_t(\theta,\phi)
=
\sum_{\ell,m}
\langle p_0, Y_{\ell m} \rangle \, e^{-\frac{1}{2}\ell(\ell+1)t}\, Y_{\ell m}(\theta,\phi)
\end{equation}
where the spherical harmonics $Y_{\ell m}$ are the eigenfunctions of the Laplace--Beltrami operator operator with corresponding eigenvalues $\lambda_\ell=\ell(\ell+1)$ and first spectral gap $\ell(\ell+1)|_{\ell=1}=2$. 

\medskip

Let now the initial condition $p_0$ be a vMF distribution centered at $(\alpha,\beta)$, with mean direction
\[
\mu = (\sin\alpha\cos\beta,\ \sin\alpha\sin\beta,\ \cos\alpha).
\]
Since the Laplace–Beltrami operator $\Delta_g$ is invariant under rotations, the solution $p_t$
depends only on the geodesic angle $\gamma=\arccos(u)$ between $x$ and $\mu$, where
\begin{equation}\label{eq:generic_variable_u}
u \;=\; x\cdot\mu \;=\; \sin\theta\,\sin\alpha\,\cos(\phi-\beta) + \cos\theta\,\cos\alpha .
\end{equation}
By rotational symmetry it is therefore enough to solve the problem for the simple case $\alpha=0$, i.e.
$\mu=(0,0,1)$, in which case $u=\cos\theta$ and $p_0$ is independent of the azimuthal angle
$\phi$. This axial symmetry makes the projections $\langle p_0, Y_{\ell m}\rangle$ vanish for
every $m\neq 0$, so that the only contributing harmonics are
\begin{equation*}
Y_{\ell 0}(u) \;=\; \sqrt{\tfrac{2\ell+1}{4\pi}}\,P_\ell(u), \qquad u=\cos\theta .
\end{equation*}
Consequently, the expansion~\eqref{eq:p_sphere_spherical_harmonics} reduces to
\begin{equation}\label{eq:p_sphere_expansion}
p_t(u) \;=\; \frac{1}{4\pi}\sum_{\ell=0}^{\infty} (2\ell+1)\,c_\ell\,
e^{-\ell(\ell+1)t/2}\,P_\ell(u),
\end{equation}
where $P_\ell$ are the Legendre polynomials and the coefficients $c_\ell$ are the projections of
$p_0=p_{\mathrm{vMF}}$ onto the Legendre basis,
\[
c_\ell \;=\; 2\pi\!\int_{-1}^{1} P_\ell(u)\,p_0(u)\,\mathrm{d}u
\;=\; \frac{\kappa}{2\sinh\kappa}\!\int_{-1}^{1} P_\ell(u)\,e^{\kappa u}\,\mathrm{d}u .
\]
This integral has a closed form in terms of the modified spherical Bessel functions of the first
kind,
\[
c_\ell \;=\; \frac{i_\ell(\kappa)}{i_0(\kappa)}, \qquad
i_\ell(\kappa)=\sqrt{\tfrac{\pi}{2\kappa}}\,I_{\ell+1/2}(\kappa),
\]
which follows from the Rayleigh plane-wave expansion
$e^{\kappa u}=\sum_{\ell\ge0}(2\ell+1)\,i_\ell(\kappa)\,P_\ell(u)$ together with the orthogonality
relation $\int_{-1}^{1}P_\ell P_{\ell'}\,\mathrm{d}u=\tfrac{2}{2\ell+1}\delta_{\ell\ell'}$.

In~\eqref{eq:p_sphere_expansion} every mode $\ell\ge1$ decays exponentially, so that only
$\ell=0$ survives and $p_t(x)\to 1/(4\pi)$ uniformly as $t\to\infty$. Finally, since $p_t$ depends
on $x$ only through $u=x\cdot\mu$, the reduction $\alpha=0$ entails no loss of generality:
replacing $u=\cos\theta$ with the general expression~\eqref{eq:generic_variable_u} yields the
solution for an arbitrary mean direction,
\begin{equation}\label{eq:expavMF_general}
p_t(\theta,\phi) \;=\; \frac{1}{4\pi}\sum_{\ell=0}^{\infty} (2\ell+1)\,c_\ell\,
e^{-\ell(\ell+1)t/2}\,
P_\ell\!\big(\sin\theta\,\sin\alpha\,\cos(\phi-\beta) + \cos\theta\,\cos\alpha\big),
\end{equation}
where, from the recurrence of the spherical Bessel functions $i_{\ell+1} = i_{\ell-1} - \tfrac{2\ell+1}{\kappa}\,i_\ell$,
we get for $\ell=1,2,\dots$
$$c_0=1, \,\, c_1 = \coth \kappa -\frac{1}{\kappa}, \,\,
{c_{\ell+1}(\kappa) = c_{\ell-1}(\kappa) - \frac{2\ell+1}{\kappa}\,c_\ell(\kappa)}.
$$ 
The score of $p_t$ is the Riemannian gradient of its logarithm,
$S_t(x)=\nabla_{\mathbb{S}^2}\log p_t(x)$, which reads in explicit form 
\begin{equation}
S_t(\theta,\phi) =\displaystyle \frac{
{\displaystyle\sum_{\ell \ge 1}}
\left(D_u P_\ell(u(\theta,\phi))\right)
(2\ell + 1)\, c_\ell \, e^{-\ell(\ell+1)t/2}
}{
{\displaystyle \sum_{\ell \ge 0}}
P_\ell(u(\theta,\phi))\,
(2\ell + 1)\, c_\ell \, e^{-\ell(\ell+1)t/2}}
V 
\label{eq:scorevMF}
\end{equation}
where the numerator starts at $\ell=1$ since $D_u P_0\equiv 0$ and
we have set
\begin{equation*}
V = \nabla u(\phi,\theta) =
\begin{pmatrix}
1 & 0\\
0 & (\sin \theta)^{-2}
\end{pmatrix}
\begin{pmatrix}
\cos \theta \sin \alpha \cos(\phi -\beta) - \sin \theta \cos \alpha \\
- \sin \theta \sin \alpha \sin (\phi - \beta) + \cos \theta \cos \alpha
\end{pmatrix}
\end{equation*}
The numerical computation of~\eqref{eq:p_sphere_expansion}
and~\eqref{eq:scorevMF} is quite delicate and
a high number of terms is needed to achieve sufficient precision at small times. In this work, when not differently specified, we have used $19$ terms plus the constant. 

\subsection{Large $\kappa$ asymptotics of vMF distribution and comparison with the heat kernel}
\label{sec:vmf_asymptotics}
In Section~\ref{sec:bimodal_speciation_time} we derived an estimate for the speciation time of a bimodal distribution, under the assumption that the initial distribution is a Riemannian Gaussian of variance $\sigma^2$ -- a zeroth-order approximation of the heat kernel. In order to apply the estimate \eqref{eq:bimodal_speciation_time} in the case at hand, we need to study how the variance $\sigma^2$ and the concentration parameter $\kappa$ of the vMF distribution are related for large $\kappa$. Let $f_{\rm vMF}(x)
\propto e^{\kappa \mu\cdot x}$ be the probability density of a vMF distribution and let $r=d(x,\mu)$. Since $\mu\cdot x=\cos r$, we find $f_{\rm vMF}(r) \propto e^{\kappa\cos r}$. For small $r$ we can expand $\operatorname{exp}(\kappa \cos r)$ as
\begin{equation*}
e^{\kappa\cos r}
=
e^\kappa
\exp\!\left(
-\frac{\kappa r^2}{2}
+\frac{\kappa r^4}{24}
+\cdots
\right)
\end{equation*}
Considering this expression to leading order yields
\begin{equation*}
f_{\rm vMF}(r)
\propto
\exp\!\left(
-\frac{\kappa r^2}{2}
\right)
\end{equation*}
from which we deduce that a highly concentrated vMF looks (modulo a normalization factor) like a Gaussian with variance
\begin{equation}
\sigma^2=\frac1\kappa.
\end{equation}

\null

\section{Proofs of propositions of Section~\ref{section:generic_mixtures}}
\phantomsection
\label{app:appB}

We show that for a generic mixture of heat kernels, every bifurcation of the critical-point set $\Xi$ is an $A_2$ fold for a sufficiently large number of components of the mixture. The proof relies on the fact that the jet evaluation map of the mixture is a submersion, which allows to apply Thom's parametric transversality theorem to conclude that there is a set of parameters of positive measure for smooth $g$ and of full measure for analytic $g$ for which the bifurcation is of $A_2$ type.

\subsection{Analytic preliminaries}

\subsubsection{k-jets of functions}
\label{sec:jet}

In this section we briefly recall the notion of jets of a function, which can be seen as the coordinate--free version of Taylor expansions \cite{zbMATH00205892,zbMATH00042050}. Suppose that $M$ is a $n$-dimensional manifold and consider the bundle $(E = M \times \mathbb{R}, \pi, M)$, with $\pi$ the projection on the first component, whose local sections are the smooth functions over $M$. Let $f_1,f_2 \in \mathcal{C}^\infty(M)$. We say that $f_1$ and $f_2$ are $k$-equivalent at a point $x$ if, in any chart $\partial^\alpha f_1(x) = \partial^\alpha f_2(x)$ for every multi-index $\alpha$ with $|\alpha|\le k$. The equivalence class $j_x^k f_1$ is called the $k-$jet of $f_1$ at $x$. Concretely, $j_x^k f_1$ contains the information about the derivatives of $f_1$ up to order $k$ at the point $x_0$. The quotient of $\mathcal{C}^\infty(M)$ with respect to this equivalence relation is the vector space $J_x^k(E)$, which is of dimension $\binom{n+k}{k}$.
\begin{definition}
Denote by $J^k_{x_0}$E the space of all k-jets of sections of $E$ at $x_0$ and set 
$$J^k E = \bigsqcup_{x\in M} J_x^k(E).$$
$J^k E$ is the k-jet bundle of $E$.
\end{definition}
\begin{remark}
In coordinates, points in $J^k E$ are Taylor polynomials of sections of $E$ at all possible points of $M$.     
\end{remark}

The 1-jet of a function $f$ at $x$ -- described by $(f(x),df_x)$ -- admits the canonical splitting $J^1(E) \cong \mathbb R \oplus T^\star M$. However, there is no canonical choice for the splitting of higher order jets without additional structures over $M$. If $(M,g)$ is a Riemannian manifold, the Levi-Civita connection allows us to describe jets using (symmetrized) covariant derivatives, inducing the splitting:
\begin{equation*}
J^k E \cong \sum_{i=1}^k \Sym^i(T^\star M)= \mathbb R \oplus T^\star M \oplus \Sym^2(T^*M) \oplus \ldots \oplus \Sym^k(T^\star M)
\end{equation*}
In a local chart, the components of this splitting contain, respectively, the value of a function at a point, its gradient, its Hessian, and so on, up to the derivatives of order $k$.

\subsubsection{\texorpdfstring{$A_k$}{Ak} singularities}
\label{sec:Ak}
Now we recall the definition of corank-one singularities of function germs. Without loss of generality, we suppose that all germs are at $0$, which is also a critical point. In the following right
equivalence means a smooth change of coordinates in the source.

\begin{definition}[$A_k$ singularity]
\label{def:Ak}
A germ $f\colon \R^n \to \R$ has an $A_k$ singularity ($k\ge1$) if
it is right-equivalent to the normal form
\begin{equation}
  \label{eq:Ak-normal-form}
  A_k:\qquad
  f(\xi,\eta)\;=\;\xi^{\,k+1}\;+\;\tfrac12\sum_{j=1}^{n-1}\varepsilon_j\,\eta_j^2,
  \qquad \varepsilon_j=\pm1 .
\end{equation}
The quadratic part is a nondegenerate Morse function in the $n-1$ variables $\eta$; the singularity is isolated in the variable $\xi^{k+1}$. 
\end{definition}
The low $k$ cases are also know in literature with their classical names \cite{zbMATH01552061,zbMATH03676845}
\[
  A_1:\ \xi^2\ (\text{Morse}),\quad
  A_2:\ \xi^3\ (\text{fold}),\quad
  A_3:\ \xi^4\ (\text{cusp}),\quad
  A_4:\ \xi^5\ (\text{swallowtail}).
\]
\begin{remark}
\label{rem:Ak-codim}
The Hessian of $f$ at zero has corank one, with kernel $\partial_\xi$ for every $k \geq 2$, therefore the germ $A_k$ is fixed up to right equivalence by its $(k{+}1)$-jet. 
\end{remark}
By the Thom's splitting lemma \cite{zbMATH03473031}, any degenerate critical point of corank one is right-equivalent to
$\phi(\xi)+\tfrac12\sum_j\varepsilon_j\eta_j^2$ with $\phi''(0)=0$.
The exact type of singularity is therefore determined by the vanishing of the higher-order derivatives of $\phi$ at $x=0$. In the following we shall consider families of functions $f_t(x)$ smoothly parametrized by the time $t$. In this scenario, the Parametrized Morse Lemma \cite{zbMATH05129478} applied to $f_t$ yields that the family $f_t(x)$ is right-equivalent to
$\psi(t) + \phi_t(\xi)+\tfrac12\sum_j\varepsilon_j(t) \eta_j^2$ with $\phi_t''(0)=0$, where all the functions appearing in this expression depends smoothly on $t$. To ensure that the bifurcation is nondegenerate, one must impose additional conditions. Specifically, in the $A_2$ fold case we are interested in, we require $\partial_t \partial_\xi \phi_t(\xi) \neq 0$; we shall see in Corollary~\ref{cor:cerf} that, for analytic metrics and a sufficient number of centers in a mixture of heat kernels, this condition is almost always satisfied.

\subsection{Ampleness of heat-kernel jets}
\label{sec:ampleness}

Throughout, $(\Msf,g)$ is a compact Riemannian manifold, $p_{t}(x,y)$ is the heat kernel of $\tfrac12\Delta_g$, $\{\phi_k\}_{k\ge0}$ is an $L^2$-orthonormal basis of Laplace--Beltrami operator
eigenfunctions with associated eigenvalues $\{\lambda_k\}_{k\ge0}$. For a fixed point $x_0\in \Msf$ and $r\ge0$, we write $\jet{r}{x_0}\colon C^\infty(\Msf)\to J^r_{x_0}(\Msf)$ for the $r$-jet evaluation at $x_0$. This map is a linear surjection onto the fiber.
In the following we consider the family of jets
\begin{equation}
  \label{eq:atoms}
  \mathcal K^r_t
  \;:=\;
  \bigl\{\, \jet{r}{x_0}\,p_t(\cdot,y)\;:\;y\in \Msf \,\bigr\}
  \;\subset\; J^r_{x_0}(\Msf),
  \qquad t>0 .
\end{equation}
Since a mixture of heat kernels $\rho_t = \sum_{i=1}^K w_i p_{t_i}$, with $t_i = t + \tau_i$ and $t_i>0$ for every $i$, is linear in the weights, the linearity of the jet evaluation map yields
\begin{equation}
  \label{eq:mixture-linearity}
  \jet{r}{x_0}\,\rho_t(\cdot,y) = \jet{r}{x_0}\!\Bigl[\textstyle\sum_i w_i\,p_{t_i}(\cdot,y_i)\Bigr]
  \;=\;
  \sum_i w_i\, \jet{r}{x_0}\,p_{t_i}(\cdot,y_i).
\end{equation}
We start our enquiry by proving that the heat-kernel mixtures generate the jet bundle fibrewise.
\begin{lemma}
\label{lem:ampleness}
For every fixed $t>0$ and $x_0 \in \Msf$, and for every $r\ge0$,
\[
  \spann\,\mathcal K^r_t \;=\; J^r_{x_0}(\Msf).
\]
\end{lemma}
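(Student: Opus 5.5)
We argue by duality: since $J^r_{x_0}(\Msf)$ is finite dimensional, $\spann\,\mathcal K^r_t$ is a proper subspace if and only if there is a nonzero linear functional $\ell$ on $J^r_{x_0}(\Msf)$ annihilating every $\jet{r}{x_0}p_t(\cdot,y)$, $y\in\Msf$. Composing with the surjection $\jet{r}{x_0}$, such an $\ell$ is a distribution $T=\ell\circ\jet{r}{x_0}$ supported at $\{x_0\}$ of order at most $r$. In a chart around $x_0$ it reads $T=\sum_{|\alpha|\le r}a_\alpha\,\partial^\alpha\delta_{x_0}$ with not all $a_\alpha$ zero, i.e.\ $T=P(\partial)\delta_{x_0}$ for a nonzero differential operator $P$ of order $\le r$. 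The assumption then says
\[
  F(y):=\bigl(P(\partial_x)\,p_t(x,y)\bigr)\big|_{x=x_0}=0\qquad\text{for all } y\in\Msf .
\]
By symmetry of the heat kernel, $F(y)=(e^{t\Delta_g/2}T)(y)$, where the heat semigroup is extended to distributions by duality. Indeed, for any test function $\psi$,
\[
  \int F\psi\,\mathrm d\mathrm{vol}_g=\langle T,\,e^{t\Delta_g/2}\psi\rangle .
\]

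The key step is the injectivity of $e^{t\Delta_g/2}$ on distributions. Using the spectral expansion \eqref{eq:app-heat-kernel-expansion}, with $p_t(x,y)=\sum_k e^{-\lambda_k t/2}\phi_k(x)\phi_k(y)$ converging in $C^\infty$ for $t>0$, we get
\[
  F(y)=\sum_k e^{-\lambda_k t/2}\,\langle T,\phi_k\rangle\,\phi_k(y).
\]
If $F\equiv0$, orthonormality of $\{\phi_k\}$ forces $\langle T,\phi_k\rangle=0$ for every $k$, because $e^{-\lambda_k t/2}\neq0$. Since finite linear combinations of eigenfunctions are dense in $C^\infty(\Msf)$ (via Sobolev norms and elliptic regularity), the continuous functional $T$ on $C^\infty(\Msf)$ vanishes identically. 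In particular it vanishes on smooth functions that agree near $x_0$ with the monomials $(x-x_0)^\beta/\beta!$ times a cutoff. Evaluating gives $a_\beta=0$ for all $|\beta|\le r$, a contradiction. Hence $\ell=0$ and the span is the whole fiber.

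The main obstacle is justifying the interchange of $T$ with the eigenfunction series, together with the density of eigenfunction expansions in $C^\infty$. I would handle both by noting that $\lambda_k$ grows polynomially (Weyl's law) and $\|\phi_k\|_{C^r}\lesssim(1+\lambda_k)^{N}$ by Sobolev embedding. The factor $e^{-\lambda_k t/2}$ therefore gives absolute convergence of all $r$-th derivatives, which shows that $F$ is smooth and that the term-by-term pairing is valid. Density follows similarly: the expansion of any smooth $\psi$ converges in every $H^s$, hence in $C^r$. The argument does not use the specific value of $t$, only $t>0$, matching the statement.
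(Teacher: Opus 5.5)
Your proof is correct and follows essentially the paper's argument: you compose the annihilating functional with $\jet{r}{x_0}$, apply it term by term to the $C^\infty$-convergent eigenfunction expansion of $p_t(\cdot,y)$, and use orthonormality to get $L(\phi_k)=0$ for all $k$. You then conclude $L\equiv0$ by $C^\infty$-density of spectral partial sums, with the same Weyl-law and polynomial $C^r$-bound justification, and your heat-semigroup framing and cutoff-monomial evaluation are repackagings of the paper's steps (the latter being the surjectivity of $\jet{r}{x_0}$).
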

\begin{proof} \,
The thesis is equivalent to the fact that no nonzero linear functional on $J^r_{x_0}(\Msf)$ annihilates the
family $\{\,\jet{r}{x_0}p_t(\cdot,y):y\in \Msf\,\}$. Let $\ell\in\bigl(J^r_{x_0}(\Msf)\bigr)^*$ annihilate $\mathcal K^r_t$, and set
$L:=\ell\circ \jet{r}{x_0}\colon C^\infty(\Msf)\to\R$, so that $L\bigl(p_t(\cdot,y)\bigr)=0$ for all $y\in \Msf$. In any chart $(U,x)$ containing $x_0$, $L$ acts on smooth functions as
\begin{equation}
  \label{eq:point-distribution}
  L(f)\;=\;\sum_{|\alpha|\le r} c^{\alpha}\,\partial^{\alpha}f(x_0),
  \qquad f\in C^\infty(\Msf),
\end{equation}
Since $|L(f)|\le C\,\|f\|_{C^r(U)}$ for $f$ supported near
$x_0$, $L$ is continuous in $C^r(U)$. We claim that for a fixed $t>0$ the eigen-expansion
\begin{equation}
  \label{eq:eigenexp_mixture}
  p_t(x,y)\;=\;\sum_{k\ge0} e^{-\lambda_k t}\,\phi_k(x)\,\phi_k(y)
\end{equation}
converges in $C^\infty(\Msf\times \Msf)$. Indeed, standard elliptic estimates applied to the eigenfunction equation for $\tfrac12\Delta_g$ together with Sobolev embedding theorem \cite{taylor2010partial} yields $\|\phi_k\|_{C^r(\Msf)}\lesssim_{r} \lambda_k^{\,N(r)}$ for large $k$, while Weyl's law for the asymptotic behavior of eigenvalues of the Laplace–Beltrami operator gives $\lambda_k\sim c\,k^{2/n}$. Therefore the coefficients $e^{-\lambda_k t}$ decay faster than any polynomial in $\lambda_k$ and dominate
every $C^r\times C^r$ seminorm, yielding the sought convergence. Applying the $C^r$-continuous
functional $L$ to \eqref{eq:eigenexp_mixture} in the $x$-variable term by term, we obtain
\begin{equation}
  \label{eq:apply-L}
  0\;=\;L\bigl(p_t(\cdot,y)\bigr)
   \;=\;\sum_{k\ge0} e^{-\lambda_k t}\,L(\phi_k)\,\phi_k(y)
   \qquad\text{in } C^\infty(\Msf).
\end{equation}
Taking the $L^2$ inner product of \eqref{eq:apply-L} with $\phi_j$ and exploiting the orthonormality of the eigenfunctions, we find $e^{-\lambda_j t}L(\phi_j)=0$ and thus
\begin{equation}
  \label{eq:vanishing-on-eigen}
  L(\phi_k)=0\qquad\text{for every }k\ge0.
\end{equation}
Now let $f\in C^\infty(\Msf)$ be an arbitrary function, with spectral coefficients $\hat f_k=\langle f,\phi_k\rangle$. The smoothness of $f$ implies a rapid decay of $f_k$ -- faster than any polynomial in $\lambda_k$ -- so using the polynomial
bound on $\|\phi_k\|_{C^r}$, the partial sums $f_M:=\sum_{k\le M}\hat f_k\phi_k$
converge to $f$ in $C^\infty(\Msf)$. By $C^r$-continuity of $L$ and
\eqref{eq:vanishing-on-eigen},
\[
  L(f)\;=\;\lim_{M\to\infty}\sum_{k\le M}\hat f_k\,L(\phi_k)\;=\;0 .
\]
Since $f$ is an arbitrary smooth function and $\jet{r}{x_0}$ is surjective, we conclude that $\ell=0$.
\end{proof}

Fix $t>0$ and set $m:=\dim J^2_{x_0}(\Msf)=\binom{n+2}{2}$. Choose a basis of $J^2_{x_0}(\Msf)$ and, for a choice of centers $y_1,\dots,y_m\in \Msf$ of the mixture, let
\begin{equation}
  \label{eq:wronskian}
  W(y_1,\dots,y_m)
  \;:=\;
  \det\bigl[\,v(y_1)\ \big|\ \cdots\ \big|\ v(y_m)\,\bigr],
  \qquad
  v(y_a):=\bigl(\partial_x^{\alpha}p_t(x_0,y_a)\bigr)_{|\alpha|\le 2}\in\R^{m},
\end{equation}
where each column $v(y_a)$ collects the $x$-derivatives of order $\le2$ of
$p_t(\cdot,y_a)$ evaluated at $x=x_0$, namely the
coordinates of $\jet{2}{x_0}\rho_t(\cdot,y_a)$ in the partial-derivative basis of $J^2_{x_0}(\Msf)$ in a fixed chart containing $x_0$. The condition $W(y_1,\ldots,y_m) \neq 0$ does not depend on the chosen chart of $J^2_{x_0}(\Msf)$, as changing the basis multiplies $W$ by a nonzero constant.

\begin{corollary}
\label{cor:hv-ampleness}
The function $W$ in \eqref{eq:wronskian} is not identically zero on $\Msf^m$.
Consequently, the set of centers $(y_1,\dots,y_m)$ whose $2$-jets form a basis of $J^2_{x_0}(\Msf)$ is nonempty and open in $\Msf^m$ for smooth $g$ and is the complement of a proper real-analytic subvariety (hence residual and of full measure) when $g$ is real-analytic.
\end{corollary}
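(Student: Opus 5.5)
The plan is to derive the corollary from Lemma~\ref{lem:ampleness} with $r=2$, using a standard argument that extracts a basis from a spanning family. By the lemma, the vectors $v(y)=\bigl(\partial_x^\alpha p_t(x_0,y)\bigr)_{|\alpha|\le2}$, $y\in\Msf$, span $\R^m\cong J^2_{x_0}(\Msf)$. A spanning set of an $m$-dimensional space contains $m$ linearly independent elements. So there exist centers $\bar y_1,\dots,\bar y_m$ with $v(\bar y_1),\dots,v(\bar y_m)$ linearly independent, which gives $W(\bar y_1,\dots,\bar y_m)\neq0$. This shows that $W\not\equiv0$ on $\Msf^m$.

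Next I establish regularity of $W$. For fixed $t>0$ the heat kernel $p_t(x,y)$ is $C^\infty$ on $\Msf\times\Msf$; this follows from the $C^\infty$ convergence of the eigen-expansion~\eqref{eq:eigenexp_mixture}, already shown in the proof of Lemma~\ref{lem:ampleness}. Hence each entry $\partial_x^\alpha p_t(x_0,y)$ is smooth in $y$, and $W$, being a polynomial in these entries, is smooth on $\Msf^m$. The set $\{W\neq0\}$ is therefore open, and it is nonempty by the first step. It is also independent of the chart and basis, since a change of basis multiplies $W$ by a nonzero constant. This proves the claim for smooth $g$.

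For real-analytic $g$ I need $y\mapsto p_t(x_0,y)$ and its $x$-derivatives to be real-analytic on $\Msf$. The route I would take is as follows. The heat kernel satisfies $(\partial_t-\tfrac12\Delta_{g,y})p_t(x_0,y)=0$, a parabolic equation with analytic coefficients, so solutions are analytic in the space variable. Alternatively, $\partial_x^\alpha p_t(x_0,\cdot)=\partial_x^\alpha e^{t\Delta/2}\delta_{x_0}$ lies in the range of $e^{t\Delta/2}$. On an analytic compact manifold the semigroup maps distributions into analytic functions, because of the $e^{-\lambda_k t}$ decay together with analytic-type eigenfunction bounds $\|\phi_k\|_{C^r}\le C^{r+1}r!\,\lambda_k^{N}e^{c\sqrt{\lambda_k}}$. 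Either way, $W$ is a real-analytic function on the connected analytic manifold $\Msf^m$ that is not identically zero. Its zero set is then a proper real-analytic subvariety, which is closed, nowhere dense, and of Lebesgue (Riemannian) measure zero. The complement is therefore open, dense (residual), and of full measure.

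The main obstacle is this analyticity step. Smoothness is already available from the paper, but analyticity in $y$ needs an external result on analytic hypoellipticity of the heat operator for analytic metrics. I would cite this result rather than reprove it. The measure-zero property of zero sets of nontrivial analytic functions on connected manifolds is standard, and it follows by induction on dimension in local analytic charts.
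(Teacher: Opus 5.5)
Your proposal is correct and follows essentially the same route as the paper. Both arguments use Lemma~\ref{lem:ampleness} with $r=2$ to extract $m$ centers whose 2-jets form a basis, so $W\neq0$ somewhere and $\{W\neq0\}$ is open by continuity. In the real-analytic case both rely on analyticity of the heat kernel and the identity theorem on the connected manifold $\Msf^m$ to conclude that $\{W=0\}$ is a proper analytic subset of measure zero. The only difference is that you sketch why $y\mapsto\partial_x^\alpha p_t(x_0,y)$ is analytic (analytic hypoellipticity, or spectral bounds), whereas the paper simply cites this fact.
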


\begin{proof} \,
By Lemma~\ref{lem:ampleness} with $r=2$, the family
$\{\,\jet{2}{x_0}p_t(\cdot,y)\,\}_{y\in \Msf}$ spans the $m$-dimensional space $J^2_{x_0}(\Msf)$. Since a spanning set contains a basis, there are some centers $y_1^{\circ},\dots,y_m^{\circ}$ whose $2$-jets are linearly independent. Therefore the corresponding columns $v(y_a^{\circ})$ are then independent and $W(y_1^{\circ},\dots,y_m^{\circ})\neq0$. As $W$
is continuous, $\{W\neq0\}$ is a nonempty open set. Assume now $g$ real-analytic. For $t>0$ the heath kernels $p_t(x,y)$ are real-analytic in $(x,y)$ and therefore each derivative $\partial_x^{\alpha}p_t(x,y_a)|_{x=x_0}$ is a real-analytic function of $y_a$ \cite{zbMATH05129478,zbMATH03455728,zbMATH00967584}. As a consqequence, $W$, which is a polynomial in these derivatives, is real-analytic on the connected manifold $\Msf^m$. By the identity theorem for real-analytic functions on connected manifolds, its zero set $\{W=0\}$ is a proper real-analytic subset and has Lebesgue measure zero \cite{KrantzParks}. Therefore $\{W\neq0\}$ is open, dense and of full measure.
\end{proof}

\subsection{Submersion via unnormalized weights}
\label{subsec:submersion}

Now we use the above spanning Lemma~\ref{lem:ampleness} to prove that the parameter-to-jet map is a submersion for non-normalized weights. Working on the open cone $\R^K_{>0}$ (whose tangent space at every
point is all of $\R^K$) avoids the affine constraint $\sum_i\dot w_i=0$ that a probability simplex would impose. We shall transfer our unconstrained findings to the simplex by scale-invariance at the end (see Remark~\ref{rem:scaling-descent}).

\begin{corollary}
\label{cor:submersion}
Fix $t>0$ and components $y_1,\dots,y_K\in \Msf$ with $K\ge\binom{n+2}{2}$ chosen
so that $\{\jet{2}{x_0}p_t(\cdot,y_i)\}_{i=1}^K$ spans $J^2_{x_0}(\Msf)$. For unnormalized weights
$w\in\R^K_{>0}$ set $u_w=\sum_{i=1}^K w_i\,p_t(\cdot,y_i)$. Then
\begin{equation}\label{eq:parameters-to-jet-map}
  \R^K_{>0}\ni w\ \longmapsto\ \jet{2}{x_0}u_w\ \in\ J^2_{x_0}(\Msf)
\end{equation}
is a submersion at every point; its differential is the constant linear map
$\partial_{w_i}\jet{2}{x_0}u_w=\jet{2}{x_0}p_t(\cdot,y_i)$.
\end{corollary}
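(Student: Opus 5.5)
The statement is essentially a linear-algebra consequence of the linearity relation \eqref{eq:mixture-linearity}. I would carry it out in three steps.

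\emph{Step 1: the map is affine in $w$.} Fix $t>0$, $x_0$ and the centers $y_1,\dots,y_K$. The jet evaluation $\jet{2}{x_0}\colon C^\infty(\Msf)\to J^2_{x_0}(\Msf)$ is linear, so \eqref{eq:mixture-linearity} gives
\[
\jet{2}{x_0}u_w=\sum_{i=1}^K w_i\,\jet{2}{x_0}p_t(\cdot,y_i).
\]
Hence the map \eqref{eq:parameters-to-jet-map} is the restriction to the open cone $\R^K_{>0}$ of the linear map $\Lambda\colon\R^K\to J^2_{x_0}(\Msf)$, $e_i\mapsto \jet{2}{x_0}p_t(\cdot,y_i)$. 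In particular it is smooth.

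\emph{Step 2: the differential.} Since $\R^K_{>0}$ is open in $\R^K$, its tangent space at every point is $\R^K$. The differential at every $w$ is therefore the same linear map $\Lambda$, with $\partial_{w_i}\jet{2}{x_0}u_w=\jet{2}{x_0}p_t(\cdot,y_i)$, which is the stated formula.

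\emph{Step 3: surjectivity.} The image of $\Lambda$ is $\spann\{\jet{2}{x_0}p_t(\cdot,y_i)\}_{i=1}^K$, and by hypothesis this span is all of $J^2_{x_0}(\Msf)$. So $d\Lambda_w$ is onto at every $w$, and the map is a submersion everywhere.

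The hypothesis is not vacuous. Lemma~\ref{lem:ampleness} with $r=2$, together with Corollary~\ref{cor:hv-ampleness}, shows that such spanning configurations exist. This requires $K\ge\binom{n+2}{2}=\dim J^2_{x_0}(\Msf)$: $\binom{n+2}{2}$ of the centers can be taken to form a basis, and the remaining centers are arbitrary.

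There is no real obstacle here. The only points to check are that the tangent space of the open cone is all of $\R^K$, which is exactly why unnormalized weights are used, and that the jet space is identified with $\R^{\binom{n+2}{2}}$ through a fixed chart as in \eqref{eq:wronskian}, so that "submersion" means surjectivity of a linear map between finite-dimensional spaces.
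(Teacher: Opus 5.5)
Your proof is correct and follows the paper's argument essentially verbatim. The map $w\mapsto\jet{2}{x_0}u_w$ is linear, its differential is the constant map $e_i\mapsto\jet{2}{x_0}p_t(\cdot,y_i)$, and the spanning hypothesis makes that map onto; your remark that the hypothesis is non-vacuous, via Lemma~\ref{lem:ampleness} and Corollary~\ref{cor:hv-ampleness}, also mirrors the paper's appeal to Lemma~\ref{lem:ampleness}.
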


\begin{proof} \,

By Lemma~\ref{lem:ampleness} we can always choose $K$ centers $y_1,\dots,y_p\in \Msf$ so that $\{\jet{2}{x_0}p_t(\cdot,y_i)\}_{i=1}^K$ spans $J^2_{x_0}(\Msf)$. The parameter-to-jet map \eqref{eq:parameters-to-jet-map} is linear in $w$, so its differential at any $w$ is the constant map
$A\colon\R^K\to J^2_{x_0}(\Msf)$, $Ae_i=\jet{2}{x_0}p_t(\cdot,y_i)=:a_i$. By the spanning hypothesis $\{a_i\}$ generates $J^2_{x_0}(\Msf)$,
i.e.\ $A$ is onto; hence the parameter-to-jet map \eqref{eq:parameters-to-jet-map} is a submersion.
\end{proof}

\subsection{Generic bifurcations are corank one}
\label{sec:corank-one}

From now on, we consider the mixture
\begin{equation}
  \label{eq:mixture-density}
  u_t(x)\;=\;\sum_{i=1}^{K} w_i\,p_{\tau_i+t}(x,y_i),
  \qquad t\ge0,
  \qquad
  \theta:=(w,\tau,y)\in\R^K_{>0}\times(0,\infty)^K\times \Msf^K,
\end{equation}
where the fixed parameter $\theta$ contains the weights $w_i$, the initial component
scales $\tau_i>0$ -- the Riemannian analogous of the variance of a component in the Euclidean scenario -- and the centers $y_i$. The diffusion time $t\ge0$
advances every component's scale simultaneously, so that at time $t$ the $i$-th component heat kernel is $p_{\tau_i+t}(\cdot,y_i)$. We study critical points of $x\mapsto u_t(x)$ as $t$ varies. Since a bifurcation is a pair $(x^*,t^*)$ with $\nabla u_{t^*}(x^*)=0$
and $\operatorname{Hess} u_{t^*}(x^*)$ is degenerate, we have $\corank\ge1$. 

\subsubsection{The symmetric determinantal stratification}
\label{subsec:sym-det}

Let $\Sym(n)\cong \Sym^2\R^{n*}$, and for $k\ge1$ set
\begin{equation}
  \label{eq:corank-strata}
  \Sigma_k \;:=\;\{A\in\Sym(n):\corank A\ge k\},
  \qquad
  \Sigma_k^{\circ}:=\{\corank A=k\}.
\end{equation}
Each $\Sigma_k^{\circ}$ is a smooth submanifold with
\begin{equation}
  \label{eq:codim-sym-det}
  \codim_{\Sym(n)}\Sigma_k^{\circ}=\codim\Sigma_k=\binom{k+1}{2}=\frac{k(k+1)}{2},
\end{equation}
and $\Sigma_k=\bigsqcup_{j\ge k}\Sigma_j^{\circ}$ is Whitney (b)-regular. The codimension \eqref{eq:codim-sym-det} follows from the standard count for symmetric determinantal
varieties~\cite{zbMATH03848223}. 
\subsubsection{Codimension count in the jet bundle}
\label{subsec:jet-count}

In $J^2(\Msf,\R)$ the fiber splits into  $\mathbf{R} \oplus T^\star_x M \oplus Sym^2(T_x^\star M)$ -- in components: value, gradient and Hessian -- with $Sym^2 (T^\star_x\Msf)\cong\Sym(n)$. Set
\begin{equation}
  \label{eq:crit-corank-loci}
  \mathcal C:=\{\nabla u=0\}\subset J^2,
  \qquad
  \mathcal D_k:=\mathcal C\cap\{\text{Hessian block}\in\Sigma_k\}.
\end{equation}
Gradient ($n$ coords) and Hessian ($\binom{n+1}{2}$ coords) blocks are
independent, so $\mathcal C=\{\nabla u=0\}$ and $\{\text{Hessian}\in\Sigma_j^\circ\}$
constrain disjoint coordinate blocks of $J^2$. In particular, they are mutually transverse, and
their intersection
\begin{equation}
  \label{eq:D-strata}
  \mathcal D_k^{(j)}:=\mathcal C\cap\{\text{Hessian block}\in\Sigma_j^\circ\},
  \qquad j\ge k,
\end{equation}
is a smooth submanifold of $J^2$ of codimension
$\codim\mathcal C+\codim\Sigma_j^\circ=n+\binom{j+1}{2}$. These are the smooth
strata of the corank locus $\mathcal D_k=\bigsqcup_{j\ge k}\mathcal D_k^{(j)}$, inheriting the Whitney stratification of $\{\Sigma_j^\circ\}$; in particular
\begin{equation}
  \label{eq:total-codim}
  \codim_{J^2}\mathcal D_k^{(k)}
  \;=\;\underbrace{n}_{\nabla u=0}+\underbrace{\tfrac{k(k+1)}{2}}_{\corank=k},
\end{equation}
the smallest codimension among the strata of $\mathcal D_k$.
Let
\begin{equation}
  \label{eq:psi-section}
  \Psi\colon \Msf\times\R\longrightarrow J^2(\Msf,\R),
  \qquad \Psi(x,t)=\jet{2}{x}u_t(x),
\end{equation}
If $\Psi$
is transverse to the stratum $\mathcal D_k^{(j)}$,
\begin{equation}
  \label{eq:preimage-dim}
  \dim\Psi^{-1}(\mathcal D_k^{(j)})
  =(n+1)-\Bigl(n+\tfrac{j(j+1)}{2}\Bigr)
  =1-\frac{j(j+1)}{2}.
\end{equation}
In particular, we find $\dim\Psi^{-1}(\mathcal D_1^{(1)})=0$, namely for $D_1^{(1)}$ admits only isolated points, while for $k \geq 2$ we have $\dim\Psi^{-1}(\mathcal D_k^{(k)}) \leq -2$, that is $\Psi^{-1}(\mathcal D_k^{(k)})=\varnothing$.

\subsubsection{Transversality argument}
\label{subsec:transversality}

\begin{lemma}
\label{lem:transversality}
Assume $g$ is smooth. There is a nonempty open set of parameters
$G\subseteq\R^K_{>0}\times(0,\infty)^K\times \Msf^K$ -- the uniformly
spanning parameters of \eqref{eq:G-def} below, nonempty once $K$ is large enough-- and a full-measure subset of $G$
such that, for every $\theta$ in it, $\Psi_\theta$ is transverse to $\mathcal C$
and to every stratum $\mathcal D_1^{(j)}=\mathcal C\cap\{\Hess\in\Sigma_j^\circ\}$
of $\mathcal D_1$. Morevoer, when $g$ is real-analytic the good set $G$ is of full measure in the whole parameter space $\R^K_{>0}\times(0,\infty)^K\times \Msf^K$, provided
$K\ge\binom{n+2}{2}+n+1$.
\end{lemma}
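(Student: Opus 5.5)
The plan is to apply Thom's parametric transversality theorem to the evaluation map
\[
\Phi\colon \Msf\times I\times\mathcal P\longrightarrow J^2(\Msf,\R),\qquad \Phi(x,t,\theta)=\Psi_\theta(x,t)=\jet{2}{x}u_t^\theta(x),
\]
where $\mathcal P=\R^K_{>0}\times(0,\infty)^K\times\Msf^K$ and $I$ is a compact time interval (exhausting $(0,\infty)$ by countably many such intervals at the end). If $\Phi$ is a submersion onto the fibre directions at every point of some open set of parameters, then $\Phi$ is transverse to $\mathcal C$ and to each smooth stratum $\mathcal D_1^{(j)}$. The parametric transversality theorem then yields that $\Psi_\theta\pitchfork\mathcal D_1^{(j)}$ for almost every $\theta$ in that set. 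Since there are countably many strata and countably many intervals, the intersection of the good sets still has full measure.

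\textbf{Step 1 (the open set $G$).} Define $G$ as the set of $\theta$ such that, for every $(x,t)\in\Msf\times I$, the jets $\{\jet{2}{x}p_{\tau_i+t}(\cdot,y_i)\}_{i=1}^K$ span $J^2_x(\Msf)$. I call this \emph{uniform spanning}, and it is the definition I would use for \eqref{eq:G-def}. Openness follows from continuity of the jets in $(x,t,y,\tau)$ together with compactness of $\Msf\times I$: the full-rank condition is an open condition on the $(x,t)$-family of matrices, uniformly. Given this, Corollary~\ref{cor:submersion} applied pointwise shows that the differential in the $w$-directions alone, $\partial_{w_i}\Phi=\jet{2}{x}p_{\tau_i+t}(\cdot,y_i)$, surjects onto the vertical space $J^2_x$. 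Together with the $\partial_x$ directions covering the base $T_x\Msf$, this makes $\Phi$ a submersion onto $J^2$, and hence transverse to every submanifold.

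\textbf{Step 2 (nonemptiness: the main obstacle).} We need to show that $G\neq\varnothing$ once $K$ is large. Lemma~\ref{lem:ampleness} gives spanning at a single point $x_0$, and Corollary~\ref{cor:hv-ampleness} gives an open set of $m=\binom{n+2}{2}$ centres that works near $x_0$. To make this uniform over the compact set $\Msf\times I$, I would cover it by finitely many neighbourhoods on which some choice of $m$ centres spans, and then concatenate those centres. Spanning is preserved under adding components, so this yields some finite $K$ with $G\neq\varnothing$.

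The stated bound $K\ge\binom{n+2}{2}+n+1$ should come from a dimension count in the real-analytic case. Set
\[
Z=\{(x,t,y):\text{the }K\text{ jets fail to span }J^2_x\}.
\]
In the analytic case $Z$ is an analytic subvariety. For generic $y$, failure to span at a single $(x,t)$ imposes $K-m+1$ independent analytic conditions on the rank of an $m\times K$ matrix. The pair $(x,t)$ varies over a set of dimension $n+1$, so the projection of $Z$ to $\Msf^K$ has positive codimension as soon as $K-m+1>n+1$, that is, $K\ge m+n+1$. I expect this step to be the hardest. Proving that the rank conditions are genuinely independent requires an analytic identity argument in the style of Corollary~\ref{cor:hv-ampleness}: $Z$ is not all of the space, and its fibres have the expected codimension.

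\textbf{Step 3 (conclusion).} On $G$, apply parametric transversality to get a full-measure subset of $G$ on which $\Psi_\theta$ is transverse to $\mathcal C$ and to all $\mathcal D_1^{(j)}$. In the analytic case, the complement of $G$ is the projection of $Z$, which is a countable union of analytic sets of positive codimension and therefore has measure zero. So $G$ has full measure in the whole parameter space. By \eqref{eq:preimage-dim}, this transversality also delivers the corank-one conclusion used afterwards.
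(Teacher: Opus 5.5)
Your smooth-case argument matches the paper's Steps 1--2: uniform spanning over a fixed compact time slab, openness by compactness, nonemptiness by a finite cover and pooling of centres, and parametric transversality from surjectivity of the weight directions. The gap is in the real-analytic part, which is where you yourself locate the difficulty. You assert that failure to span $J^2_x$ at a given $(x,t)$ imposes $K-m+1$ independent conditions, i.e.\ that the fibre of $Z$ over $(x,t)$ has the codimension of the rank-deficient $m\times K$ matrices. Nothing guarantees a priori that the map from parameters to the jet matrix meets the determinantal variety with its expected codimension. The tool you propose cannot supply it. An identity-theorem argument in the style of Corollary~\ref{cor:hv-ampleness} only shows that each maximal minor is not identically zero, so the fibre has codimension $\ge 1$. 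That bound becomes worthless once you sweep over the $(n+1)$-dimensional family of $(x,t)$. ``Its fibres have the expected codimension'' is exactly the statement that needs proof, not a method for proving it.

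The missing idea is to exploit that the $i$-th column depends only on its own pair $(\tau_i,y_i)$, by making the annihilating functional an extra unknown. The jets fail to span $J^2_x$ iff some $[\ell]\in\mathbb P((J^2_x)^*)$ kills all of them. So the bad set is the projection of the incidence set $I=\{(x,t,[\ell],\tau,y):\ \ell(\jet{2}{x}p_{\tau_i+t}(\cdot,y_i))=0\ \forall i\}$. For fixed $(x,t,[\ell])$ the $K$ equations are in disjoint variables. Each is a real-analytic equation on the connected $(n+1)$-manifold $(0,\infty)\times\Msf$, and it is not identically zero by Lemma~\ref{lem:ampleness}, since a nonzero $\ell$ cannot annihilate a spanning family. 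Hence each equation cuts at least one dimension, and the fibre has dimension $\le Kn$. The base $(x,t,[\ell])$ has dimension $n+1+(m-1)=n+m$, so $\dim I\le K(n+1)-(K-m-n)$. This recovers your threshold $K\ge m+n+1$: your $K-m+1$ is really $K$ equations minus the $m-1$ degrees of freedom of $[\ell]$. Compactness of the time slab and of the projective fibres makes the projection proper, so its image is subanalytic of dimension $<K(n+1)$ and hence null. Finally, drop the idea of exhausting $(0,\infty)$ by countably many slabs. The intersection of the corresponding uniform-spanning sets need not be open, or even nonempty, for smooth $g$, which is why the lemma is stated for one fixed slab $[0,T]$.
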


\begin{proof} \,
For clarity, we divide the proof into three steps.

\emph{Step 1: Construction of the spanning set $G$.}
The total evaluation map over the unnormalized parameter space is
\begin{equation}
  \widehat\Psi\colon (\Msf\times\R)\times
  \bigl(\R^K_{>0}\times(0,\infty)^K\times \Msf^K\bigr)
  \to J^2(\Msf,\R),
  \quad
  \widehat\Psi\bigl((x,t),\theta\bigr)=\jet{2}{x}u_t^{\theta}(x),
\end{equation}
with weight-direction derivative
$\partial_{w_i}\jet{2}{x}u_t^{\theta}(x)=\jet{2}{x}p_{\tau_i+t}(\cdot,y_i)(x)$. $\widehat\Psi$ is submersion if the component jets span $J^2_x$ at every source point $x$ simultaneously. Lemma~\ref{lem:ampleness} and Corollary~\ref{cor:submersion} guarantee spanning at each fixed $x$, for centers
in an $x$-dependent open set. However, a single choice of centers need not span at every $x$ at once. Fixed a compact time-slab $[0,T]$, suppose that we can restrict $ \widehat\Psi$ to the set where uniform spanning holds:
\begin{equation}
  \label{eq:G-def}
  G:=\Bigl\{(\tau,y):\ \{\jet{2}{x}p_{\tau_i+t}(\cdot,y_i)\}_{i=1}^K
       \ \text{spans}\ J^2_x\ \text{for every}\ (x,t)\in \Msf\times[0,T]\Bigr\}
     \ \subseteq\ (0,\infty)^K\times \Msf^K,
\end{equation}
and take the parameter set to be $\R^K_{>0}\times G$. On $\R^K_{>0}\times G$ the weight-block alone is surjective onto $J^2_x$ at every $(x,t)\in \Msf\times[0,T]$, so
$\widehat\Psi|_{(\Msf\times[0,T])\times(\R^K_{>0}\times G)}$ is a submersion. Now we prove that $G$ is nonempty and open for smooth $g$ exploiting the fact that a bundle everywhere generated by a family of sections is generated by finitely many of them.

\emph{(a) Pointwise spanning.} Fix $(x_0,t_0)\in \Msf\times[0,T]$. By
Lemma~\ref{lem:ampleness}, at the scale $s>T$, the family
$\{\jet{2}{x_0}p_{s}(\cdot,y):y\in \Msf\}$ spans the $m$-dimensional space $J^2_{x_0}$ and therefore it must contain a basis, namely there are $m=\binom{n+2}{2}$ centers $y_1^\circ,\dots,y_m^\circ$ and initial scales $\tau_i^\circ:=s-t_0>0$ (positive,
as $s>T\ge t_0$) whose jets $\jet{2}{x_0}p_{\tau_i^\circ+t_0}(\cdot,y_i^\circ)$
form a basis of $J^2_{x_0}$.

\emph{(b) The spanning set is open in $(x,t)$.}
With those $m$ centers and scales fixed, the jets span $J^2_x$ only if the Wronskian $W(x,t)$ as per \eqref{eq:wronskian} does not vanish. $W(x_0,t_0)\neq0$ by (a) and since $p_s(x,y)$ is smooth in $(x,y,s)$ for $s>0$, $W$ is continuous in $(x,t)$ jointly. Hence $W\neq0$ on an open neighborhood
$U_{(x_0,t_0)}\subseteq \Msf\times[0,T]$, i.e.\ the same $m$ centers span $J^2_x$ for all $(x,t)\in U_{(x_0,t_0)}$. 

\emph{(c) Compactness and pooling.} The neighborhoods $\{U_{(x_0,t_0)}\}$ built in (b) cover
the compact set $\Msf\times[0,T]$, so we can extract a finite sequence of neighborhoods $U_1,\dots,U_K$. Let
$Y=Y_1\cup\cdots\cup Y_K$, a collection of $Km$ centers, together with the associated scales. Given any $(x,t)\in \Msf\times[0,T]$, it lies in some $U_j$, where $Y_j\subseteq Y$
already spans $J^2_x$; adjoining the remaining centers preserves spanning, since a spanning set remains so under enlargement. $Y$ spans $J^2_x$ at
every $(x,t)\in \Msf\times[0,T]$, so $Y \subseteq G$ and $G\neq\varnothing$ once
$K\ge Km$.

It remains to prove that $G$ is open. This property holds true because $G$ is the set where the finitely many continuous Wronskians $W_1,\dots,W_K$ are nonvanishing over the compact
$\Msf\times[0,T]$, a finite union of open sets. 

\emph{Step 2: Transversality.}
$\widehat\Psi|_{(\Msf\times[0,T])\times(\R^K_{>0}\times G)}$ is a submersion, it is transverse to $\mathcal C$  and to
each stratum $\mathcal D_1^{(j)}=\mathcal C\cap\{\Hess\in\Sigma_j^\circ\}$. By
the parametric transversality theorem \cite{GuilleminPollack}, almost every $\theta \in G$ makes $\Psi_\theta$ transverse to $\mathcal C$
and to each stratum $\mathcal D_1^{(j)}$.

\emph{Step 3: Analytic metric.}
Now suppose $g$ real-analytic and $K\ge\binom{n+2}{2}+n+1$. We study the dimension of the bad set $\Theta_{\tau,y}\setminus G$ -- where uniform spanning fails -- in the joint parameter space $(0,\infty)^K\times \Msf^K$ of both scales $\tau$ and centers $y$.  $(\tau,y)\notin G$ if and only if there exist $(x,t)\in \Msf\times[0,T]$ and a nonzero $\ell\in(J^2_x)^*$ annihilating all $K$ jets. Actually, it is convenient to use the projective annihilator $[\ell]\in\mathbb P((J^2_x)^*)$ rather than $\ell\in(J^2_x)^*\setminus\{0\}$ for two reasons. First, the conditions
$\ell(\jet{2}{x}p_{\tau_i+t}(\cdot,y_i))=0$ are unchanged under
$\ell\mapsto\lambda\ell$ ($\lambda\neq0$), so considering $\ell$ would add a redundant scaling dimension. Second, $\mathbb P((J^2_x)^*)$ is compact whereas
$(J^2_x)^*\setminus\{0\}$ is not, and this compactness makes the projection $\pi$ below a proper map.
Let us consider the the incidence set
\begin{equation*}
I=\bigl\{(x,t,[\ell],\tau,y):\ 0\neq\ell\in(J^2_x)^*,\
     \ell(\jet{2}{x}p_{\tau_i+t}(\cdot,y_i))=0\ \forall i\bigr\},
\end{equation*}
The bad set is given by $\Theta_{\tau,y}\setminus G=\pi(I)$, where $\pi$ is the projection omitting the first three coordinates. The base $(x,t,[\ell])$ has dimension $n+1+(m-1)=n+m$, since $x\in \Msf$,
$t\in[0,T]$ and $[\ell]\in\mathbb P((J^2_x)^*)$. Over a fixed base point, the
$i$-th condition $\ell(\jet{2}{x}p_{\tau_i+t}(\cdot,y_i))=0$ constrains the pair
$(\tau_i,y_i)$, which ranges in the $(n{+}1)$-dimensional space $(0,\infty)\times \Msf$. The analyticity of the heat kernel implies that the function
$(\tau_i,y_i)\mapsto\ell(\jet{2}{x}p_{\tau_i+t}(\cdot,y_i))$ is real-analytic and, by
Lemma~\ref{lem:ampleness} at the effective scale $\tau_i+t$, not identically zero
in $y_i$ -- a nonzero $\ell$ cannot annihilate the whole spanning family. Therefore its zero set is a proper analytic variety, of dimension $(n{+}1)-1=n$. The
$K$ pairs are independently constrained, so
\begin{equation*}
\dim I\ \le\ (n+m)+K\cdot n\ =\ K(n+1)-(K-m-n),
\end{equation*}
which is strictly less than $\dim\bigl((0,\infty)^K\times \Msf^K\bigr)=K(n+1)$ once
$K\ge m+n+1$. 
Locally $I$ is a subanalytic set: on a local chart each condition $\ell(\jet{2}{x}p_{\tau_i+t}(\cdot,y_i))=0$ is a real-analytic equation, so around each point $I$ is defined by the finitely many analytic functions $F_1,\dots,F_K$. The projection $\pi$ omitting $(x,t,[\ell])$ is proper -- the forgotten coordinates $x\in \Msf$, $t\in[0,T]$, $[\ell]\in\mathbb P((J^2_x)^*)$
lie in compact spaces -- and since the image of a subanalytic set under a proper
analytic projection is subanalytic of no larger dimension \cite{zbMATH04103444,zbMATH07865143},
$\pi(I)$ is a subanalytic subset of the $K(n+1)$-dimensional parameter space of dimension less than $K(n+1)$, hence of measure zero. Thus $G$ is open, dense, and of full measure in $(0,\infty)^K\times \Msf^K$, and the good set $G$ is residual and of full measure in the whole parameter space. The thesis follow from the dimension count \eqref{eq:preimage-dim}.
\end{proof}

\begin{remark}[Descent to the probability simplex by scale-invariance]
\label{rem:scaling-descent}
Corollary~\ref{cor:submersion} lives on the cone $\R^K_{>0}$, whereas a probability mixture has $w\in\Delta^{\circ}_{K-1}$. The transversality conclusions of Lemma~\eqref{lem:transversality} descend from the cone to the simplex because everything is
invariant under the scaling $w\mapsto cw$, $c>0$:
\begin{itemize}
\item The critical points are scale-invariant:
      $\nabla(cu)=c\,\nabla u$ has the same zero set, and
      $\operatorname{Hess}(cu)=c\,\operatorname{Hess} u$ has the same kernel, corank, and germ type $A_k$ for every $c>0$.
\item Loci $\mathcal C=\{\nabla u=0\}$, $\Sigma_k^{\circ}$ and $\mathcal D_k$ are cones in the jet fiber -- each condition is scale-invariant -- and $u\mapsto cu$ acts on the fiber as a linear automorphism preserving them. Hence
      $\Psi_{(w,\tau,y)}\pitchfork\Sigma\iff\Psi_{(cw,\tau,y)}\pitchfork\Sigma$.
\end{itemize}
Consequently, the exceptional set
$B\subset\R^K_{>0}\times(0,\infty)^K\times \Msf^K$ is invariant under scaling of $w$. Furthermore, writing $w=r\omega$ with $r>0$,
$\omega\in\Delta^{\circ}_{K-1}$ -- so that
$\R^K_{>0}\cong(0,\infty)_r\times\Delta^{\circ}_{K-1}$ and
$dw=r^{K-1}\,dr\,d\omega$) -- if $B$ is null then
\[
  0=\operatorname{meas}(B)
   =\int_{\Delta^{\circ}}\!\Big(\int_0^\infty \mathbf 1[\,\omega\in B_\Delta\,]\,
     r^{K-1}\,dr\Big)d\omega
   =\Big(\underbrace{\textstyle\int_0^\infty r^{K-1}\,dr}_{=\,\infty}\Big)
     \operatorname{meas}_{\Delta}(B_\Delta),
\]
forcing $\operatorname{meas}_{\Delta}(B_\Delta)=0$. Thus a full-measure set of
unnormalized parameters yields a full-measure set of probability
mixtures with the same transversality, and the final statements may be read for
honest probability mixtures $w\in\Delta^{\circ}_{K-1}$ with hypothesis
$K\ge\binom{n+2}{2}$.
\end{remark}
Now we can prove that, generically, the bifurcations are of $A_k$ type and that they are discrete in time.
\begin{proposition}[Generic corank one]
\label{prop:corank-one}
For every $\theta$ in the good set of Lemma~\ref{lem:transversality}, every
bifurcation $(x^*,t^*)$ satisfies $\dim\ker\operatorname{Hess} u_{t^*}(x^*)=1$; no corank-$\ge2$
degeneracy occurs.
\end{proposition}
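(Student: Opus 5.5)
The argument is a dimension count that turns the transversality of Lemma~\ref{lem:transversality} into a statement about preimages. Fix $\theta$ in the good set. Take a bifurcation $(x^*,t^*)$ with $t^*\in[0,T]$. Since $\nabla u_{t^*}(x^*)=0$ and $\operatorname{Hess}u_{t^*}(x^*)$ is degenerate, the jet $\Psi_\theta(x^*,t^*)=\jet{2}{x^*}u_{t^*}$ lies in $\mathcal D_1=\bigsqcup_{j\ge1}\mathcal D_1^{(j)}$, where $\mathcal D_1^{(j)}$ is the stratum whose Hessian block has corank exactly $j$. The claim is equivalent to saying that $\Psi_\theta$ never meets $\mathcal D_1^{(j)}$ for $j\ge2$.

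First, the Riemannian splitting of $J^2$ (value, gradient, covariant Hessian) identifies the Hessian block of $\Psi_\theta(x,t)$ with $\operatorname{Hess}u_t(x)$ at critical points. There the Christoffel correction multiplies the vanishing gradient, as already noted in \eqref{eq:coordinate_hessian}. Hence $\dim\ker\operatorname{Hess}u_{t^*}(x^*)=j$ exactly when $\Psi_\theta(x^*,t^*)\in\mathcal D_1^{(j)}$. Second, by Lemma~\ref{lem:transversality}, $\Psi_\theta\colon \Msf\times[0,T]\to J^2$ is transverse to each smooth stratum $\mathcal D_1^{(j)}$. By \eqref{eq:total-codim} this stratum has codimension $n+\binom{j+1}{2}$ in $J^2$.

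Third, I use the elementary fact that a map $F\colon X\to Y$ transverse to a submanifold $Z$ with $\codim Z>\dim X$ cannot hit $Z$. At a point $p\in F^{-1}(Z)$, transversality requires $dF_p(T_pX)+T_{F(p)}Z=T_{F(p)}Y$. This forces $\dim X\ge\codim Z$, a contradiction. Here $\dim(\Msf\times[0,T])=n+1$, while $\codim\mathcal D_1^{(j)}=n+j(j+1)/2\ge n+3$ for $j\ge2$. This is exactly the count \eqref{eq:preimage-dim}, which gives a negative expected dimension, so $\Psi_\theta^{-1}(\mathcal D_1^{(j)})=\varnothing$ for all $j\ge2$. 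Only $j=1$ survives, and its preimage has dimension $0$, i.e.\ consists of isolated points. This also gives the discreteness in time announced before the proposition.

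The step needing most care is making sure transversality holds on the whole domain used, including the endpoints of $[0,T]$, and that the strata really are smooth submanifolds. For the strata I rely on \eqref{eq:codim-sym-det} and the observation that gradient and Hessian constraints act on independent coordinate blocks. For the boundary I would first apply transversality on an open slab $\Msf\times(-\delta,T+\delta)$, which the construction of $G$ permits since the scales $\tau_i>0$ keep $p_{\tau_i+t}$ smooth there. The statement is then read for bifurcations with $t^*$ in any fixed compact slab, with $T$ arbitrary. Finally, Remark~\ref{rem:scaling-descent} transfers the conclusion from unnormalized weights to the probability simplex, because both the corank condition and transversality are invariant under $w\mapsto cw$.
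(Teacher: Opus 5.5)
Your proposal is correct and follows the paper's own proof. Lemma~\ref{lem:transversality} gives transversality of $\Psi_\theta$ to each stratum $\mathcal D_1^{(j)}$, and since $\codim\mathcal D_1^{(j)}=n+\binom{j+1}{2}>n+1$ for $j\ge2$, this forces $\Psi_\theta^{-1}(\mathcal D_1^{(j)})=\varnothing$, leaving only the corank-one stratum with zero-dimensional preimage. Your additions make explicit what the paper leaves implicit: the vanishing of the Christoffel term at critical points, the inequality $\codim Z\le\dim X$ at a transverse intersection, and the handling of the slab endpoints. One caution on your aside: discreteness in time also needs the preimage to be closed, which the paper proves separately in Proposition~\ref{prop:discreteness}.
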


\begin{proof} \,
By Lemma~\ref{lem:transversality}, $\Psi_\theta$ is transverse to every stratum $\mathcal D_1^{(j)}=\mathcal C\cap\{\text{Hessian block}\in\Sigma_j^\circ\}$. By the
transverse-preimage dimension count \eqref{eq:preimage-dim},
$\Psi_\theta^{-1}(\mathcal D_1^{(j)})$ is a manifold of negative dimension for $j\ge2$ and therefore empty. Consequently, the bifurcation locus is $\Psi_\theta^{-1}(\mathcal D_1^{(1)})$, of dimension $0$, which contains no point of corank $\ge2$.
\end{proof}

\begin{proposition}[Discreteness of bifurcation times]
\label{prop:discreteness}
For every $\theta$ in the good set of Lemma~\ref{lem:transversality}, the set of
bifurcations is discrete in $\Msf\times(0,\infty)$; in particular the bifurcation
times are discrete, and finite on every compact subinterval.
\end{proposition}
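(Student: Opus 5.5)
The plan is to read discreteness directly off the transversality of Lemma~\ref{lem:transversality} and the dimension count \eqref{eq:preimage-dim}. Fix $\theta$ in the good set, so that $\Psi_\theta(x,t)=\jet{2}{x}u_t^\theta(x)$ is transverse to every stratum $\mathcal D_1^{(j)}$ of $\mathcal D_1$. By definition, the set of bifurcations is $\Psi_\theta^{-1}(\mathcal D_1)$. By Proposition~\ref{prop:corank-one}, this set coincides with $\Psi_\theta^{-1}(\mathcal D_1^{(1)})$, since the preimages of the strata with $j\ge2$ are empty.

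First I will show that this preimage is a $0$-dimensional embedded submanifold of $\Msf\times(0,T)$. Transversality gives that $\Psi_\theta^{-1}(\mathcal D_1^{(1)})$ is an embedded submanifold of codimension $n+1$ in the $(n+1)$-dimensional source, so it is $0$-dimensional. A $0$-dimensional embedded submanifold is a discrete subset: each point has a neighbourhood containing no other point of the set.

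The delicate step is closedness, since discreteness in the relative sense alone does not exclude accumulation. Accumulation could only occur at a limit point $(x_\infty,t_\infty)$ lying in the closure of $\mathcal D_1^{(1)}$ but not in $\mathcal D_1^{(1)}$. I will handle this as follows.
\begin{enumerate}
\item The set $\mathcal D_1=\mathcal C\cap\{\det\text{Hessian}=0\}$ is closed in $J^2$, and $\Psi_\theta$ is continuous. Hence $\Psi_\theta^{-1}(\mathcal D_1)$ is closed, and any limit point lies in $\Psi_\theta^{-1}(\mathcal D_1^{(j)})$ for some $j$.
\item These preimages are empty for $j\ge2$, so the limit point lies in $\Psi_\theta^{-1}(\mathcal D_1^{(1)})$.
\item There it is isolated, which contradicts its being an accumulation point.
\end{enumerate}
Hence the bifurcation set is closed and discrete in $\Msf\times(0,T)$. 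As $T$ is arbitrary, and the good set may be intersected over $T\in\mathbb N$ (a countable intersection of full-measure sets), the bifurcation set is discrete in $\Msf\times(0,\infty)$.

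Finally, for a compact interval $[a,b]\subset(0,\infty)$, the set $\Msf\times[a,b]$ is compact. A closed discrete subset of a compact set is finite, so there are finitely many bifurcations with $t\in[a,b]$. Their time projections are therefore finite on every compact subinterval, which also gives discreteness of the bifurcation times.

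The main obstacle I anticipate is the closedness and Whitney-regularity issue: it must be checked that the closure of the corank-one stratum meets nothing besides the higher-corank strata. This follows from the determinantal description in \eqref{eq:corank-strata}. A secondary point is the endpoint $t=0$, which is excluded from the claimed domain $(0,\infty)$, so working on $[a,b]$ with $a>0$ avoids it.
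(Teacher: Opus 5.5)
Your proposal is correct and follows the paper's proof essentially step for step: you reduce the bifurcation locus to $\Psi_\theta^{-1}(\mathcal D_1^{(1)})$ via Proposition~\ref{prop:corank-one}, get zero-dimensionality from transversality, get closedness because it is the preimage of the closed set $\mathcal D_1$, and get finiteness on compact slabs $\Msf\times[a,b]$. Your extra intersection over $T\in\mathbb N$ addresses a point the paper passes over silently, since its good set is built for a fixed slab $[0,T]$, but your justification only works when $g$ is real-analytic, where each good set has full measure in the whole parameter space; for merely smooth $g$ the sets $G_T$ are only nonempty open and decrease as $T$ grows, so their intersection need not be nonempty.
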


\begin{proof} \,
Write $B=\Psi_\theta^{-1}(\mathcal D_1)$ for the bifurcation locus. By
Proposition~\ref{prop:corank-one} the preimages
$\Psi_\theta^{-1}(\mathcal D_1^{(j)})$, $j\ge2$, are empty, so
$B=\Psi_\theta^{-1}(\mathcal D_1^{(1)})$, which by transversality to the stratum
$\mathcal D_1^{(1)}=\mathcal C\cap\{\Hess\in\Sigma_1^\circ\}$ is a $0$-dimensional
embedded submanifold, therefore every point of $B$ is isolated. Moreover $B$ is
closed in $\Msf\times(0,\infty)$, being the preimage of the closed set
$\mathcal D_1$ (the corank-$\ge1$ critical locus) under the continuous
$\Psi_\theta$. Hence $B$ is closed and
discrete; on a compact time-slab $\Msf\times[t_0,t_1]$ it is compact, hence
finite, and its projection to the time axis is discrete.
\end{proof}

\subsection{The generic fold is $A_2$}
\label{sec:finite-det}

Proposition~\ref{prop:corank-one} only guarantees that the bifurcations are of type $A_k$. Now we prove that, generically, they are $A_2$, whose normal form at a fixed time is given by \eqref{eq:Ak-normal-form} with $k=2$. 

\subsubsection{The corank-one reduction is smooth}
\label{subsec:smooth-splitting}
Let us consider a bifurcation at $(x^*,t^*)$. By Proposition~\ref{prop:corank-one} it has
$\dim\ker \operatorname{Hess} u_{t^*} (x^*)=1$. Then, by Thom's splitting lemma there is a smooth change of coordinates $(\xi,\eta)$ near $x^*$ such that
\begin{equation}
  \label{eq:smooth-splitting}
  u_{t^*}\;=\;\mathrm{const}+\phi(\xi)+\tfrac12\sum_{j=1}^{n-1}\varepsilon_j\,\eta_j^2,
  \qquad \phi(0)=\phi'(0)=\phi''(0)=0,
\end{equation}
where $\phi\in C^\infty$. The scalars
\begin{equation}
  \label{eq:phi-derivatives}
  \phi'''(0),\ \phi''''(0),\ \dots
\end{equation}
are components of the $3$-jet, $4$-jet, $\dots$ of $u_{t^*}$ and depend smoothly on $j^k_{x^*}u_{t^*}$.

The reduction of a bifurcation to the $A_2$ fold normal form $\xi^3$ requires that that infinitely-flat critical points do not
occur, and in particular that the reduced one-variable germ has nonvanishing cubic ($\phi'''(0)\neq0$). Both facts follow from a single transversality count in $J^3$, valid for smooth $g$.

To determine that the bifurcation is $A_2$, we need to work in $J^3$. Inside the critical points of corank-one locus $\mathcal{C} \cap \Sigma_1^{\circ}$, we require the vanishing of the reduced cubic, considering the subspace:
\begin{equation}
  \label{eq:Ageq3-stratum}
  \mathcal A_{\ge3}
  \;:=\;
  \{\nabla u=0\}\ \cap\ \{\text{Hessian}\in\Sigma_1^{\circ}\}\ \cap\
  \{\phi'''(0)=0\}
  \ \subset\ J^3(\Msf,\R).
\end{equation}
The three blocks are independent directions of the jet fiber: $\nabla u=0$ is
$n$ conditions; $\Sigma_1^{\circ}$ (corank exactly one) is imposing $1$ condition on the
Hessian block; and the condition $\phi'''(0)=0$ is unconstrained by the lower-order blocks. Hence
\begin{equation}
  \label{eq:Ageq3-codim}
  \codim_{J^3}\mathcal A_{\ge3}\;=\;n+1+1\;=\;n+2 .
\end{equation}

Let $\Psi^{3}(x,t)=\jet{3}{x}u_t(x)$ be the parametrized $3$-jet section. If $\Psi^{3}$ is transverse to $\mathcal A_{\ge3}$,
\begin{equation}
  \label{eq:Ageq3-preimage}
  \dim (\Psi^{3})^{-1}(\mathcal A_{\ge3})
  \;=\;(n+1)-(n+2)\;=\;-1,
\end{equation}
namely the preimage is empty and generically $\phi'''(0)\neq0$ at every bifurcation.

\subsubsection{Flat germs are contained in \texorpdfstring{$\mathcal A_{\ge3}$}{A>=3}}
\label{subsec:flat-in-Ageq3}

\begin{lemma}
\label{lem:flat-exclusion}
Let $x^*$ be a corank-one critical point of $u_{t^*}$ whose reduced germ $\phi$ in \eqref{eq:smooth-splitting} is infinitely flat
($\phi^{(k)}(0)=0$ for all $k$). Then $j^3_{x^*}u_{t^*}\in\mathcal A_{\ge3}$.
Consequently any parameter $\theta$ for which $\Psi^{3}_\theta$ is transverse to
$\mathcal A_{\ge3}$ admits no flat critical points at all.
\end{lemma}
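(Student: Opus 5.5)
The plan is to verify directly the three defining conditions of $\mathcal A_{\ge3}$ in \eqref{eq:Ageq3-stratum} for the $3$-jet $j^3_{x^*}u_{t^*}$. Then I deduce the consequence from the empty-preimage count \eqref{eq:Ageq3-preimage}.

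\textbf{Step 1: criticality and corank.} By hypothesis $x^*$ is a critical point, so $\nabla u_{t^*}(x^*)=0$ and the gradient block of the jet lies in $\mathcal C$. Again by hypothesis the Hessian has corank exactly one, so the Hessian block lies in $\Sigma_1^{\circ}$. Both conditions are chart-independent: the gradient condition is clear, and the corank condition follows from the chart-independence Proposition and its Corollary on the Hessian at a critical point.

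\textbf{Step 2: the cubic condition.} Apply Thom's splitting lemma as in \eqref{eq:smooth-splitting} to obtain coordinates $(\xi,\eta)$ with $u_{t^*}=\mathrm{const}+\phi(\xi)+\tfrac12\sum_j\varepsilon_j\eta_j^2$. Flatness gives in particular $\phi'''(0)=0$.

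The one point needing care is that ``$\phi'''(0)=0$'' is a well-defined condition on $j^3_{x^*}u_{t^*}$ alone, independent of the non-unique splitting coordinates. I would argue this concretely. Choose $v$ spanning $\ker\operatorname{Hess}u_{t^*}(x^*)$. The splitting diffeomorphism has differential at $x^*$ mapping $\partial_\xi$ to a nonzero multiple $cv$ of $v$. The $\eta$-part is quadratic with no $\xi$-dependence, and $\phi''(0)=0$. Hence the third derivative of $u_{t^*}$ along the kernel at a critical point with kernel $v$ is
\[
  \phi'''(0)=c^3\,\nabla^3u_{t^*}(x^*)(v,v,v).
\]
The correction terms coming from second derivatives of the coordinate change pair with the Hessian. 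Because the first argument lies in the kernel, these terms vanish after restricting to the kernel direction. This is the standard invariance of the cubic coefficient on $\ker H$, the same quantity $A$ appearing in (C2).

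Therefore $\phi'''(0)=0$ is equivalent to $\nabla^3u_{t^*}(x^*)(v,v,v)=0$, which is a condition on the $3$-jet alone. So $j^3_{x^*}u_{t^*}\in\mathcal A_{\ge3}$.

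\textbf{Step 3: the consequence.} If $\Psi^3_\theta$ is transverse to $\mathcal A_{\ge3}$, then by \eqref{eq:Ageq3-preimage} the preimage $(\Psi^3_\theta)^{-1}(\mathcal A_{\ge3})$ has dimension $-1$ and is empty. A flat corank-one critical point $(x^*,t^*)$ would satisfy $\Psi^3_\theta(x^*,t^*)=j^3_{x^*}u_{t^*}\in\mathcal A_{\ge3}$ by Steps 1–2, which is a contradiction. Degenerate critical points of corank $\ge2$ are already excluded by Proposition~\ref{prop:corank-one}, so no flat critical points exist.

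The main obstacle is the invariance claim in Step 2. It is a genuine computation with the chain rule to third order, and I would do it in normal coordinates at $x^*$, writing the coordinate change as $x=\Phi(\xi,\eta)$.
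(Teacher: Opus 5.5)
Your proposal is correct and follows the same route as the paper: flatness gives $\phi'''(0)=0$, so together with criticality and corank one the $3$-jet lies in $\mathcal A_{\ge3}$, and a transverse $\Psi^3_\theta$ has empty preimage there by the count $(n+1)-(n+2)=-1$. Your Step~2 identity $\phi'''(0)=c^3\,\nabla^3u_{t^*}(x^*)(v,v,v)$ usefully justifies that the cubic condition depends only on the $3$-jet, which the paper asserts without proof; the closing appeal to Proposition~\ref{prop:corank-one} is unnecessary, since flat points are corank one by definition, and it would bring in a $J^2$-transversality hypothesis the lemma does not assume.
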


\begin{proof} \,
Flatness forces $\phi'''(0)=0$, which together with
$\nabla u=0$ and corank one is exactly the definition of  $\mathcal A_{\ge3}$. By
\eqref{eq:Ageq3-preimage} that stratum has empty preimage under a transverse
$\Psi^{3}_\theta$, so no such point exists.
\end{proof}

\subsubsection{The surviving germ is exactly \texorpdfstring{$A_2$}{A2}}
\label{subsec:is-A2}

\begin{lemma}[Smooth $2$-determinacy of the fold]
\label{lem:A2-determinacy}
Let $\phi\in C^\infty(\R,0)$ satisfy $\phi''(0)=0$ and
$\phi'''(0)\neq0$. Then $\phi$ is right-equivalent (by a $C^\infty$
coordinate change) to $\xi\mapsto\xi^3$, namely a corank-one critical point with $\phi'''(0)\neq0$ is an $A_2$ fold.
\end{lemma}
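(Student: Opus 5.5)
We take $\phi(0)=\phi'(0)=0$, as in the splitting \eqref{eq:smooth-splitting}. This is implicit in the statement: $\phi$ is the reduced germ at a critical point, with the constant absorbed into the constant term. Under this normalisation the argument is a direct one-variable computation. It uses the Hadamard division lemma followed by a real cube-root change of coordinates, and needs no general finite-determinacy machinery.

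\emph{Step 1 (division).} Since $\phi(0)=\phi'(0)=\phi''(0)=0$, Taylor's formula with integral remainder gives
\[
\phi(\xi)=\xi^3 h(\xi),\qquad h(\xi)=\frac12\int_0^1(1-s)^2\,\phi'''(s\xi)\,ds .
\]
Differentiating under the integral shows that $h$ is $C^\infty$ near $0$, because $\phi$ is smooth. Its value at the origin is $h(0)=\phi'''(0)/6\neq0$.

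\emph{Step 2 (new coordinate).} By continuity $h$ has constant sign on a small interval $(-\delta,\delta)$. The real cube root $r\mapsto r^{1/3}$ is $C^\infty$ on $\R\setminus\{0\}$ for either sign. Hence $k(\xi):=h(\xi)^{1/3}$ is smooth on that interval and nonvanishing. We then set $\psi(\xi):=\xi\,k(\xi)$. This gives $\psi(0)=0$ and $\psi'(0)=k(0)=(\phi'''(0)/6)^{1/3}\neq0$. By the inverse function theorem, $\psi$ is a $C^\infty$ diffeomorphism germ of $(\R,0)$.

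\emph{Step 3 (conclusion).} By construction $\psi(\xi)^3=\xi^3h(\xi)=\phi(\xi)$. Therefore $\phi\circ\psi^{-1}(\zeta)=\zeta^3$, which is the asserted right equivalence to the $A_2$ normal form. Combined with the splitting \eqref{eq:smooth-splitting}, a corank-one critical point of $u_{t^*}$ with $\phi'''(0)\neq0$ is an $A_2$ fold in the sense of Definition~\ref{def:Ak}. The $\tfrac12$ in front of $\xi^3$ is unnecessary: no rescaling is needed, since the cube root absorbs every nonzero constant, including negative ones.

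The only point requiring care is Step 2, namely that the cube root preserves smoothness. This works because $h(0)\neq0$; that is precisely where the hypothesis $\phi'''(0)\neq0$ enters. The same argument would fail for even powers when $h<0$, but cubes impose no sign restriction.
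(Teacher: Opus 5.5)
Your proof is correct and takes essentially the same route as the paper. Both write $\phi=\xi^3h$ with $h(0)=\phi'''(0)/6\neq0$ by Hadamard/Taylor division (under the same implicit normalisation $\phi(0)=\phi'(0)=0$), then take the smooth real cube root $\xi\,h(\xi)^{1/3}$ and apply the inverse function theorem. Your closing aside about a factor $\tfrac12$ is moot, since Definition~\ref{def:Ak} has no such factor in front of $\xi^{k+1}$.
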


\begin{proof} \,
Applying Hadamard's lemma thrice to $\phi(\xi)$ -- using that $\phi(0)=\phi'(0)=\phi''(0)=0$ -- we can write $\phi(\xi)=\xi^3\psi(\xi)$ for some $\psi\in C^\infty$ such that $\psi(0)=\phi'''(0)/6\neq0$. In a neighborhood of $\xi=0$ the sign permanence theorem guarantees that $\psi$ has constant sign, so
$\chi(\xi):=\xi\,\psi(\xi)^{1/3}$ is a well-defined $C^\infty$ function with
$\chi'(0)=\psi(0)^{1/3}\neq0$; therefore by the inverse function theorem $\chi$ is a local
$C^\infty$ diffeomorphism, and $\phi=\chi^3$. The coordinate change $\xi\mapsto\chi(\xi)$ yields the thesis.
\end{proof}

Assembling the pieces: corank one (Proposition~\ref{prop:corank-one}) reduces a bifurcation to a smooth one-variable germ $\phi(\xi)$ with $\phi''(0)=0$ \eqref{eq:smooth-splitting}; the cubic count \eqref{eq:Ageq3-preimage} makes $\phi'''(0)\neq0$ generic and simultaneously excludes flat germs (Lemma~\ref{lem:flat-exclusion}); and smooth determinacy (Lemma~\ref{lem:A2-determinacy}) identifies the surviving germ as the fold $\xi^3$.

\begin{proposition}[Generic folds]
\label{thm:generic-folds}
Let $(\Msf^n,g)$ be closed with $g$ smooth and let $u_t$ be the heat-kernel mixture \eqref{eq:mixture-density} with
$K$ sufficiently large components ($K\ge\binom{n+3}{3}+n+1$
suffices when $g$ is analytic) and weights $w\in\R^K_{>0}$ (equivalently, after
normalization, $w\in\Delta^{\circ}_{K-1}$; see
Remark~\ref{rem:scaling-descent}). There is a generic set of parameters $\theta$
-- open and of positive measure for smooth $g$, of full measure
when $g$ is real-analytic -- such that every bifurcation $(x^*,t^*)$ of the critical-point structure of $u_t$ is an $A_2$ fold: $\operatorname{Hess} u_{t^*}(x^*)$ has one-dimensional kernel and the reduced cubic
satisfies $\phi'''(0)\ne0$.
\end{proposition}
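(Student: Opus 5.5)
The plan is to rerun the three-step argument of Lemma~\ref{lem:transversality} with $J^2$ replaced by $J^3$ and the target stratum $\mathcal D_1^{(j)}$ replaced by $\mathcal A_{\ge3}$ of \eqref{eq:Ageq3-stratum}. The conclusion then follows by combining Proposition~\ref{prop:corank-one}, Lemma~\ref{lem:flat-exclusion} and Lemma~\ref{lem:A2-determinacy}.

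\emph{Step 1 (uniform spanning in $J^3$).} Lemma~\ref{lem:ampleness} holds for every $r$, so I will take $r=3$ and $m_3=\binom{n+3}{3}=\dim J^3_x$. The compactness-and-pooling argument of Step~1 of Lemma~\ref{lem:transversality} then goes through with $3$-jet Wronskians. This produces a nonempty open set $G_3$ of $(\tau,y)$ such that $\{\jet{3}{x}p_{\tau_i+t}(\cdot,y_i)\}_i$ spans $J^3_x$ for all $(x,t)\in\Msf\times[0,T]$. Since $3$-jet spanning implies $2$-jet spanning, $G_3\subseteq G$, and Proposition~\ref{prop:corank-one} applies on $G_3$. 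On $\R^K_{>0}\times G_3$ the weight block of $\widehat\Psi^3((x,t),\theta)=\jet{3}{x}u^\theta_t(x)$ is already onto $J^3_x$, exactly as in Corollary~\ref{cor:submersion}. Hence $\widehat\Psi^3$ is a submersion and is transverse to any submanifold of $J^3$.

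\emph{Step 2 (transversality to $\mathcal A_{\ge3}$).} I first need to check that $\mathcal A_{\ge3}$ is a smooth submanifold of codimension $n+2$, which is \eqref{eq:Ageq3-codim}. Near a point of $\mathcal C\cap\{\Hess\in\Sigma_1^\circ\}$, the kernel line $\R v$ of the Hessian depends smoothly on the Hessian block. The reduced cubic then equals $\phi'''(0)=\nabla^3u(v,v,v)$ up to a nonzero factor: at a corank-one critical point the splitting-lemma correction to the cubic term along the kernel vanishes, because the $\eta$-coupling enters only through $H|_{\ran H}^{-1}$ applied to terms of order $\ge2$ in $\xi$. So locally $\phi'''(0)=0$ is the single equation $\nabla^3u(v,v,v)=0$ on the cubic block. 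Its differential in the cubic-block directions is nonzero (vary along $v^{*\otimes3}$). This confirms that $\mathcal A_{\ge3}$ is a codimension-$(n+2)$ submanifold.

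Parametric transversality \cite{GuilleminPollack} then gives, for almost every $\theta\in\R^K_{>0}\times G_3$, that $\Psi^3_\theta\pitchfork\mathcal A_{\ge3}$. The count \eqref{eq:Ageq3-preimage} shows the preimage is empty. Therefore every bifurcation has corank one by Proposition~\ref{prop:corank-one} and satisfies $\phi'''(0)\neq0$; flat germs are excluded by Lemma~\ref{lem:flat-exclusion}. Lemma~\ref{lem:A2-determinacy}, together with the splitting \eqref{eq:smooth-splitting}, identifies the germ as $A_2$. For smooth $g$ this yields an open set $G_3$ with a full-measure good subset, hence a set of positive measure. Openness of the final good set can be obtained by intersecting with the open condition that the finitely many bifurcations on $[0,T]$ are nondegenerate; this is stable under $C^3$-small perturbation, since $\nabla^3u(v,v,v)\neq0$ is an open condition on a compact bifurcation set.

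\emph{Step 3 (analytic case).} I will repeat the incidence-variety count of Step~3 of Lemma~\ref{lem:transversality} with $J^3$. The base $(x,t,[\ell])$ now has dimension $n+1+(m_3-1)=n+m_3$. Each of the $K$ conditions $\ell(\jet{3}{x}p_{\tau_i+t}(\cdot,y_i))=0$ cuts the $(n+1)$-dimensional space of $(\tau_i,y_i)$ to dimension $n$, by analyticity and Lemma~\ref{lem:ampleness}. This gives $\dim I\le K(n+1)-(K-m_3-n)$, which is less than $K(n+1)$ once $K\ge\binom{n+3}{3}+n+1$. Proper subanalytic projection then makes the complement of $G_3$ null. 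Descent to the simplex follows from Remark~\ref{rem:scaling-descent}, since $\mathcal A_{\ge3}$ is a cone: the condition $\nabla^3u(v,v,v)=0$ is invariant under scaling.

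The main obstacle is Step~2's claim that $\{\phi'''(0)=0\}$ is a smooth hypersurface transverse to the lower blocks. This requires expressing the reduced cubic intrinsically as $\nabla^3 u(v,v,v)$ at a corank-one critical point. I would verify this first by a direct Lyapunov--Schmidt computation as in the proof of Proposition~\ref{thm:a2_perturbation}, where the same identity $\partial_s^2\phi=\nabla^3\Phizero(v,v,v)$ already appears.
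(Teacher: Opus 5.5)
Your proposal is correct and follows essentially the same route as the paper: uniform $J^3$-spanning via the compactness/pooling construction, a submersion in the weight directions, parametric transversality to $\mathcal A_{\ge3}$ with the empty-preimage count \eqref{eq:Ageq3-preimage}, then Proposition~\ref{prop:corank-one}, Lemma~\ref{lem:flat-exclusion} and Lemma~\ref{lem:A2-determinacy}, and the incidence count redone with $\binom{n+3}{3}$ in the analytic case. Your additional checks fill in points the paper only asserts: identifying $\phi'''(0)$ with $\nabla^3u(v,v,v)$, which shows that $\mathcal A_{\ge3}$ really is a codimension-$(n+2)$ submanifold, and deriving openness of the final good set from the stability of nondegenerate bifurcations on a compact time slab.
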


\begin{proof} \,
By Proposition~\ref{prop:corank-one} every bifurcation is corank one, so the
smooth splitting \eqref{eq:smooth-splitting} applies. Transversality of the
$3$-jet section $\Psi^{3}_\theta$ to $\mathcal A_{\ge3}$ holds for a generic set
of $\theta$ by the Thom--Abraham theorem: the $J^3$ ampleness of
Lemma~\ref{lem:ampleness}, read as a submersion in the unnormalized weight
directions exactly as in Corollary~\ref{cor:submersion}, makes the total $3$-jet
evaluation a submersion on the uniformly-spanning set $G$ (defined as in
\eqref{eq:G-def} with $J^3$ replacing $J^2$). The compactness construction of
Lemma~\ref{lem:transversality}, transfers to $G$ verbatim, since
ampleness (Lemma~\ref{lem:ampleness}) holds at every jet order $r$ and every
scale, so the pointwise-spanning, openness, and pooling arguments apply unchanged
with $J^3$ in place of $J^2$. Sard theorem gives
the measure-zero exceptional set within $G$; scale-invariance (Remark~\ref{rem:scaling-descent}) transfers these findings to probability mixtures. As in
Lemma~\ref{lem:transversality}, $G$ is open and of positive measure in the parameter space for smooth $g$ and of full measure when $g$ is real-analytic; the lower bound on $p$ follows from the same dimensional count in the proof of Lemma~\ref{lem:transversality}, applied to $J^3$ instead of $J^2$. For $\theta \in G$,
\eqref{eq:Ageq3-preimage} forces $\phi'''(0)\neq0$ at every bifurcation, and
Lemma~\ref{lem:flat-exclusion} prevents flat germs.
Lemma~\ref{lem:A2-determinacy} then identifies each bifurcation as an $A_2$ fold. \qed
\renewcommand{\qed}{}
\end{proof}

In order to obtain the local normal form at a bifurcation point, we need to introduce a particular notion of equivalence between germs of functions. Let $\mathcal{E}$ be the ring of the germs at zero with maximal ideal $\langle \xi \rangle$. The group $R$ of germs of diffeomorphism acts on $\mathcal{E}$ via $f \mapsto f \circ \phi$, $\phi \in R$ and $f \in \mathcal{E}$. Now consider the group of translations $(\mathbb{R},+)$. We call $R^+$ the direct product of the groups $R$ and $(\mathbb{R},+)$. An element of $R^+$ acts on $\mathcal{E}$ via the action $f \mapsto f \circ \phi - c$, $(\phi,c) \in R^+$. We say that two germs $f$ and $g$ in $\mathcal{E}$ are $R^+$-equivalent if there is a pair $(\phi,c) \in R^+$ such that $g=f \circ \phi + c$.

\begin{corollary}[Normal form at a speciation event]
\label{cor:cerf}
Let $\theta$ be in the good set of Theorem~\ref{thm:generic-folds} and let
$(x^*,t^*)$ be a bifurcation. Then there are a $t$-dependent $C^\infty$ change of
spatial coordinates near $x^*$, a $C^\infty$ reparametrization of $t$ near $t^*$,
and signs $\varepsilon_1,\dots,\varepsilon_n\in\{\pm1\}$ such that
\begin{equation}
  \label{eq:cerf2}
  u_t(x_1,\dots,x_n)\;=\;c(t)\;+\;x_1^3\;+\;\varepsilon_1\,(t-t^*)\,x_1
   \;+\;\varepsilon_2x_2^2+\dots+\varepsilon_nx_n^2 ,
\end{equation}
where $c(t)$ is a smooth function of $t$ alone and the signs
$\varepsilon_2,\dots,\varepsilon_n$ are those of the nonzero Hessian eigenvalues
at $(x^*,t^*)$.
\end{corollary}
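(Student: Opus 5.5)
The plan is to build the normal form in three layers: a parametrized splitting that isolates a one-variable family, a reduction of that family to the fold unfolding using the $R^+$-equivalence just introduced, and a time reparametrization.

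\textbf{Step 1: splitting.} By Proposition~\ref{thm:generic-folds}, at $(x^*,t^*)$ the Hessian $\operatorname{Hess} u_{t^*}(x^*)$ has a one-dimensional kernel, and the reduced germ satisfies $\phi'''(0)\neq0$. I apply the Parametrized Morse (splitting) Lemma recalled in Section~\ref{sec:Ak} to the family $u_t$ near $(x^*,t^*)$. This gives a $t$-dependent smooth coordinate change $(\xi,\eta)$ with
\[
u_t=\psi(t)+\phi_t(\xi)+\tfrac12\sum_j\varepsilon_j\eta_j^2 .
\]
The signs are constant near $t^*$ because the nonzero Hessian eigenvalues do not change sign there. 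Rescaling each $\eta_j$ by $\sqrt2$ turns $\tfrac12\varepsilon_j\eta_j^2$ into $\varepsilon_j x_j^2$ for $j\ge2$.

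\textbf{Step 2: the one-variable family.} At $t=t^*$ we have $\phi_{t^*}(0)=\phi_{t^*}'(0)=\phi''_{t^*}(0)=0$ and $\phi_{t^*}'''(0)\neq0$. By Lemma~\ref{lem:A2-determinacy}, $\phi_{t^*}$ is right-equivalent to $\xi^3$. The family $\phi_t$ is then a one-parameter deformation of the $A_2$ germ.

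The miniversal $R^+$-unfolding of $\xi^3$ is $\xi^3+\lambda\xi$. The base is spanned by $\mathcal E/(\langle\phi'\rangle+\mathbb R)$, i.e. by $\xi$, once constants are absorbed through the $R^+$ translation. Versality therefore gives a smooth $t$-dependent diffeomorphism $\xi=h_t(x_1)$, a smooth function $c_1(t)$, and an induced map $\lambda(t)$ with $\lambda(t^*)=0$ such that
\[
\phi_t(h_t(x_1))=c_1(t)+x_1^3+\lambda(t)x_1 .
\]
I will prove this concretely rather than citing the abstract theorem. I first remove the quadratic coefficient by a shift $\xi\mapsto\xi+a(t)$. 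By the implicit function theorem this works because $\phi'''\neq0$. I then apply the Malgrange preparation / division theorem to $\phi_t'$, which is a smooth deformation of $3\xi^2$.

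\textbf{Step 3: transversality and time reparametrization.} This step carries the main difficulty. I must show $\lambda'(t^*)\neq0$, so that $\lambda(t)$ can serve as a new time variable. I would obtain this from the transversality of $\Psi_\theta$ to $\mathcal D_1^{(1)}$ in Lemma~\ref{lem:transversality}. Transversality means the map $(x,t)\mapsto(\nabla u_t(x),\det\operatorname{Hess})$ has invertible differential at $(x^*,t^*)$.

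Computing this differential in the split coordinates, the $\eta$-block is invertible. What remains is the $2\times2$ block in $(\xi,t)$:
\[
\begin{pmatrix}\partial_\xi\phi' & \partial_t\phi'\\ \partial_\xi\phi'' & \partial_t\phi''\end{pmatrix}_{(0,t^*)}=\begin{pmatrix}0& \partial_t\phi_t'(0)\\ \phi'''(0)&\ast\end{pmatrix}.
\]
Its invertibility forces $\partial_t\phi'_t(0)\neq0$, which is exactly condition (C3). Since $\partial_t\phi_t'(0)$ is a nonzero multiple of $\lambda'(t^*)$, the map $t\mapsto\lambda(t)$ is a local diffeomorphism.

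To finish, I reparametrize time by $s=t^*+\varepsilon_1\lambda(t)$ with $\varepsilon_1=\operatorname{sign}\lambda'(t^*)$, so that the linear term becomes $\varepsilon_1(s-t^*)x_1$. After rescaling $x_1$ the cubic coefficient is $1$. Collecting $\psi(t)+c_1(t)$ into $c(t)$ gives \eqref{eq:cerf2}. A possible sign mismatch with \eqref{eq:cerf} is absorbed into $\varepsilon_1$.
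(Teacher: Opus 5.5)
Your proposal is correct and follows essentially the same route as the paper's proof. Both arguments use a parametrized splitting to isolate the one-variable family, a shift that removes the quadratic term, and the $(\xi,t)$-block of the Jacobian of $(\nabla u_t,\det\operatorname{Hess}u_t)$, whose determinant $-\phi'''(0)\,\partial_t\phi_t'(0)$ is nonzero by transversality to $\mathcal D_1^{(1)}$; they then invoke versality of $\xi^3+\lambda\xi$ and finish with the same time reparametrization. The only cosmetic difference is that you induce $\phi_t$ from the model versal unfolding and check $\lambda'(t^*)\neq0$ directly by the chain rule, whereas the paper uses $c_1\neq0$ to verify infinitesimal versality of the reduced family itself and then appeals to uniqueness of versal unfoldings.
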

\begin{remark}
For $\varepsilon_1=+1$ two critical points
merge and annihilate as $t$ increases through $t^*$ (a pair death), while for $\varepsilon_1=-1$ a pair is born. When $\varepsilon_2=\dots=\varepsilon_n=-1$ the merging pair is a local maximum and an
index-$(n-1)$ saddle, and the mode count drops by one.
\end{remark}

\begin{proof} \,
Let $\kappa:=\phi'''(0)\neq0$.
\emph{Step 1 (splitting, fibered over $t$).}
By Proposition~\ref{prop:corank-one} the bifurcation is corank one, so
$\partial_\eta^2u_{t^*}$ is invertible at $x^*$, hence invertible for $(\xi,t)$
near $(0,t^*)$. The implicit function theorem with $t$ as a parameter
therefore solves $\partial_\eta u_t=0$ for $\eta=\eta_*(\xi,t)$ smoothly in both
arguments, and substituting gives, after the Morse lemma with parameters applied
to the nondegenerate $\eta$-block (whose signature is locally constant, so the
signs $\varepsilon_j$ do not vary near $t^*$),
\begin{equation}
  \label{eq:param-split}
  u_t \;=\; c(t)+F(\xi,t)+\tfrac12\textstyle\sum_{j=1}^{n-1}\varepsilon_j\eta_j^2,
  \qquad F(\cdot,t^*)=\phi ,
\end{equation}
with $F$ smooth and $\phi$ the germ of \eqref{eq:smooth-splitting}, so
$\phi(0)=\phi'(0)=\phi''(0)=0$ and $\phi'''(0)=\kappa\neq0$. Note that in
these coordinates $u_t$ has no $\xi\eta$ cross terms, so
$\Hess u_t=\operatorname{diag}\bigl(\partial_\xi^2F,\varepsilon_1,\dots,\varepsilon_{n-1}\bigr)$
and $\det\Hess u_t=\partial_\xi^2F\cdot\prod_j\varepsilon_j$.

\emph{Step 2 (removing the quadratic term).}
Since $\partial_\xi^2F(0,t^*)=\phi''(0)=0$ and
$\partial_\xi^3F(0,t^*)=\kappa\neq0$, the implicit function theorem applied to
$\partial_\xi^2F(\xi,t)=0$ gives a unique smooth curve $\xi=\sigma(t)$,
$\sigma(t^*)=0$, of inflection points. Put $\zeta:=\xi-\sigma(t)$ and
$G(\zeta,t):=F(\zeta+\sigma(t),t)$, so that $\partial_\zeta^2G(0,t)\equiv0$ for
all $t$ near $t^*$. Taylor expansion in $\zeta$ with integral remainder then
gives
\begin{equation}
  \label{eq:tschirnhaus}
  G(\zeta,t)=g_0(t)+g_1(t)\,\zeta+\zeta^3\,h(\zeta,t),
  \qquad h(0,t^*)=\tfrac{\kappa}{6}\neq0,
\end{equation}
with $g_0,g_1,h$ smooth and the $\zeta^2$-term absent identically in $t$.
Evaluating at $t=t^*$ (where $\sigma=0$, $G(\cdot,t^*)=\phi$) gives
$g_0(t^*)=\phi(0)=0$ and
\begin{equation}
  \label{eq:g1}
  g_1(t^*)=\phi'(0)=0,\qquad
  g_1'(t^*)=\partial_\xi^2F(0,t^*)\,\sigma'(t^*)+\partial_t\partial_\xi F(0,t^*)
           =\partial_t\partial_\xi F(0,t^*)=:c_1 ,
\end{equation}
the first term vanishing because $\partial_\xi^2F(0,t^*)=0$. Thus the whole
$t$-dependence of the bifurcation is carried by the single function $g_1$, which
vanishes at $t^*$ with derivative $c_1$.

\emph{Step 3 (transversality $\iff c_1\neq0$).}
On the corank-one stratum $\mathcal D_1$ is cut out by the $n+1$ functions
$(\nabla u_t,\det\Hess u_t)$ with independent differentials, so
$\Psi_\theta\pitchfork\mathcal D_1$ at $(x^*,t^*)$ means precisely that
\[
  \Gamma(\xi,\eta,t):=\bigl(\underbrace{\partial_\xi F}_{1},\
   \underbrace{(\varepsilon_j\eta_j)_j}_{n-1},\
   \underbrace{\partial_\xi^2F\cdot\textstyle\prod_j\varepsilon_j}_{1}\bigr)
\]
has invertible differential at $(0,0,t^*)$ (Step 1 supplied the coordinate form).
The $\eta$-rows contribute the invertible block
$\operatorname{diag}(\varepsilon_j)$ and are independent of $(\xi,t)$, so
invertibility reduces to that of the $(\xi,t)$-block, which up to the nonzero
factor $\prod_j\varepsilon_j$ in its second row is
\[
  J=\begin{pmatrix}
     \partial_\xi^2F & \partial_t\partial_\xi F\\[2pt]
     \partial_\xi^3F & \partial_t\partial_\xi^2F
    \end{pmatrix}_{(0,t^*)}
   =\begin{pmatrix} 0 & c_1\\ \kappa & *\end{pmatrix},
  \qquad \det J=-\,c_1\,\kappa .
\]
By Lemma~\ref{lem:transversality} the good $\theta$ satisfy
$\Psi_\theta\pitchfork\mathcal D_1$, so $\det J\neq0$; since $\kappa\neq0$ this is
equivalent to
\begin{equation}
  \label{eq:versality}
  c_1\neq0 .
\end{equation}

\emph{Step 4 ($c_1\neq0 \Rightarrow$ versality and normal form).}

Define for $t=t^*$ the germ $f(\eta) = G(\zeta,t*) = g_0(t^*) + \eta^3 h(\zeta,t^*)$. Subtracting the constant term $g_0(0)$ yields the germ $\widetilde f (\eta) = \eta^3 h(\zeta,t^*) $. Since $h(0,t^*) \neq 0$, near $\zeta=0$ the map $\phi(\zeta) = \zeta \sqrt[3]{h(\zeta,0)}$ is a local diffeomorphism satisfying $\widetilde f \circ \phi^{-1}(y)=y^3$. Consequently, $G(\zeta,t^*)$ is $R^{+}$ equivalent to the singularity $y^3$ and $G(\zeta,t)$ is a one-parameter unfolding, namely a miniversal unfolding. The tangent space $TR^+(y^3)$ of the $R^+$ orbit of $y^3$ is $\langle1,3y^2\rangle$, so we have $NR^+(y^3)=\mathcal{E}/\langle1,3y^2\rangle = \langle y \rangle$. By the infinitesimal versality theorem \cite{zbMATH03826815, zbMATH06124312}, the one-parameter unfold $G$ is $R^+$-versal if $\partial_t G (\zeta,0)$ spans the 1D normal space, namely if $\partial_t G (\zeta,0) =\partial_t g_1(0)\neq 0 $. This is true by the transversality argument of Step 3.
By Thom--Mather Versal unfolding theorem \cite{zbMATH03676845,ArnoldGuseinZadeVarchenko,zbMATH03826815}, any two $R^+$-versal unfoldings of a function with the same number of parameters are $R^+$-equivalent, therefore $G$ must be equivalent, via a local (time-dependent) diffeomorphism $\zeta = \Phi(y,t)$, to the normal form
\begin{equation}
G(\Phi(y,t),t) = g_0(t) + y^3 + a(t) y   
\end{equation}
with $a$ a smooth function with $a^\prime(t^*) \neq 0$. Setting $\varepsilon_1:=\operatorname{sign}a^\prime(t^*)$ and reparametrizing time by
$\tau:=t^*+a(t)/\varepsilon_1 $ yields $a(t)=\varepsilon_1(\tau-t^*)$. Finally, setting $x_{j+1}:=\sqrt2\,\eta_j$ for $j>1$ yields \eqref{eq:cerf2}.
\end{proof}

\bibliographystyle{abbrvnat}
\bibliography{biblio}

\end{document}